\newcommand{\cG}{{\mathcal{G}}}
\newcommand{\cV}{{\mathcal{V}}}
\newcommand{\cD}{{\mathcal{D}}}
\newcommand{\cP}{{\mathcal{P}}}
\newcommand{\cL}{{\mathcal{L}}}
\newcommand{\cF}{{\mathcal{F}}}
\newcommand{\cN}{{\mathcal{N}}}
\newcommand{\cR}{{\mathcal{R}}}
\newcommand{\cT}{{\mathcal{T}}}
\newcommand{\cH}{{\mathcal{H}}}
\newcommand{\cO}{{\mathcal{O}}}
\newcommand{\bx}{\textbf{x}}
\newcommand{\by}{\textbf{y}}
\newcommand{\bc}{\textbf{c}}
\newcommand{\br}{\textbf{r}}
\newcommand{\bp}{\textbf{p}}
\newcommand{\ba}{\textbf{a}}
\newcommand{\bv}{\textbf{v}}
\newcommand{\bz}{\textbf{z}}
\newcommand{\bl}{\textbf{l}}
\newcommand{\bbR}{\mathbb{R}}
\newcommand{\bbE}{\mathbb{E}}
\DeclarePairedDelimiter\floor{\lfloor}{\rfloor}
\newcommand\myeq{\stackrel{\mathclap{\tiny\mbox{def}}}{=}}
\newtheorem*{theorem*}{Theorem}
\newtheorem*{lemma*}{Lemma}
\newif\ifnotes\notestrue
\def\htien#1{}
\def\caseswithdelim#1#2{\left#1\,\vcenter{\normalbaselines\m@th
  \ialign{\strut$##\hfil$&\quad##\hfil\crcr#2\crcr}}\right.}% you might like it without the \strut
\theoremstyle{plain}
\newtheorem{theorem}{Theorem}
\newtheorem{lemma}{Lemma}
\theoremstyle{definition}
\theoremstyle{remark}
\title{Scalable Distributional Robustness in a Class of Non Convex Optimization with Guarantees}
\author{Avinandan Bose \\Singapore Management University\\avinandanb@smu.edu.sg \And Arunesh Sinha \\Singapore Management University \\aruneshsinha@gmail.com \And Tien Mai \\Singapore Management University \\atmai@smu.edu.sg
}
\begin{document}

\maketitle

\begin{abstract}
    Distributionally robust optimization (DRO) has shown lot of promise in providing robustness in learning as well as sample based optimization problems. We endeavor to provide DRO solutions for a class of sum of fractionals, non-convex optimization which is used for decision making in prominent areas such as facility location and security games. In contrast to previous work, we find it more tractable to optimize the equivalent variance regularized form of DRO rather than the minimax form. We transform the variance regularized form to a mixed-integer second order cone program (MISOCP), which, while \emph{guaranteeing near global optimality}, does not scale enough to solve problems with real world data-sets. We further propose two abstraction approaches based on clustering and stratified sampling to increase scalability, which we then use for real world data-sets. Importantly, we provide near global optimality guarantees for our approach and show experimentally that our solution quality is better than the locally optimal ones achieved by state-of-the-art gradient-based methods. We experimentally compare our different approaches and baselines, and reveal nuanced properties of a DRO solution.
\end{abstract}

\section{Introduction}
%\bose{Let us present this paper addressing general fractional problems, basically individual utility $u_i = \frac{n(\bz, \bx_i)}{d(\bz, \bx_i)}$. (I think the only condition is $n()$ and $d()$ should be seperable in components of $\bz$. Depending on actual definition of numerator or denominator we may or maynot resort to piecewise linear approximation. Later we talk of SSG, FLP and FCP as three very different domain applications. This could demonstrate the generalizability of our idea.}
Distributionally robust optimization (DRO) is a popular approach employed in robust machine learning. Mostly, if not always, these works have focussed on the task of classification or regression. However, often in practical applications the end goal of learning is a decision output $\bz$, which requires yet another complex optimization that uses the output $\widehat{\bx}_1, \ldots, \widehat{\bx}_N$ of a regressor $f(\cdot)$. For example, in facility location problem the learning of facility values is followed by an optimization using the values predicted to decide where to locate facilities and in security games adversary behavior model is learned and then an optimal defense allocation computed based on the learned model. Often the learning output is provided as public datasets with no access to the underlying private dataset used for such learning. In this set-up, we aim to provide robustness at the decision making level with access to only the non-robust learning output $\widehat{\bx}_1, \ldots, \widehat{\bx}_N$.

However, often the objective $F(\bz, \bx)$ of decision optimization is \emph{non-convex} in the learning system output $\bx$ unlike the convex objective of classification or regression, presenting significant scalability challenges. In general, for decision making and specifically for the problem domains we consider, \emph{global optimality is important} as sub-optimal decisions can lead to large revenue loss or loss of life; thus, the local optimality provided by gradient based methods is not sufficient.
%; indeed, even for the non-robust setting there is a large body of work on guaranteed global optimality of the non-convex optimization in facility location and security games.
 %These methods for the non-robust setting typically result in a mixed integer linear problem (MILP) which provide more practical scalability than general purpose non-convex formulation. The DRO setting brings in additional computational concerns, which presents a significant scalability challenge. 
As a consequence, in this paper, we study the scenario of calculating DRO decisions using the given multi-dimensional real valued outputs $\widehat{\bx}_1, \ldots, \widehat{\bx}_N$ of a non-robust learned $f$. A first result (Theorem~\ref{thm:thm1}) characterizes the quality of DRO decision output compared to the scenario where we know the true $f^*$. 
%$\widehat{x}_i = h(\widehat{v}_i)$ for some data samples $\widehat{v}$'s. First, we theoretically show that, even with a non-robust learned $h$, a second stage of distributionally robust decision making still provides meaningful overall robustness properties. A major focus of this paper is how to perform DRO for decision making in practice with a non-convex objective. In this paper
Our main focus is on addressing the scalability issue for the DRO decision making problem for a particular, but widely used, class of \emph{sum of fractionals non-convex objective}.
This objective arises from the well-known \emph{discrete choice models}~\cite{Trai03} of human behavior, which is known to \emph{not} have scalable globally optimal solutions~\citep{schaible1995fractional,li2019product}; we use this for tackling two different decision optimization problem in facility location and a robust version of Bayesian Stackelberg security games problem with quantal response.  %Further, typically there is limited human activity data per human available to learn individual specific parameters, thus, as an approximation, most work typically learn single set of parameter values instead of a distribution of parameter values. 
%The DRO approach is a natural fit for these problems as the amount of data is typically not very high. Our approach applies to general sum of fractionals programming optimization, which we use for another facility location problem. 
As far as we know, this is a first attempt to solve the aforementioned non-convex problems in a DRO setting to near global optimality.

Our \emph{first contribution} is a modelling construct, where we reformualate the variance regularized form~\cite{duchi2019variance} of our non-convex sum of fractionals objective as a mixed integer second order cone program (MISOCP). While the MISOCP form provides more scalability than the original formulation and guaranteed solution quality (Theorem~\ref{piecewiselinearerror}), it still does not scale to real world sized datasets.
Our \emph{second contribution} is a pair of approaches that achieves further scalability by splitting the problem space into sub-regions and solving a smaller MISOCP over representative samples from the sub-regions. 
%One approach uses the centroid of each sub-region as the representative samples while the other uses a random small subset of points in each sub-region as representative samples. 
Under mild conditions, both approaches provide \emph{global optimality guarantees} (Theorem~\ref{the:cluster},~\ref{the:strata}). 

Our \emph{final contribution} is detailed experiments validating the scalability of our approaches on a simulated security game problem as well as two variants of location facility using park and ride data-sets from New York~\citep{holguin2012new}. We compare with two gradient-based approaches~\citep{lin2020gradient} and show the superior solution quality achieved by our approach, which also reveals the need for global optimality. We further show a nuanced property of the DRO solution in providing better decisions for low probability scenarios over non-robust versions. Overall, our work 
%increases the applicability of DRO techniques to more complex real world problems 
provides desired \emph{robustness with globally optimal solution guarantees}. % for these real world applications. 

\textbf{Related work:} 
Our work is built on a recent line of research that connects the concepts of DRO and variance regularization \citep{duchi2019variance,duchi2021statistics,lam2016robust,Maurer2009EmpiricalBB,staib2019distributionally}. While most the previous studies along this research line focus on convex and continuous problems or problems with submodular objectives,
%\remove{except the work of \cite{staib2019distributionally} where the objective function is submodular,} 
our work concerns a class of DRO problems with fractional structures, which are highly non-convex and requires new technical developments for globally optimal solution. Recent work~\cite{yan2020stochastic,qi2021online} has addressed non-convex objectives in DRO using gradient based methods that converge to stationary points, which is insufficient for decision making as we experimentally show that stationary points and globally optimal points can yield very different decision utilities. 

The literature on DRO is vast  and we refer the reader to \cite{rahimian2019distributionally} for a review. DRO methods can be classified by different ways to define ambiguity sets of distributions, for instance, ambiguity sets based on $\phi$-divergences \citep{ben2013robust,duchi2019variance,staib2019distributionally} or  Wasserstein distances \citep{pflug2007ambiguity,esfahani2018data,shafieezadeh2015distributionally,blanchet2019quantifying}. 
%It is well-known that DRO models based on  Wasserstein distance are computationally challenging, especially with non-convex objectives. 
In this work, we focus on  $\phi$-divergence based models, motivated by their interesting connections with variance regularization and the tractability of the resulting non-convex DRO models.  

We show that our DRO methods can be used in some popular decision-making problems such as  Stackelberg security game (SSG) with Quantal Response ~\citep{tambe2011security,xu2016mysteries,fang2017paws,sinha2018stackelberg,yang2012computing,1501625} or competitive facility location  under random utilities ~\citep{Benati2002maximum,Freire2016branch,Mai2020multicut,DAM2021}. To the best of our knowledge, a DRO Bayesian model has not been studied in existing SSG works. In the context of competitive facility location under random utilities, we seem to be the first to bring DRO as a consideration. We handle a DRO version of a facility cost optimization problem, which has also never been studied in prior work.

\section{Background, Preliminary Notation and Result}
We use bold fonts for vectors and non-bold font for vector components and scalars, e.g., $x_j$ is a component of $\bx$. $[N]$ denotes $\{1, \ldots, N\}$. A $d$-dimensional vector is written as $\bx = (x_j)_{j \in [d]}$ or as $(x_1, \ldots, x_d)$. The positive part of a vector $\bx$ is $\bx^+ = (\max (0, x_j))_{j \in [d]} $, and the negative part is $\bx^- = (\min (0, x_j))_{j \in [d]} $. $\textbf{0}, \textbf{1}$ represent the all zero and all one vector.

\textbf{Distributionally Robust Optimization}:
Consider 
a function $F$ with inputs being a decision variable $\bz$ and parameter $\bx \in X$. Both $\bz$ and $\bx$ lie in an Euclidean space and both \emph{are constrained by linear constraints}; for notational ease we skip writing the constraints in the general formulation.
We seek to maximize the following objective function 
$
\max_{\bz} \bbE_{P}[ F(\bz,\bx )]
$.
For many classes of distributions the above is generally not tractable and one needs to sample $\bx$ from $P$. Let $\widehat{\bx}_1 ,\ldots, \widehat{\bx}_N$ be $N$ samples, we can solve the sample average approximation (SAA) problem instead
%\[$
$
\max_{\bz}  \sum_{n\in [N]}F(\bz,\widehat{\bx}_n)$.
%\]
Let $\widehat{P}_N$ be the empirical distribution induced by the samples. The SAA above is same as $\max_{\bz}  \bbE_{\widehat{P}_N}[F(\bz,\bx)]$. A
 distributionally robust version of the SAA problem is
$
%\label{eq:eq2}
    \max_{\bz} \min_{\widetilde{P} \in \cP_{\xi,n}} \left\{ \bbE_{\widetilde{P}} [F(\bz,\bx)]\right\}
$, 
where the ambiguity set
$
\cP_{\xi,n} = \left\{\widetilde{P}|\; \cD_{\phi}(\widetilde{P} || \widehat{P}_N) \leq \xi/N \right\}
$,
and $\cD_{\phi}(P|| Q)$ is the $\chi^2$ divergence: $
\cD_{\phi}(P|| Q) = \frac{1}{2}\int (dP/dQ-1)^2 dQ
$. The above optimization can be written equivalently as ($\Delta_{\xi,n}$ defined below)
\begin{equation}
\label{prob:DRO}\tag{\sf\small DRO}
    \max_{\bz} \min_{\bp \in \Delta_{\xi,n}}\Big\{ \sum_{i\in [N]} p_i F(\bz,\widehat{\bx}_i)\Big\}
\end{equation}
where  
$
\Delta_{\xi,n} = \Big\{\bp \in  \bbR^N_+\Big|\; \sum_{i}p_i = 1;\; ||\bp - \textbf{1}/N||^2_2 \leq 2\frac{\xi}{N^2}\Big\}
$. We have earlier stated that $\widehat{\bx}_i$ is output by a regressor, say $f \in \cF$ for some function class $\cF$ trained using loss $\cL$ with $N_T$ datapoints, but this implies that $\widehat{\bx}_i = f(b_i)$ might not exactly same as $\bx^*_i = f^*(b_i)$ for some underlying feature values $b_i$ and best function $f^* \in \cF$. We assume $f^*$ is deterministic and has zero Bayes risk. Let $P^*$ be the true distribution induced by $f^*$ from which the (unknown) samples $\bx^*$'s are obtained; hence, the true utility of a decision $\bz$ is $\bbE_{P^*}[F(\bz, \bx)]$. We prove an end to end guarantee about the output decision $\widehat{\bz}^{**}$ using $\widehat{\bx}_i$'s, which reveals that $\widehat{\bz}^{**}$ is not much worse than the decision $\bz^{**}$ learned if $\bx^*$ would be available and used, and larger training data $N_T$ helps.

\begin{theorem} \label{thm:thm1}
If the optimal decision when solving \ref{prob:DRO} is $\bz^{**}$ using $\bx^*_i$'s and $\widehat{\bz}^{**}$ using $\widehat{\bx}_i$'s, $F$ is $\tau$-Lipschitz in $\bx$, $X$ is bounded, and a scaled $\cL$ upper bounds $||\cdot||_2$ (i.e., $||\bx-\bx'||_2 \leq \max(k\cL(\bx,\bx'), \epsilon)$ for constants $k, \epsilon$) then, 
the following holds with probability $1-2\delta - 2 \delta_1$: $\bbE_{P^*}[ F(\widehat{\bz}^{**},\bx )] \geq \bbE_{P^*}[ F(\bz^{**},\bx )] - C/\sqrt{N} - (1 + 2\sqrt{\xi})\tau \epsilon - \epsilon_N - \epsilon_{N_T}$,
where $\epsilon_{K} = C_1 \cR_{K}(\cL \circ \cH) + C_2/\sqrt{K}$ and $\cR_{K}$ is the Rademacher complexity with $K$ samples and $C, C_1, C_2$ are constants dependent on $\delta, \delta_1, \xi, k, \tau$.
\end{theorem}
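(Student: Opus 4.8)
The plan is to prove the bound by a telescoping argument that isolates three error sources: (i) the statistical gap between the \ref{prob:DRO} value evaluated on samples and the true utility $\bbE_{P^*}[F(\bz,\bx)]$, (ii) the approximation error from feeding $\widehat{\bx}_i = f(b_i)$ into the robust objective instead of the true $\bx^*_i = f^*(b_i)$, and (iii) the optimality of $\widehat{\bz}^{**}$ for the sample problem. Write $\widehat{V}(\bz) = \min_{\bp\in\Delta_{\xi,n}}\sum_i p_i F(\bz,\widehat{\bx}_i)$ and $V^*(\bz) = \min_{\bp\in\Delta_{\xi,n}}\sum_i p_i F(\bz,\bx^*_i)$ for the robust values on the approximate and true samples, and $R^*(\bz) = \bbE_{P^*}[F(\bz,\bx)]$ for the true utility, so that $\widehat{\bz}^{**}$ maximizes $\widehat{V}$ and $\bz^{**}$ maximizes $V^*$. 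Using only $\widehat{V}(\widehat{\bz}^{**}) \ge \widehat{V}(\bz^{**})$, the excess error decomposes as
\[
R^*(\bz^{**}) - R^*(\widehat{\bz}^{**}) \;\le\; 2\sup_{\bz}\big|R^*(\bz) - \widehat{V}(\bz)\big|,
\]
and I would further split $|R^*(\bz)-\widehat{V}(\bz)| \le |R^*(\bz)-V^*(\bz)| + |V^*(\bz)-\widehat{V}(\bz)|$, bounding the two suprema separately.

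For the first supremum I would invoke the variance-regularization concentration of \citep{duchi2019variance}: since $X$ is bounded and $F$ is $\tau$-Lipschitz in $\bx$, the family $\{F(\bz,\cdot)\}$ is uniformly bounded, the $\chi^2$ robust value $V^*(\bz)$ equals (to leading order) the empirical mean minus a variance penalty of order $\sqrt{\xi\,\mathrm{Var}/N}$, and the two-sided uniform statement gives $\sup_{\bz}|R^*(\bz)-V^*(\bz)| \le C/\sqrt{N}$ with probability at least $1-2\delta$ over the decision set; the two directions are exactly the two $\delta$-events in the confidence $1-2\delta-2\delta_1$. For the second supremum I would propagate the regression error through the robust value: since $\big|\min_{\bp}\sum_i p_i F(\bz,\bx^*_i) - \min_{\bp}\sum_i p_i F(\bz,\widehat{\bx}_i)\big| \le \max_{\bp}\sum_i p_i|F(\bz,\bx^*_i)-F(\bz,\widehat{\bx}_i)|$ and $F$ is $\tau$-Lipschitz, we get $|V^*(\bz)-\widehat{V}(\bz)| \le \tau\max_{\bp\in\Delta_{\xi,n}}\sum_i p_i\|\widehat{\bx}_i-\bx^*_i\|$. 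Bounding the reweighting via $\sum_i p_i a_i \le \tfrac1N\sum_i a_i + \|\bp-\textbf{1}/N\|_2\|\ba\|_2$ with $\|\bp-\textbf{1}/N\|_2 \le \sqrt{2\xi}/N$ and $\|\ba\|_2\le\|\ba\|_1$, together with $\sqrt{2\xi}\le 2\sqrt{\xi}$, yields the factor $(1+2\sqrt{\xi})$ multiplying $\tau$ times the average prediction error $\tfrac1N\sum_i\|\widehat{\bx}_i-\bx^*_i\|$, uniformly in $\bz$.

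It remains to control this average prediction error, which is where $N_T$ enters. Applying $\|\bx-\bx'\|_2 \le \max(k\cL(\bx,\bx'),\epsilon)$ pointwise gives $\tfrac1N\sum_i\|\widehat{\bx}_i-\bx^*_i\| \le k\cdot\tfrac1N\sum_i\cL(f(b_i),f^*(b_i)) + \epsilon$, producing the stand-alone $(1+2\sqrt{\xi})\tau\epsilon$ term; for the residual empirical loss I would use a Rademacher generalization bound twice: once over the $N$ evaluation points to pass to the population loss $\bbE_b[\cL(f(b),f^*(b))]$ (contributing $\epsilon_N = C_1\cR_N(\cL\circ\cH)+C_2/\sqrt{N}$), and once over the $N_T$ training points to bound that population loss by the training loss plus $\epsilon_{N_T}$, which vanishes because $f^*$ is deterministic with zero Bayes risk and $f$ minimizes the empirical training loss. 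These two generalization events supply the remaining $2\delta_1$, and collecting all terms while absorbing universal numerical factors and the dependence on $k,\tau,\xi$ into $C, C_1, C_2$ gives the claimed inequality. I expect the main obstacle to be the first supremum: making the two-sided DRO-to-truth concentration rigorous requires the variance-expansion duality for the $\chi^2$ ball at the correct $\xi$-scaling together with a uniform empirical-process bound over all $\bz$, and it is essential that this step not lean on convexity of $F$ (the objective is highly non-convex here) but rest only on boundedness, Lipschitzness, and a complexity bound for the induced class $\{F(\bz,\cdot)\}$.
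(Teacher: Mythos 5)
Your proposal is correct in substance and delivers the theorem, but your top-level argument is genuinely different from the paper's in the statistical step. The pieces that coincide almost verbatim: your uniform bound $\sup_{\bz}\vert V^*(\bz)-\widehat{V}(\bz)\vert\leq(1+2\sqrt{\xi})\,\tau\,\bigl(\epsilon+k\cdot\tfrac{1}{N}\sum_i\cL(\widehat{\bx}_i,\bx^*_i)\bigr)$ is the paper's Equation~\eqref{eq:proofimp} (the paper uses H\"older with $\max_i\vert p_i-1/N\vert\leq\sqrt{\xi}/N$ where you use Cauchy--Schwarz with $\Vert\bp-\mathbf{1}/N\Vert_2\leq\sqrt{2\xi}/N$; same conclusion up to absorbed constants), and your two Rademacher applications --- over the $N$ evaluation points and then the $N_T$ training points, closed off by the zero Bayes risk of $f^*$ --- are exactly how the paper produces $\epsilon_N+\epsilon_{N_T}$. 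The divergence: the paper never forms value functions or a supremum over $\bz$. It runs an explicit four-inequality chain through the optimal weights ($p^{**}$ minimizing for the starred data, $\widehat{p}^{**}$ for the hatted data, $\widehat{\bz}^{**}$ the maximizer), re-centers $p^{**}$ around $\mathbf{1}/N$, and then applies one-sided Hoeffding-type concentration to $\tfrac{1}{N}\sum_i F(\bz^{**},\bx^*_i)$ and $\tfrac{1}{N}\sum_i F(\widehat{\bz}^{**},\bx^*_i)$ at those two specific decisions; these two events are its $2\delta_1$ (you allocate $\delta$ and $\delta_1$ the other way around, which is immaterial). Your route --- $R^*(\bz^{**})-R^*(\widehat{\bz}^{**})\leq 2\sup_{\bz}\vert R^*(\bz)-\widehat{V}(\bz)\vert$ plus a uniform-in-$\bz$ law for $\vert R^*-V^*\vert$ --- buys cleanliness and arguably more rigor: you need not track which $\bp$ minimizes at which $\bz$ (the paper's chain is delicate exactly there, and its bound on the cross-decision remainder $\sum_i(p^{**}_i-1/N)(F(\widehat{\bz}^{**},\bx^*_i)-F(\bz^{**},\bx^*_i))$ by citing \eqref{eq:proofinter}, which concerns cross-data differences at a fixed $\bz$, is a mismatch; your supremum absorbs such terms into $C/\sqrt{N}$ via Cauchy--Schwarz using only boundedness), and the paper's pointwise concentration is applied at decisions that depend on the very samples being averaged, so a fully rigorous version of its step already needs the uniformity you invoke. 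The price of your route is that $C$ must then absorb a complexity measure of the induced class $\{F(\bz,\cdot):\bz\}$ (e.g.\ covering numbers of the feasible set together with Lipschitzness of $F$ in $\bz$), a dependence not among those the theorem lists, whereas the paper keeps $C$ depending only on $\delta_1$, $\tau$ and the diameter of $X$ at the cost of the fixed-decision idealization. One simplification available to you: the Duchi--Namkoong variance expansion is overkill for $\sup_{\bz}\vert R^*-V^*\vert$; Cauchy--Schwarz alone gives $\vert V^*(\bz)-\tfrac{1}{N}\sum_i F(\bz,\bx^*_i)\vert\leq\sqrt{2\xi/N}\,B$ whenever $\vert F\vert\leq B$, after which only the empirical mean needs a (uniform) concentration bound.
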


%%%%%%%%%%%%%%%%%%%%%%%%%%%%%%%
%%%%%%%%%%%%%%%%%%%%%%%%%%%%%%%

\textbf{Variance Regularizer}:
As a large number of samples are needed for a low variance approximation of the true distribution, another proposed  robust version of the SAA~\citep{Maurer2009EmpiricalBB,duchi2019variance} is to optimize the following variance-regularized (VR) objective function
\begin{equation}
\label{prob:VR}\tag{\sf VR}
\max_{\bz}  \left\{\bbE_{\widehat{P}_N}[F(\bz,\bx)] - C \sqrt{\frac{\text{Var}_{\widehat{P}_N}(F(\bz,\bx))}{N}}\right\}.   
\end{equation}
The above allows to directly
optimize the trade-off between bias and variance. In a fundamental result, \citet[Theorem 1]{duchi2019variance} show that, with high probability, problem~\eqref{prob:VR} is \emph{equivalent} to the problem~\eqref{prob:DRO}.
%\begin{proposition}[\citet{duchi2019variance}]
%For $$N\geq \max\left\{5,\frac{8\cF^2}{\sigma},\frac{44\cF^2}{\sigma^2},\frac{11\log(1/\delta)\cF^2 }{\sigma^2}\right\},$$
%and given $\delta$, \eqref{prob:DRO} is equivalent to \eqref{prob:VR} with probability $\geq 1-\delta$, where $\cF = \{\max_{\bz,\bx} F(\bz,\bx) - \min_{\bz,\bx} F(\bz,\bx)\}$ and  $\sigma = \text{Var}(F(\bz,\bx))$.
%\end{proposition}
Further, \citet{duchi2019variance} argue for solving the DRO version of the problem for concave $F$ (note we are solving a maximization SAA problem) since concave $F$ results in concavity of $\min_{\bp \in \Delta_{\xi,n}} \sum_{i\in [N]} p_i F(\bz,\widehat{\bx}_i)$, thus, the overall DRO problem is a concave maximization problem. In contrast, the objective in \eqref{prob:VR} is not concave.

In this paper, our focus is on $F$ that is \emph{not concave}, thus, the choice of DRO or variance regularized form is not obvious. For the class of functions $F$ that we analyze, we argue the variance regularized version is more promising as far as scalability for global optimality is concerned. We work with the assumption that the variance regularized form is equivalent to DRO, which holds under the mild condition shown in Equation~(9) in \citet{duchi2019variance}.

\section{Towards a Globally Optimal Solution}
%As stated, we solve the variance regularized problem where function $F$ is not concave.  
In this section, we present results for a general class of non-concave functions $F$ that has a fractional form with non-linear numerator and denominator and that can be approximated by a linear fractional form with binary variables.
%\footnote{The special case of the denominator one in $F$ allows handling any function that can be approximated by piecewise linear parts.} 
Then, we show three \emph{prominent} applications of our approach. \\
\textbf{Notation}: For ease of notation, we use shorthand to denote $F(\bz,\widehat{\bx}_i)$ by $F_i$ and $2 \frac{\xi}{N^2}$ by $\rho$.

\subsection{General Recipe to Form a MISOCP}
We perform a sequence of variable and constraint transformations of \eqref{prob:VR}, leading to a MISOCP.

\textbf{Mixed Integer Concave Program}:
The variance regularized objective in shorthand notation is: 
\begin{align}
   \cG(\bz) = \sum_i \frac{F_i}{N} - \sqrt{\rho\sum_i \Big (\frac{\sum_i F_i}{N} - F_i \Big )^2} \label{eq:shorthandobjective}
\end{align}

% \textbf{Making the objective Concave  : }
We substitute $l_i = \frac{\sum_i F_i}{N} - F_i$ and $q = \frac{\sum_i F_i}{N}$ for all $i  \in [N]$, such that 
% \bose{Instead of such that, we can reword it as new free variables $l_i$ and $q$ with constraints : } 
$\sum_i l_i = 0$ and $F_i = q - l_i$. The objective in Equation~\ref{eq:shorthandobjective} thus becomes $q - \sqrt{\rho \sum_i l_i^2}$ which is concave in the variables $q$ and $\{l_i\}$. We add the new constraints $\sum_i l_i = 0$ and $F_i = q - l_i$ for all $i \in [N]$. Note that, while the objective is now concave with above changes, we have pushed the non-convexity into the constraints $F_i - q + l_i = 0$ for all $i  \in [N]$ that are added to the optimization.

If $F_i$ can be written (or approximated) as a fraction with affine numerator and denominator, we can convert the constraint $F_i - q + l_i = 0$ into a convex constraint, giving us an overall concave program. The conversion is explained next. Suppose $F_i$ can be written (or approximated) as $\frac{\ba_i ^T \bv  + b_i}{\ba'^T_i \bv + b'_i}$ where $\bv$ represents \emph{binary} variables after conversion ($\bv$ completely replaces $\bz$ and  $\ba_i,\ba_i',b_i, b_i'$ are dependent on $\widehat{\bx}_i$'s). 
Typically, such a linear fractional form is constructed via a piece wise linear approximation of the original non-linear numerator and denominator of $F$. Assume $\bv$ is of dimension $d$; typically $d$ will depend on the number of pieces. Define $\by_i = \bv t_i$ where $t_i = \frac{1}{\ba_i'^T \bv + b'_i}$. Then, we can (re)write the fractional form for $F_i$ as $F_i = \ba_i ^T \by_i + b_i t_i$. This yields the linear constraints below with the non-linearity now restricted to $\by_i = \bv t_i$.
\noindent\begin{minipage}{.45\linewidth}
\begin{align}
    &\sum_{i=1}^N l_i = 0 & \label{eq:sumli}
\end{align}
\end{minipage}%
~~~~~\vline%
\begin{minipage}{.5\linewidth}
\begin{align}
    & \ba_i^T\by_i + b_it_i = q - l_i  & \forall i \in [N]\\
    & \ba_i^{'T} \by_i + b_i't_i - 1 = 0  & \forall i \in [N]
\end{align}
\end{minipage}

We handle $\by_i = \bv t_i$ using McCormick relaxation technique~\citep{mccormick1976computability}. Typically, McCormick relaxation is applied for bilinear terms that are the product of two continuous variables, in which case, it is an approximation. However, in our case since $\bv$ is a binary vector variable, the McCormick relaxation yields an exact reformulation of the bilinear term. For applying McCormick technique, we need an upper and lower bound of $\bv$ and $t_i$. Since $\bv$ a vector of binary variables, we have lower bound $\bv^L = \textbf{0}$ and upper bound $\bv^U = \textbf{1}$. 
Similarly, $t_i^L = \frac{1}{(\ba_i'^{+})^T \textbf{1} + b_i'}$ and $t_i^U = \frac{1}{(\ba_i'^{-})^T\textbf{1} + b_i'}$ (recall superscript $+$ and $-$ indicate positive and negative part of a vector respectively). Further, it is assumed $t_i^U$ and $t_i^L$ exist. 
Note that these bounds are not variables but fixed constants that depend on the fixed parameters $\ba_i,\ba_i',b_i,b_i'$, hence these need to be computed just once.
Using the upper and lower bounds of $\bv$ and $t_i$ in McCormick technique we get: 

\noindent\begin{minipage}{.5\linewidth}
\begin{align}
    &\by_i - \bv t_i^U \leq 0; & \forall i \in [N] \label{eq:firstMcC}\\
         &\by_i - (\textbf{1}t_i + \bv t_i^L - \textbf{1}t_i^L) \leq 0; & \forall i \in [N]\\
         &-\by_i + (\textbf{1}t_i + \bv t_i^U - \textbf{1}t_i^U) \leq 0; & \forall i \in [N]
\end{align}
\end{minipage}%
~~~~~\vline%
\begin{minipage}{.45\linewidth}
\begin{align}
         &-\by_i + \bv t_i^L \leq 0; & \forall i \in [N]\\
         & \bv \in \{0, 1\}^{d}\\
         &t_i^U \leq t_i \leq t_i^L; & \forall i \in [N] \label{eq:lastMcC}
\end{align}
\end{minipage}

It is straightforward to check the above set of equations is equivalent to $\by_i = \bv t_i$. With the changes, we obtain a mixed integer concave program (with all constraints linear). Next, while the above can be solved using branch and bound with general purpose convex solvers for intermediate problem, we show that a further transformation to a MISOCP is possible. Specialized SOCP's solvers provide much more scalability than a general purpose convex solvers~\citep{bonami2015recent} and hence partially address the scalability challenge.

%\bose{We can include the general MISOCP here instead of repeating it each time. The only issue is sometimes piecewise linear approximations are use which lead to additional constraints.}
\textbf{Mixed Integer SOCP}:
We transform further by introducing another variable $s$ to stand for $\sqrt{\rho \sum_i l_i^2}$. We use $\br = (s,q,(l_i)_{i \in [N]}, \bv, (t_i)_{i \in [N]}, \by_1, \ldots,  \by_N)$ to denote all the variables of the optimization. 
Thus, the objective becomes the linear function $q - s$ with an additional constraint that \begin{align}
    \sqrt{\rho \sum_i l_i^2} \leq s \label{eq:SOCP}
\end{align}
The above is same as $\vert \vert  A \br \vert \vert_2 \leq \bc^T \br  $ for the constant matrix $A$ (with entries $0$ or $\sqrt{\rho}$) and constant vector $\bc$ (with 1 in the $s$ component, rest 0's) that picks the $l_i$'s and $s$ respectively. 
This is a SOCP form of constraint, and the linear objective $q-s$ makes the problem after this transformation a MISOCP.
In the above reformulation, the only approximation is introduced in writing $F_i$ as a linear fractional term. 
%We prove a generic claim on overall approximation, given an additive $\epsilon_i$ approximation for each $F_i$. \emph{All missing proofs in this paper are in appendix}.
%\begin{theorem}\label{approximationerror}
%For the transformations above, if 
%$
%\Big\vert F(\bz, \widehat{\bx}_i) - \frac{\ba_i ^T %\bv  + b_i}{\ba'^T_i \bv + b'_i} \Big\vert \leq %{\epsilon_i}
%$
%for some $\epsilon_i$ independent of $\bz$, then $\vert L^* - \widehat{L}^* \vert \leq {\max_i \epsilon_i}$, where $L^*$ and $\widehat{L}^*$ are the optimal objective values with and without the approximation.
%\end{theorem}
One way of such approximation is via piecewise linear approximation (PWLA). Suppose the fraction function $F(\bz, \widehat{\bx}_i)$ has a separable (in $\bz$) numerator and denominator of the form $\frac{\sum_j n(z_j, \widehat{\bx}_i)}{\sum_j d(z_j, \widehat{\bx}_i)}$ where $j$ ranges over the components of $\bz$, and $n(z_j, \widehat{\bx}_i)$ and $d(z_j, \widehat{\bx}_i)$ are non-negative and Lipschitz continuous.
%with Lipschitz constants $C^n,C^d$ respectively.
In this case, a general PWLA is possible with the following guarantee:
\begin{theorem} \label{piecewiselinearerror}
For $F(\bz, \widehat{\bx}_i) = \frac{\sum_j n(z_j, \widehat{\bx}_i)}{\sum_j d(z_j, \widehat{\bx}_i)}$ as stated above and approximated as $\frac{\ba_i ^T \bv  + b_i}{\ba'^T_i \bv + b'_i}$, a PWLA approximation with $K$ pieces yields 
$\vert \cG(\bz^*) - \cG(\widehat{\bz}^{**}) \vert \leq O(1/K)$, where $\cG(\bz^*)$ and $\cG(\widehat{\bz}^{**})$ are the optimal objective values with approximation (MISOCP) and without the approximation respectively.
\end{theorem}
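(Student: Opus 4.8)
The plan is to reduce everything to a single uniform approximation estimate, namely $\sup_{\bz} |\cG(\bz) - \widetilde{\cG}(\bz)| = O(1/K)$, where $\widetilde{\cG}$ is the objective obtained after each $F_i$ is replaced by its linear-fractional surrogate $\widetilde{F}_i = \frac{\ba_i^T \bv + b_i}{\ba'^T_i \bv + b'_i}$, and then to invoke the standard fact that uniformly $\delta$-close objectives have $\delta$-close optima. Concretely, reading the theorem's $\cG(\bz^*)$ as the MISOCP optimum $\widetilde{\cG}(\bz^*) = \max_{\bz}\widetilde{\cG}(\bz)$ and $\cG(\widehat{\bz}^{**}) = \max_{\bz}\cG(\bz)$, I would chain $\widetilde{\cG}(\bz^*) \ge \widetilde{\cG}(\widehat{\bz}^{**}) \ge \cG(\widehat{\bz}^{**}) - \delta$ and, symmetrically, $\cG(\widehat{\bz}^{**}) \ge \cG(\bz^*) \ge \widetilde{\cG}(\bz^*) - \delta$, so that $|\cG(\widehat{\bz}^{**}) - \widetilde{\cG}(\bz^*)| \le \delta$. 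All the work is therefore in producing $\delta = O(1/K)$.

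To get the uniform bound I would propagate the error in two stages. First, at the level of the separable numerator and denominator: each scalar map $z_j \mapsto n(z_j, \widehat{\bx}_i)$ and $z_j \mapsto d(z_j, \widehat{\bx}_i)$ is Lipschitz on a bounded interval, and the $K$-piece linear interpolant of a Lipschitz function on a bounded interval has pointwise error at most the Lipschitz constant times the mesh width, i.e. $O(1/K)$; summing the per-coordinate errors over the components of $\bz$ — which is exactly where separability is needed to keep the number of pieces $O(K)$ rather than exponential in the dimension — gives $|\sum_j n - \sum_j \widetilde{n}| = O(1/K)$ and likewise for the denominator, uniformly in $i$ and $\bz$. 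Second, I would push these two errors through the ratio via $\frac{a}{b} - \frac{a'}{b'} = \frac{a(b'-b) + b(a - a')}{bb'}$, which, given that the numerators are bounded above and the denominators bounded away from zero, yields $|F_i - \widetilde{F}_i| = O(1/K)$ uniformly over $i$ and the feasible $\bz$.

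Finally I would lift the per-sample bound to the objective $\cG(\bz) = \frac{1}{N}\sum_i F_i - \sqrt{\rho \sum_i (\bar F - F_i)^2}$ with $\bar F = \frac{1}{N}\sum_i F_i$. The averaging term contributes error at most $\max_i |F_i - \widetilde{F}_i| = O(1/K)$. The penalty term is $\sqrt{\rho}$ times the $\ell_2$ norm of the centered vector $(F_i - \bar F)_i$; since centering is a projection it is $1$-Lipschitz, so this map is $\sqrt{\rho}$-Lipschitz in the $\ell_2$ norm of $(F_i)_i$ and its error is at most $\sqrt{\rho N}\max_i|F_i - \widetilde{F}_i|$, which is again $O(1/K)$ because $\rho = 2\xi/N^2$ makes the $N$-dependence harmless. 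Summing the two terms gives the required $\sup_{\bz}|\cG(\bz) - \widetilde{\cG}(\bz)| = O(1/K)$.

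I expect the ratio-propagation step to be the main obstacle, since it is the only place the problem structure genuinely enters: the bound $|F_i - \widetilde{F}_i| = O(1/K)$ requires that both the exact and approximate denominators stay uniformly bounded away from zero over the entire feasible region, for otherwise the ratio ceases to be Lipschitz and the constant in $O(1/K)$ blows up. This is precisely what the standing assumption that $t_i^L$ and $t_i^U$ exist provides, and the care needed is in tracking how the hidden constant depends on that lower bound on the denominator together with the numerator bound, $N$, and $\xi$; the Lipschitz-interpolation estimate and the closeness-of-optima argument are otherwise classical.
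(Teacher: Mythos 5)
Your proposal is correct, and its overall skeleton matches the paper's: the paper also splits the argument into (a) a per-sample lemma showing $\vert F(\bz,\widehat{\bx}_i) - \frac{\ba_i^T\bv + b_i}{\ba'^T_i\bv + b'_i}\vert \leq C/K$ uniformly in $\bz$, proved exactly as in your first two stages (Lipschitz interpolation error $\frac{U_j - L_j}{K}C^n$, $\frac{U_j-L_j}{K}C^d$ per coordinate, summed by separability, then pushed through the ratio with the denominator bounded away from zero, which is what the existence of $t_i^L, t_i^U$ encodes), and (b) a lemma lifting the per-sample bound to the optima. Where you genuinely diverge is in step (b). The paper works in the lifted variable space of the reformulation: starting from an optimal $(q^*,\bl^*,\bz^*)$ satisfying $q^* - l_i^* - F_i(\bz^*) = 0$, it argues that a single scalar shift $\delta \in [-\epsilon,\epsilon]$ of $q^*$ (keeping $\bl^*$ fixed) produces a feasible point of the approximate problem, and concludes $\vert L^* - \widehat{L}^*\vert \leq \epsilon$ with no degradation from the variance penalty. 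You instead prove a sup-norm bound $\sup_{\bz}\vert \cG(\bz) - \widetilde{\cG}(\bz)\vert \leq \delta$ on the objectives themselves, treating the mean term and the penalty term separately, with the penalty handled via the observation that centering is a projection (hence $1$-Lipschitz) so the map $(F_i)_i \mapsto \sqrt{\rho\sum_i(\bar F - F_i)^2}$ is $\sqrt{\rho}$-Lipschitz, contributing $\sqrt{\rho N}\,\epsilon = \sqrt{2\xi/N}\,\epsilon$. Your route buys rigor at this exact point: the paper's single-$\delta$ feasibility argument is loose as written, since one scalar shift of $q$ cannot restore all $N$ equality constraints simultaneously when the errors $\widehat{F}_i - F_i$ differ across $i$; repairing it requires perturbing the $l_i$'s as well, which induces precisely the variance-term error your projection argument quantifies. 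The price is a slightly larger constant, $(1 + \sqrt{2\xi/N})$ versus the paper's claimed $1$, but this is $K$-independent so the $O(1/K)$ rate is unaffected. One small point to tighten in a final write-up: your sup over $\bz$ implicitly identifies $\widetilde{\cG}$ with a function of the continuous variable via the discretization map; to close the chain of inequalities you should make both directions explicit (evaluate $\widetilde{\cG}$ at $\cT(\widehat{\bz}^{**})$ and $\cG$ at the continuous preimage $\widetilde{\cT}(\bv^{**})$ of the MISOCP optimizer), as the paper does with its forward and backward transformations $\cT$ and $\widetilde{\cT}$.
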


Next, we show instantiation of the just presented general recipe for three widely studied problems. 

\subsection{Applications}
\textbf{Notation}: In the SSG (facility location) application $m$ resources (facility) are allocated to $M$
targets (locations). $\bx$ maps to type $\theta^a_{\bx}$ of adversary, and type $\theta^d_{\bx}$ of defender in SSG, and type $\theta_{\bx}$ of clients of facility or directly $V_{\bx}$ utility for each client type in facility location. 

\textbf{Bayesian Stackelberg Security Game with Quantal Response}:
A SSG models a Stackelberg game where a defender moves first to allocate $m$ security resources for protecting $M$ targets. The randomized allocation is specified by decision variables $\bz$ of dimension $M$ with the constraints that $\sum_{i=1}^M z_i \leq m$ ($z_i \in [0,1]$); $z_i$ is interpreted as the protection probability of the target $i$. Past works have used the model of a quantal responding adversary~\citep{sinha2018stackelberg}].
We generalize this to a Bayesian game version where there is a continuum of attackers types with the type specified by a parameter $\bx$ and an unknown prior distribution over these types. The attacker's utility in attacking the target $j$ is a function of the protection probability of target $j$ and type: $h(z_j, \theta^a_{\bx})$. Similarly, the defender's utility when target $j$ is attacked is: $u(z_j, \theta^d_{\bx})$ for some player-specific parameters $\theta$ that depend on $\bx$. Following quantal response model (for attacker only), the attacker of type $\bx$ attacks a target $j$ with probability $\frac{\exp(h(z_j, \theta^a_\bx))}{\sum_{j \in [M]} \exp(h(x_j, \theta^a_{\bx}))}$ and the defender utility is $F(\bz, {\bx}) =  \frac{\sum_{j \in [M]} u(z_j, \theta^d_\bx)\exp(h(z_j, \theta^a_\bx))}{\sum_{j \in [M]} \exp(h(x_j, \theta^a_\bx))}$.
Note that in case the defender's utilities $u(z_j, \theta^d_\bx)$ take negative values and the assumptions of Theorem~\ref{piecewiselinearerror} will be violated. This issue can be simply fixed by choosing $\alpha$ such that $\alpha \geq \max_{\bz,\bx} \{-u(z_j, \theta^d_\bx)\}$  and replacing $F(\bz,\bx)$ by  $F(\bz, {\bx}) +\alpha =  \frac{\sum_{j \in [M]} (u(z_j, \theta^d_\bx)+\alpha) \exp(h(z_j, \theta^a_\bx))}{\sum_{j \in [M]} \exp(h(x_j, \theta^a_\bx))}$. 
This will make all the numerators and enumerators of the objective function non-negative, while keeping the same optimization problem.  We also note that quantal response is also known as multinomial logit model in the discrete choice model literature~\citep{Trai03}. Our generalization here to multiple types of adversary makes the problem akin to the mixed logit model in discrete choice models, which is generally considered intractable. As a consequence, our solution addresses a basic problem in discrete choice models also.

Following our set-up, we observe $N$ samples of the types of attackers $\widehat{\bx}_1, \ldots, \widehat{\bx}_N$ (which gives $\widehat{\theta}^a_1, \ldots, \widehat{\theta}^a_N, \widehat{\theta}^d_1, \ldots, \widehat{\theta}^d_N$) and we solve a robust version of the problem. Further,
following our general recipe for solving the robust problem, we piecewise approximate the numerator and denominator of $F$ using $K$ pieces, where the dimension of $\bv$ is $d = MK$. For this approximation, we require two additional linear constraints over the constraints in Equations~(\ref{eq:sumli}\mbox{-}\ref{eq:SOCP}). The optimization then is: 
\begin{align*}
    &\max_{\br}  \; q - s \label{prob:SSG}\tag{\sf\small SSG}\\
 & \mbox{subject to  Constraints}~(\ref{eq:sumli}\mbox{-}\ref{eq:SOCP}), \sum_{j \in [M]} \sum_{k \in [K]} v_{jk} - mK \leq 0,   v_{j,k} \geq v_{j,k+1}; \quad \forall k \in [K]
\end{align*}
The overall additive solution bound of $O(1/K)$ can be readily inferred from Theorem~\ref{piecewiselinearerror}.

% Note :  $m$ is the number of alternatives/targets while $M$ is the number of resources available. $N$ is the number of examples, $K$ is the number of pieces in the piecewise linear approximation.

%\textbf{Parameter Dimensions : }
%\begin{itemize}
%    \item $q$ is scalar
%    \item \textbf{l} is $N$ dim
%    \item $\bv$ is $mK$ dim
%    \item $\bY$ is $NmK$ dim
%    \item \textbf{t} is $N$ dim
    % \item \textbf{W} is $NmK$ dim
%\end{itemize}
%Converting the linear fractional form to a concave form is the most computationally expensive step with regards to parameter size. 

%\begin{theorem} %\label{the:piecewiseapprox}
%Let $U$ be the optimal solution of \textbf{OPT} and $\widehat{U}$ be the optimal solution of \textbf{Approx-OPT} which we get on using piecewise linear approximation with $K$ pieces, then we have the following bound : \\
%$|U - \widehat{U}| \leq \frac{C*}{K}$, where $C^*$ is a constant independent $K$. \mtien{The gap between the two objective values is not very important. Here it is better to look at the objective value given by the approximate solution.} \mtien{Can we provide the exact form of $C^*$ as a function of the gap $|F(x) - \widehat{F}(x)|$}
%\end{theorem}

\textbf{Max-Capture Competitive Facility Location (\textbf{MC-FLP})}:
In this problem~\citep{Mai2020multicut}, a firm has $M$ locations ($[M]$) to set up at most $m < M$ facilities. The aim is to maximize the number of clients using this firm's facilities. The competitor(s) already have facilities running at locations $Y \subset [M]$. 
%The geographical area is divided into $N$ zones given as a set $\cI$ ($\cI=[N]$). Each zone $i$ has $s_i$ customers.
There are different types of clients, where types are denoted by $\bx$. The number of clients of type $\bx$ is known and equal to $s_{\bx}$. However, the distribution over types is \emph{unknown}. The firm's decision of which location to choose is given by binary variables $z_j \in \{0,1\}$ for $j \in [M]$. 
A utility of any client of type $\bx$ for visiting location $j$ is $V_{\bx, j}$. The \emph{choice probability} of a client of type $\bx$ choosing any of this firm's location is given as a quantal response model
$
\frac{\sum_{j \in [M]} z_j e^{V_{\bx, j}}}{\sum_{j \in [M]} z_j e^{V_{\bx, j}} + \sum_{j \in Y} e^{V_{\bx, j}}}
$.
For shorthand, we abuse notation and use $V_{\bx, j}$ to replace $e^{V_{\bx, j}}$ and $U_{\bx, Y}$ to replace $\sum_{j \in Y} e^{V_{\bx,  j}}$. This gives $F(\bz, \bx) =  \frac{s_\bx \sum_{j \in [M]} z_j V_{\bx, j}}{\sum_{j \in [M]} z_j V_{\bx, j} + U_{\bx, Y}}$, which is interpreted as the expected number of clients of type $\bx$ choosing this firm's facilities.

Following our set-up, we observe $N$ samples of the types of clients samples $(\widehat{V}_{1,j})_{j \in [M]}, \ldots, (\widehat{V}_{N,j})_{j \in [M]}$ and we solve a robust version of the problem. Here, we get $F_i = \frac{s_i \sum_{j \in [M]} z_j \widehat{V}_{i, j}}{\sum_{j \in [M]} z_j \widehat{V}_{i, j} + \widehat{U}_{i, Y}}$.
Next,
following our general recipe for solving the robust problem, we note that $F_i$ is already in the form $\frac{\ba_i ^T \bv  + b_i}{\ba'^T_i \bv + b'_i}$ where $\bz$ plays the role of $\bv$. Thus, the dimension of $\bv$ is $M$ and no approximation is needed here for $F_i$. Thus, by Theorem~\ref{piecewiselinearerror}, we achieve the global optimal solution by solving the MISOCP optimally. The full MISOCP with an additional number of location constraint is (note $\bv$ directly replaces $\bz$): 
\begin{align*}
        \max_{\br} & \; q - s \; \mbox{ subject to }\mbox{Constraints}~(\ref{eq:sumli}\mbox{-}\ref{eq:SOCP}), \sum_{j \in [M]} v_j - m \leq 0.
%& \mbox{Constraints}~(\ref{eq:sumli}\mbox{-}\ref{eq:SOCP}) \\
%         & 
\end{align*}

% \textbf{Parameter Dimensions : }
% \begin{itemize}
%     \item $q$ is scalar
%     \item \textbf{l} is $N$ dim
%     \item $\bz$ is $m$ dim
%     \item $\bY$ is $N$ dim
%     \item \textbf{t} is $N$ dim
%     % \item \textbf{W} is $NmK$ dim
% \end{itemize}

\textbf{Max-Capture Facility Cost Optimization (\textbf{MC-FCP})}:
In the previous \textbf{MC-FLP} problem, the budget was specified as a constraint on the number of facilities. However, often a more realistic set-up is where there is a monetary constraint and the attractiveness of a facility depends on the investment into the facility.
Thus, modifying the previous problem slightly, $z_j$ takes a different meaning of the amount of investment into facility at location $j$ (zero investment indicates no facility). Given this investment, the attractiveness of a facility $j$ for the client of type $\bx$ is given as $h(z_j,  \theta_{\bx,j})$ for some parameter $\theta$ dependent on $\bx$ and $j$. And the \emph{ choice probability} of a client of type $\bx$ choosing any of this firm's location is given as a quantal response model
$\frac{\sum_{j \in [M]} e^{h(z_j, \theta_{\bx,j})} }{\sum_{j \in [M]} e^{h(z_j, \theta_{\bx,j})} + U_{\bx, Y}} $. This gives $F(\bz, \bx) =  \frac{s_\bx \sum_{j \in [M]} e^{h(z_j, \theta_{\bx,j})}}{\sum_{j \in [M]} e^{h(z_j, \theta_{\bx,j})} + U_{\bx,Y}}$, which is interpreted similar to \textbf{MC-FLP}. As stated, we observe $N$ samples of the types of clients which gives $ (\widehat{\theta}_{1,j})_{j \in [M]}, \ldots, (\widehat{\theta}_{N,j})_{j \in [M]}$ and we solve a robust version of the problem. Here, we get $F_i = \frac{s_i \sum_{j \in [M]} e^{h(z_j, \widehat{\theta}_{i,j})}}{\sum_{j \in [M]} e^{h(z_j, \widehat{\theta}_{i,j})} + \widehat{U}_{i,Y}}$.
Next,
%following our general recipe for solving the robust problem, we piecewise approximate the numerator and denominator of $F$ using $K$ pieces. 
as can be seen from the form, this is similar to the \textbf{SSG} problem and, following our general recipe, with two additional linear constraints the optimization formulation is exactly same as Equation~\eqref{prob:SSG}.
%over the constraints in Equations~(\ref{eq:sumli}\mbox{-}\ref{eq:SOCP}) to get: 
%\begin{align*}
%    \max_{\br} & \; q - s\\
%& \mbox{Constraints}~(\ref{eq:sumli}\mbox{-}\ref{eq:SOCP}) \\
%         & \sum_{j \in [M]} \sum_{k \in [K]} v_{jk} - mK \leq 0\\
%         & v_{j,k} \geq v_{j,k+1}; \quad \forall k \in [K]
%\end{align*}
% \bose{Ideally we would want to have $a_{ij}$'s and $b_{ij}$'s as in our security game formulation. However the dataset gives us too little information to infer these. We have $u_{ij}$'s ($N \times m$ targets) and $2 N \times m$ free variables meaning there will be infinite solutions to such a regression model. Thus, we should use $a^j$'s ($m$ free variables) common for each client and different only for different facility locations. Also we have $b_i$'s ($N$ free variables): a common bias across each location for the $i^{th}$ client. Also $a^j \geq 0 \forall j \in [m]$ is enforced. \\ \\ Also, assumption is that given data has all costs $x_j$'s tuned to 1, ie. $u_{ij} \sim a_{j} + b_{i}$, and later in the cost planning problem it becomes the function : $u_{ij}(x_j) = a_{j}x_j + b_{i}$ where $x_j \leq 1 \forall j \in [m]$ (Limits expenditure per location), and $\sum_{j \in [m]} x_j \leq M$ (Limits total expenditure). \\ \\ Fix : All clients of a cluster have the same $a_ij$'s, ie. $a_{i_1}j = a_{i_2}j$ if $i_1,i_2$ in same cluster. All utilities for the $j^{th}$ alternative have same bias $b_j$ irrespective of client identity. Thus we are solving the regression $u_{xj} = a_{xj} + b_j$, where $x$ is a cluster id.}

\section{Scaling up in Number of Samples}
The transformation to a MISOCP helps in scalability over a general mixed integer concave program, but for real world dataset sizes (e.g., $80,000$ data points in our experiments) we need further scalability. We explore two related techniques towards this end: clustering and stratified sampling. For both approaches, we obtain a representative subset of $S$ data points ($S <\!< N$) and a modified weighted objective, which converts to a much smaller tractable MISOCP compared to the original problem. For solution guarantees, we need mild assumptions: in particular, for the rest of this section we assume a bounded $F$, i.e., for some fixed $\psi$ $
    \max_{\bz} \{F(\bz, \bx)\} - \min_{\bz} \{F(\bz, \bx)\} \leq \psi^2 \quad \forall \widehat{\bx}_1, \ldots, \widehat{\bx}_N 
$ 
and $\tau$-lipschitzness of $F$ in the argument $\bx$:
% \begin{align}
    % \max_{\bz, \bx} F(\bz, \bx) - \min_{\bz, \bx} F(\bz, \bx)  \leq \psi^2 
% \end{align}
$
    \vert F(\bz, \bx') - F(\bz, \bx) \vert \leq \tau \vert\vert\bx' - \bx\vert\vert_2 \quad \forall \bz 
$.

\textbf{Clustering Approach}:
We cluster the $N$ points $\bx_1,...\bx_N$ into $S$ groups and for each group $s$ we have $||\bx_i - \bx^s|| \leq \epsilon$, where $\bx^s$ is the cluster center of cluster $s$. We call $\epsilon$ the clustering radius. Let $C_s$ be the number of points in the cluster $s$, hence $\sum_{s \in [S]} C_s = N$. 
We use a shorthand for the original objective function of the MISOCP $\cG(\bz)$: 
\begin{align*}
& \sum_i \frac{F(\bz,\widehat{\bx}_i)}{N}  - \sqrt{\rho\sum_i \left(\sum_i \frac{F(\bz,\widehat{\bx}_i)}{N} - F(\bz,\widehat{\bx}_i)\right)^2 } = \widehat{\text{Mean}}(F(\bz,\bx))  - \sqrt{\rho \widehat{\text{Var}}(F(\bz,\bx))}
\end{align*}
where $\widehat{\text{Mean}}$ is empirical mean and $\widehat{\text{Var}}$ is \emph{unnormalized variance}.
After clustering, we solve for the same problem but only with cluster centers and appropriate \emph{weighing}, to get modified objective $\widehat{\cG}(\bz)$:
\begin{align}
\sum_s C_s \frac{F(\bz,\bx^s)}{N} - \sqrt{\rho\sum_s C_s \left( \sum_s C_s \frac{F(\bz,\bx^s)}{N} - F(\bz,\bx^s)\right)^2 }  = \widehat{\text{Mean}}^S(F(\bz,\bx)) - \sqrt{\rho \widehat{\text{Var}}^S(F(\bz, \bx))} \label{obj:clustering}
\end{align}
The conversion to MISOCP is exactly the same, except for $F_i$'s being weighted as shown above; details of conversion are in the appendix. 
We bound the approximation incurred by the two terms above (weighted mean and unnormalized weighted variance)  separately below 

\begin{lemma}\label{lem:mean}
Under assumptions stated above, we have 
%\begin{align}
    $\left| \widehat{\text{Mean}}(F(\bz,\bx))  -  \widehat{\text{Mean}}^S(F(\bz,\bx)) \right| \leq \tau \epsilon$ and \nonumber 
    % \left| \sqrt{\rho\sum_i \left(\frac{1}{N} \sum_i  F(\bz,\theta_i) - F(\bz,\theta_i)\right)^2} - \sqrt{\rho\sum_m C_m  \left(\frac{1}{N} \sum_m C_m F(\bz,\theta^m) - F(\bz,\theta^m)\right)^2}\right| &\leq  \sqrt{\rho} \psi  \epsilon N \nonumber \\
%\end{align}
%\end{lemma}
%\begin{proof} By Lipschitzness,
%\[
%|F(\bz,\widehat{\bx}_i) - F(\bz,\bx^s)| \leq \tau \epsilon,\;\forall \bz ,   \forall \widehat{\bx}_i \mbox{ in cluster }s
%{\small\begin{align*}
    $ \left \vert \sqrt{\rho \widehat{\text{Var}}(F(\bz,\bx))} - \sqrt{\rho \widehat{\text{Var}}^S(F(\bz, \bx)) } \right \vert
    \leq 
    (\psi + \sqrt{2\tau\epsilon} ) \sqrt{\frac{2\tau \epsilon \xi}{N}}$.
%\end{align*}}
\end{lemma}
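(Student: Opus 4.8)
The plan is to handle the two bounds separately, reducing each to a single pointwise comparison between a sample value $F(\bz,\widehat{\bx}_i)$ and its cluster-center surrogate $F(\bz,\bx^{s(i)})$, where $s(i)$ is the cluster containing point $i$. The one ingredient used throughout is that Lipschitzness plus the clustering radius give $|F(\bz,\widehat{\bx}_i)-F(\bz,\bx^{s(i)})|\le\tau\|\widehat{\bx}_i-\bx^{s(i)}\|_2\le\tau\epsilon$ for every $i$ and every $\bz$. The mean bound is then immediate: grouping the $N$ samples by cluster and using $\sum_s C_s=N$ gives $\widehat{\text{Mean}}(F(\bz,\bx))-\widehat{\text{Mean}}^S(F(\bz,\bx))=\tfrac1N\sum_i\big(F(\bz,\widehat{\bx}_i)-F(\bz,\bx^{s(i)})\big)$, so the triangle inequality bounds each of the $N$ summands by $\tau\epsilon$ and division by $N$ yields $\tau\epsilon$.

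For the standard-deviation term I would first set up a clean algebraic representation. Introduce the two vectors $g,g^S\in\bbR^N$ with $g_i=F(\bz,\widehat{\bx}_i)$ and $g^S_i=F(\bz,\bx^{s(i)})$ (each sample ``lifted'' to its center's value). The key observation is that the cluster-size weights $C_s$ appearing in $\widehat{\text{Var}}^S$ are reproduced automatically: since $C_s$ samples share the center $\bx^s$, one has $\widehat{\text{Var}}^S(F(\bz,\bx))=\sum_i(g^S_i-\overline{g^S})^2=\|(\bI-\tfrac1N\textbf{1}\textbf{1}^\transpose)g^S\|_2^2$ and likewise $\widehat{\text{Var}}(F(\bz,\bx))=\|(\bI-\tfrac1N\textbf{1}\textbf{1}^\transpose)g\|_2^2$. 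Thus $\sqrt{\widehat{\text{Var}}}$ and $\sqrt{\widehat{\text{Var}}^S}$ are the values of the centering seminorm $h\mapsto\|(\bI-\tfrac1N\textbf{1}\textbf{1}^\transpose)h\|_2$ at $g$ and $g^S$. Being a seminorm, this map obeys the reverse triangle inequality, so $\big|\sqrt{\widehat{\text{Var}}}-\sqrt{\widehat{\text{Var}}^S}\big|\le\|(\bI-\tfrac1N\textbf{1}\textbf{1}^\transpose)(g-g^S)\|_2\le\|g-g^S\|_2=\sqrt{\sum_i\big(F(\bz,\widehat{\bx}_i)-F(\bz,\bx^{s(i)})\big)^2}\le\sqrt{N}\,\tau\epsilon$, where the middle inequality uses that centering is an orthogonal projection, hence a contraction. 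Multiplying by $\sqrt\rho=\sqrt{2\xi}/N$ already gives a bound of order $\tau\epsilon\sqrt{2\xi/N}$, which is dominated by the claimed right-hand side.

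To land exactly on the stated constants I would instead combine $|\sqrt a-\sqrt b|\le\sqrt{|a-b|}$ with the difference-of-squares factorization $\widehat{\text{Var}}-\widehat{\text{Var}}^S=N\,\widehat{\text{Cov}}(g-g^S,\,g+g^S)$, apply Cauchy--Schwarz to the covariance, bound the standard deviation of the increment $g-g^S$ by $\tau\epsilon$ (the pointwise estimate), and control the standard deviation of $g+g^S$ via the triangle inequality for standard deviations together with the boundedness assumption, which is what supplies the $\psi$. Inserting $\sqrt\rho$ then produces the factor $\sqrt{2\tau\epsilon\xi/N}$, while passing through the square-root inequality is precisely what turns the clean $\tau\epsilon$ scaling into the $\sqrt{2\tau\epsilon}$ summand in $(\psi+\sqrt{2\tau\epsilon})$.

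The only genuine difficulty is the nonlinearity: neither the (unnormalized) variance nor the outer square root is linear in $F$, so one cannot substitute centers term by term as in the mean case. The resolution is the seminorm/covariance structure above, which linearizes the comparison down to the single increment $g-g^S$ that the Lipschitz-plus-radius estimate controls. The two points that need care are verifying that the weighting by cluster sizes $C_s$ is reproduced exactly by the lift to $\bbR^N$ (so no extra error enters when a weighted sum over centers is replaced by an unweighted sum over samples), and checking that the boundedness constant $\psi$ is attached to the correct standard-deviation factor rather than to the increment.
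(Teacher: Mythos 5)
Your proposal is correct, and your main argument for the variance term takes a genuinely different---and in fact sharper---route than the paper's. The mean bound is identical in both proofs (pointwise Lipschitz bound, sum, average). For the variance term, the paper's appendix proof writes $F(\bz,\widehat{\bx}_i)=\mu+\alpha_i$ and $F(\bz,\bx^{s(i)})=\mu+\alpha_i+\beta_i$ with $\sum_i\alpha_i=0$ and $|\beta_i|\le\tau\epsilon$, splits into two cases according to which standard deviation is larger, expands the square, bounds the cross term $2\sum_i\alpha_i\beta_i$ by $2N\psi^2\tau\epsilon$ via the boundedness assumption, and finishes with $\sqrt{a+b}\le\sqrt{a}+\sqrt{b}$---which is exactly where the constant $(\psi+\sqrt{2\tau\epsilon})\sqrt{2\tau\epsilon\xi/N}$ comes from; your second sketch (square-root inequality plus the covariance factorization and Cauchy--Schwarz) is essentially a cleaner repackaging of that same bookkeeping and would reproduce comparable constants. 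Your first argument is the genuinely new one: the observation you correctly flag as the crux---that the $C_s$-weighted mean and unnormalized variance of the centers coincide exactly with the unweighted ones of the lifted vector $g^S\in\bbR^N$---lets you view both standard deviations as values of the centering seminorm $h\mapsto\|(\bI-\tfrac1N\textbf{1}\textbf{1}^{\transpose})h\|_2$, and the reverse triangle inequality plus the contractivity of the orthogonal projection give $\bigl|\sqrt{\widehat{\text{Var}}}-\sqrt{\widehat{\text{Var}}^S}\bigr|\le\|g-g^S\|_2\le\sqrt{N}\,\tau\epsilon$, hence $\tau\epsilon\sqrt{2\xi/N}$ after multiplying by $\sqrt{\rho}$. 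Since $\sqrt{\tau\epsilon}\le\sqrt{2\tau\epsilon}\le\psi+\sqrt{2\tau\epsilon}$, this dominates the stated right-hand side, so the lemma follows; in fact your bound is strictly stronger (order $\epsilon$ rather than order $\sqrt{\epsilon}$ as the clustering radius shrinks), it dispenses with the boundedness constant $\psi$ entirely, and it would correspondingly tighten Theorem~\ref{the:cluster}. It also sidesteps a small internal glitch in the paper: the main text sets $\rho=2\xi/N^2$ while the appendix proof plugs in $\rho=\xi/N^2$ (an off-by-$\sqrt{2}$ slip in the stated constant), whereas your seminorm bound verifiably dominates the claimed right-hand side under either convention.
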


% \bose{I think it should be $\sqrt{\rho}$ instead of $\rho$, on the RHS. $\rho$ is of the order $\mathcal{O}(\frac{1}{N^2})$, thus $\sqrt{\rho}N$ would become a constant.}\\

% Thus 
% % \[
% % |\cG(\bz) - \widehat{\cG}(\bz)|\leq \epsilon (\tau + \rho\psi N) \myeq{} \eta
% % \]
% \[
% |\cG(\bz) - \widehat{\cG}(\bz)|\leq \sqrt{\tau \epsilon} (\sqrt{\tau \epsilon} + \psi \sqrt{\frac{1}{N}}) \myeq{} \eta
% \]

The next result is obtained by using the lemma above

\begin{theorem} \label{the:cluster}
Given the assumptions stated above,
and $\widehat{\bz}$ an optimal solution for  $\max_{\bz}\widehat{\cG}(\bz)$ and $\bz^*$ optimal for MISOCP $\max_{\bz} \cG(\bz)$, the following holds:
$
 |\cG(\widehat{\bz}) - {\cG}(\bz^*)|\leq 2 (\tau \epsilon + \psi \sqrt{\frac{2\tau \epsilon \xi}{N}} + \frac{2\tau \epsilon \xi}{\sqrt{N}})
$.
\end{theorem}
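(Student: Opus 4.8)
The plan is to reduce the theorem to Lemma~\ref{lem:mean} via two routine steps: first, promote the lemma's two term-wise bounds into a single bound on $|\cG(\bz) - \widehat{\cG}(\bz)|$ that is \emph{uniform} in $\bz$; and second, run the standard ``optimization-error sandwich'' that converts a uniform objective-approximation bound into a bound on the gap between the two optimizers.

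For the first step, I would write $\cG(\bz) - \widehat{\cG}(\bz)$ as the difference of the mean terms plus the difference of the square-rooted, scaled variance terms, and apply the triangle inequality. Lemma~\ref{lem:mean} bounds each piece, so that for every $\bz$,
\[
|\cG(\bz) - \widehat{\cG}(\bz)| \le \tau\epsilon + \Big(\psi + \sqrt{2\tau\epsilon}\,\Big)\sqrt{\tfrac{2\tau\epsilon\xi}{N}}.
\]
Expanding the product gives $\psi\sqrt{2\tau\epsilon\xi/N}$ together with the cross term $\sqrt{2\tau\epsilon}\cdot\sqrt{2\tau\epsilon\xi/N} = 2\tau\epsilon\sqrt{\xi}/\sqrt{N}$; bounding $\sqrt{\xi}\le\xi$ (valid for $\xi\ge 1$, which holds in our regime) yields the uniform bound $|\cG(\bz) - \widehat{\cG}(\bz)| \le B$ with $B := \tau\epsilon + \psi\sqrt{2\tau\epsilon\xi/N} + 2\tau\epsilon\xi/\sqrt{N}$, which is exactly half the claimed right-hand side. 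The key point I would stress is that the lemma holds for an arbitrary fixed $\bz$, so $B$ does not depend on $\bz$ and the same constant controls the approximation at both $\bz^*$ and $\widehat{\bz}$.

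For the second step, since $\bz^*$ maximizes $\cG$ we have $\cG(\bz^*)\ge\cG(\widehat{\bz})$, so it suffices to upper bound the nonnegative quantity $\cG(\bz^*)-\cG(\widehat{\bz})$. I would decompose it as
\[
\cG(\bz^*)-\cG(\widehat{\bz}) = \big[\cG(\bz^*)-\widehat{\cG}(\bz^*)\big] + \big[\widehat{\cG}(\bz^*)-\widehat{\cG}(\widehat{\bz})\big] + \big[\widehat{\cG}(\widehat{\bz})-\cG(\widehat{\bz})\big].
\]
The first and third brackets are each at most $B$ by the uniform bound, while the middle bracket is $\le 0$ because $\widehat{\bz}$ maximizes $\widehat{\cG}$. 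Hence $\cG(\bz^*)-\cG(\widehat{\bz})\le 2B$, which is precisely the stated inequality.

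The argument is essentially mechanical once Lemma~\ref{lem:mean} is in hand; the only place demanding care is the first step, where one must (i) confirm that the lemma's bounds are genuinely uniform over $\bz$ rather than holding only at a particular point, and (ii) handle the algebra of the cross term, where the honest constant is $2\tau\epsilon\sqrt{\xi}/\sqrt{N}$ and the stated $2\tau\epsilon\xi/\sqrt{N}$ follows only after invoking $\xi\ge 1$. No convexity or further structure of $F$ is needed beyond what the lemma already encapsulates, so I expect no genuine obstacle in the sandwich step.
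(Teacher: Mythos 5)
Your proof is correct and takes essentially the same route as the paper's: a uniform bound on $|\cG(\bz)-\widehat{\cG}(\bz)|$ assembled from the two parts of Lemma~\ref{lem:mean} via the triangle inequality, followed by the standard optimality sandwich (the paper phrases the sandwich as a two-term split $|\cG(\widehat{\bz})-\widehat{\cG}(\widehat{\bz})| + |\widehat{\cG}(\widehat{\bz})-\cG(\bz^*)|$ with a case analysis instead of your three-term telescope, but the two are interchangeable). If anything you are more careful than the paper on the final algebra: the honest cross term is $2\tau\epsilon\sqrt{\xi}/\sqrt{N}$, and the stated $2\tau\epsilon\xi/\sqrt{N}$ only dominates it when $\xi \geq 1$ --- a step the paper's proof passes over silently.
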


% \bose{This proof approach is very nice as the only assumption is the Lipschitzness of our function, which indeed holds. I was thinking if this can be generalized to the random subset proof, if we think in terms of the finite support. We have support : $\bx_1,\ldots,\bx_N$, and for the original problem the distribution is $p_1,\ldots,p_N$ ($\sum_i p_i = 1$), and for the sampled version without loss of generality (just repurmute) we have distribution $q_1,\ldots,q_r,0,\ldots,0$ ($\sum_i q_i = 1$). Now as $p_i$'s are nearby $\frac{1}{n}$, and $q_i$'s are nearby $\frac{1}{r}$, we can look to find the closest points in $\bx_{r+1}, \ldots, \bx_N$, corresponding to $\bx_1, \ldots, \bx_r$ and break the masses $q_i = p_i + s_i$, where $s_i$ is a small number and will match the closest points in $\bx_{r+1}, \ldots, \bx_N$. With multiple draws, the distances between $\bx_{r+1}, \ldots, \bx_N$, corresponding to $\bx_1, \ldots, \bx_r$ would decrease, and our bounds should get smaller. }
\textbf{Stratified Sampling}:
Similar in spirit to clustering, the space of $\bx$ space is divided into $T$ strata. Each strata has $C_t$ samples, such that $\sum_{t \in [T]} C_t = N$. Next, distinct from the clustering approach,
we draw $N_t$ samples randomly from the $t^{th}$ stratum with a total of $\sum_t N_t = S$ samples (note same number of total samples $S$ as in clustering). For each stratum $t$ we have $||\bx_i - \bx_j|| \leq d_t$ for any $\bx_i, \bx_j$ in stratum $t$. We denote a random sample in stratum $t$ as $\widehat{\bx}^j$ where $j \in [N_t]$ (note superscript is to distinguish from the subscript used to index all the $\widehat{\bx}$'s). This approach is the preferred one if the clustering approach results in cluster centers that are not allowed as parameter values (e.g., cluster center may be fractional where $\bx$'s can only be integral).

%\begin{algorithm}[t]
%   \caption{Approximate Solver Framework} \label{alg:main}
%   \label{alg:framework}
%\begin{algorithmic}
%   \STATE {\bfseries Input:} $N$ data points, parameters $m, M$
%   \STATE Cluster data into $K$ ($K=S$ for clustering, $=T$ for sampling) clusters/strata using k-means
%   \STATE Obtain $S$ data points by either choosing cluster centers in clustering or $N_t$ samples within clusters for sampling.
%   \STATE Construct optimization with Eq.~\ref{obj:clustering} (for clustering) or Eq.~\ref{obj:sampling} (for sampling)
%   \STATE Convert the optimization to MISOCP and solve 
%   \STATE \textbf{return} the solution $\bz^*$
%\end{algorithmic}
%\end{algorithm}

Let $l_t = \frac{C_t}{N_t}$. Use $\widehat{Mean}^T(F(\bz, \bx))$ to stand for $\frac{1}{N} \sum_{t \in [T]}l_t  \sum_{j \in [N_t]}F(\bz,\widehat{\bx}^j)$ and $\widehat{Var}^T(\bz, \widehat{\bx})$ for $\sum_{t \in [T]}l_t  \sum_{j \in [N_t]} \left(\widehat{Mean}(F(\bz, \bx)) - F(\bz,\widehat{\bx}^j)\right)$.
%We choose $N_m$'s such that $C_1:C_2:\ldots:C_M :: N_1:N_2:\ldots:N_M$. For $m^{th}$ strata $S_m$ let the diameter be $D_m$, i.e., $|\bx_1- \bx_2| \leq D_m \; \forall \bx_1,\bx_2 \in S_m$. Define $D = \max_{m \in [M]}D_m$. Compared to the previous section, the strata here act as clusters, and the diameter $D_m$ as $\epsilon$. 
%Just as $\epsilon$ can be made significantly small by tuning the number of clusters, $D_i$'s can be made small by increasing the number of strata.
After stratified sampling our modified weighted objective $\widehat{G}(\bz)$ is 
$   \widehat{Mean}^T(F(\bz, \bx)) - \sqrt{\rho \widehat{Var}^T(\bz, \widehat{\bx})}
$. Next, similar to clustering, bounds for $ \widehat{Mean}^T$ and $ \widehat{Var}^T$ but with high probability (Lemma~\ref{lem:stratamean},\ref{lem:stratavar} in appendix) lead to the main result:
% \textbf{Proof 1 : } Use similar proof sketch as before, replace $\epsilon$ by $D$. 
\begin{theorem} \label{the:strata}
Let $D = \max_{\bz, \bx } |F(\bz, \bx)|$  for bounded function $F$. Given the assumptions stated above,
and $\widehat{\bz}$ an optimal solution for  $\max_{\bz}\widehat{\cG}(\bz)$ and $\bz^*$ optimal for MISOCP $\max_{\bz} \cG(\bz)$, and $N_* = \min_t {N_t}$, the following statement holds with probability $\geq 1 - 2 \sum_t \exp^{\frac{-2 \sqrt{N_*}\epsilon^2}{\tau^2 d_t^2}} - 4 \sum_t \exp^{\frac{-2 \sqrt{N_*} \epsilon^2}{4 \tau^2 d_t^2 D^2 }}$:
\begin{align*}
|\cG(\widehat{\bz}) - {\cG}(\bz^*)|\leq \frac{2  \epsilon}{(N_*)^{1/4}} \Bigg (1 + 2\sqrt{\frac{ \xi}{\widehat{Var}(F(\bz,\bx))} } \Bigg ).
\end{align*}
\end{theorem}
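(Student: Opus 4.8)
The plan is to mirror the structure used for the clustering result (Theorem~\ref{the:cluster}): first control the two pieces of the objective separately --- the stratified mean $\widehat{Mean}^T$ against the full-sample mean $\widehat{Mean}$, and the stratified penalty $\sqrt{\rho\,\widehat{Var}^T}$ against $\sqrt{\rho\,\widehat{Var}}$ --- and then convert these two uniform approximation bounds into a gap on the optimizers via a standard sandwich argument. Concretely, if with the stated probability $|\widehat{\cG}(\bz)-\cG(\bz)|\le \gamma_1+\gamma_2$ holds at the decisions that matter, then optimality of $\bz^*$ for $\cG$ and of $\widehat{\bz}$ for $\widehat{\cG}$ gives $\cG(\bz^*)\ge\cG(\widehat{\bz})\ge\widehat{\cG}(\widehat{\bz})-(\gamma_1+\gamma_2)\ge\widehat{\cG}(\bz^*)-(\gamma_1+\gamma_2)\ge\cG(\bz^*)-2(\gamma_1+\gamma_2)$, which immediately yields $|\cG(\widehat{\bz})-\cG(\bz^*)|\le 2(\gamma_1+\gamma_2)$ --- exactly the factor of $2$ in the claim. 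It then remains to identify $\gamma_1=\epsilon/(N_*)^{1/4}$ for the mean and $\gamma_2=\tfrac{2\epsilon}{(N_*)^{1/4}}\sqrt{\xi/\widehat{Var}(F(\bz,\bx))}$ for the penalty, together with their two failure probabilities.

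For the mean term (Lemma~\ref{lem:stratamean}) I would first note that, drawing the $N_t$ samples uniformly within stratum $t$, the estimator is unbiased stratum by stratum, since $\bbE\big[\tfrac{l_t}{N}\sum_{j\in[N_t]}F(\bz,\widehat{\bx}^j)\big]=\tfrac1N\sum_{\bx_i\in t}F(\bz,\bx_i)$. The key geometric fact is that within stratum $t$ all points lie within $d_t$, so by $\tau$-Lipschitzness the sampled values $F(\bz,\widehat{\bx}^j)$ have range at most $\tau d_t$ for every $\bz$. Hoeffding's inequality applied per stratum and a union bound over the $T$ strata then control the deviation of $\widehat{Mean}^T$ from $\widehat{Mean}$; choosing the target deviation $\epsilon/(N_*)^{1/4}$ and using $N_t\ge N_*$ to bound each exponent produces the first probability term $2\sum_t\exp\!\big(-2\sqrt{N_*}\epsilon^2/(\tau^2 d_t^2)\big)$ and $\gamma_1$.

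For the penalty term (Lemma~\ref{lem:stratavar}) I would treat the unnormalized variance as a weighted average of the squared deviations $g(\bx)=(\widehat{Mean}(F)-F(\bz,\bx))^2$, which are again bounded within a stratum but now with range of order $\tau d_t D$, since $|g(\bx_i)-g(\bx_j)|\le|F_i-F_j|\,|2\widehat{Mean}-F_i-F_j|\le \tau d_t\cdot O(D)$. Hoeffding plus a union bound controls $|\widehat{Var}^T-\widehat{Var}|$, and then the elementary inequality $|\sqrt a-\sqrt b|\le|a-b|/\sqrt b$ transfers this through the square root to bound $\sqrt\rho\,|\sqrt{\widehat{Var}^T}-\sqrt{\widehat{Var}}|$; substituting $\rho=2\xi/N^2$ and the variance deviation target makes the $N$'s cancel and leaves precisely the factor $\sqrt{\xi/\widehat{Var}(F(\bz,\bx))}$, giving $\gamma_2$ and the second term $4\sum_t\exp\!\big(-2\sqrt{N_*}\epsilon^2/(4\tau^2 d_t^2 D^2)\big)$ (the extra factor in the count coming from the two-sided deviation / the combined events needed to control a quadratic functional). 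A final union bound over the mean and variance events adds the two failure probabilities.

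The main obstacle is the variance step. Unlike the mean, the variance is a nonlinear functional, so the averaged random variables are squared deviations whose range scales with $D$ and whose concentration must then be pushed through a square root; the $|\sqrt a-\sqrt b|\le|a-b|/\sqrt b$ move is only useful if $\widehat{Var}$ stays bounded away from zero, which is exactly why $\widehat{Var}(F(\bz,\bx))$ survives in the denominator of the final bound and must be handled carefully (a degenerate, near-constant objective would blow up $\gamma_2$). A second, more delicate point is that the sandwich is invoked at the data-dependent optimizer $\widehat{\bz}$, whereas Hoeffding is pointwise in $\bz$; this is legitimate here because after the piecewise-linear reduction the decision variable $\bv$ is binary and the decision set is finite, so the pointwise bounds can be made uniform by a union bound over decisions (with constants absorbed into the target deviation) and then applied at both $\bz^*$ and $\widehat{\bz}$.
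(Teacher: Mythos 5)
Your proposal is correct and follows the paper's proof architecture almost exactly: the same sandwich argument as in Theorem~\ref{the:cluster} converting a uniform approximation bound $|\widehat{\cG}(\bz)-\cG(\bz)|\le \gamma_1+\gamma_2$ (needed at both $\bz^*$ and $\widehat{\bz}$) into the $2(\gamma_1+\gamma_2)$ optimality gap, a per-stratum Hoeffding bound with range $\tau d_t$ plus a union bound over strata for the mean (this is Lemma~\ref{lem:stratamean} essentially verbatim), the transfer $|\sqrt{a}-\sqrt{b}|\le |a-b|/\sqrt{b}$ for the penalty term, and the final substitution of the target deviation by $\epsilon/N_*^{1/4}$ together with $N_t\ge N_*$ to obtain the stated exponents. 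The one place you genuinely diverge is inside the variance lemma. The paper (Lemma~\ref{lem:stratavar}) centers $\widehat{Var}^T$ at the stratified mean $\widehat{\mu}$, expands it as $\sum_t\sum_{j} l_t (Y^j)^2 - N\widehat{\mu}^2$, and runs \emph{two} separate Hoeffding events --- one for the second moments, with range $2\tau d_t D$ via $|F_i^2-F_j^2|\le 2\tau d_t D$, and one for $|\widehat{\mu}^2-\mu^2|$ --- whose union bound is precisely the source of the $4\sum_t$ factor. You instead concentrate the squared deviations $g(\bx)=(\widehat{\text{Mean}}(F)-F(\bz,\bx))^2$ about the fixed full-sample mean in a single event; this matches the main text's (mean-centered) definition of $\widehat{Var}^T$, needs only one Hoeffding application per stratum (so would in fact yield failure probability $2\sum_t$ rather than $4\sum_t$, a stronger statement), but your range constant is looser: in general $|g(\bx_i)-g(\bx_j)|\le \tau d_t\,|2\widehat{\text{Mean}}-F_i-F_j|\le 4\tau d_t D$, giving exponent denominator $16\tau^2 d_t^2 D^2$ rather than the theorem's $4\tau^2 d_t^2 D^2$, unless $F\ge 0$ (which the applications ensure after the $\alpha$-shift) --- a constant-level discrepancy you should reconcile if you want the exact stated probability. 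Finally, your treatment of the pointwise-versus-uniform issue --- observing that Hoeffding holds for each fixed $\bz$ while $\widehat{\bz}$ is sample-dependent, and patching this via a union bound over the finite binary decision set after the PWLA reduction --- is a genuine improvement in rigor: the paper applies its ``$\forall \bz$ with probability'' lemmas directly at the random optimizer $\widehat{\bz}$ without comment, and your fix (at the price of absorbed constants in the exponent) is what makes that step airtight.
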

Thus, with increasing samples in all strata, optimality gap approaches $0$ with prob. approaching~$1$.

\section{Experiments}

\begin{figure} 
    %\vskip 0.2in
    \centering
    \includegraphics[scale=0.47]{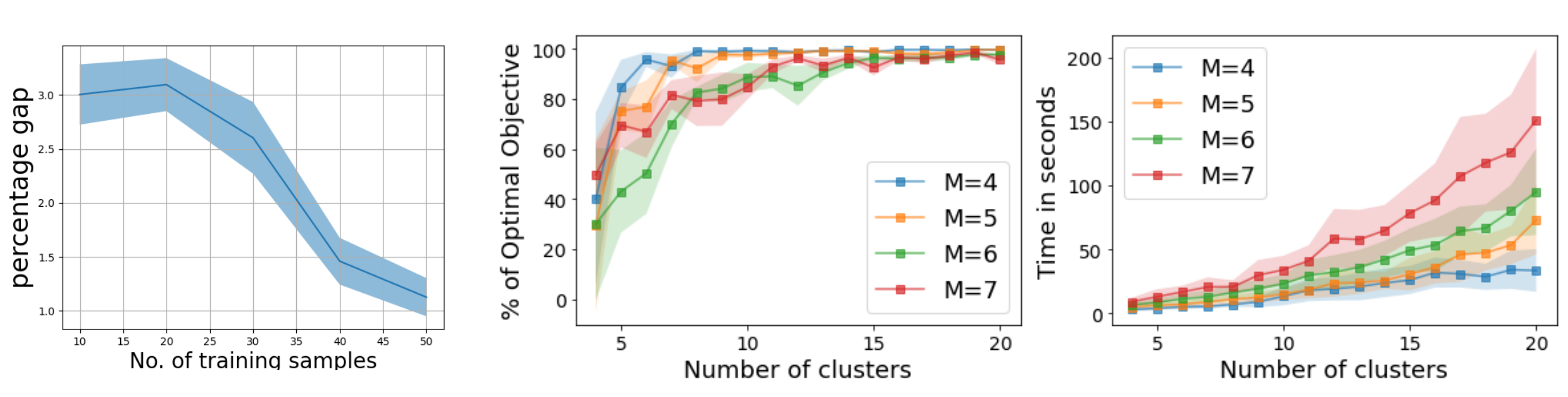}
    \caption{(Left) \% gap between the utility of decisions using true and learned regressor with varying training data size $N_T$. (Middle) Objective value achieved using clustering approach as a \% of \textbf{OPT}. (Right) Time to solve each optimization to optimality.
    Middle and right results are shown for varying alternatives number $M$. Underlying parameters are $N=500, m=1, \xi=$1E6.}
    \label{fig:SSG}
    %\vskip -0.2in
\end{figure}

%\textbf{TODO : } 1. Mathematically introduce what hard samples are, 2. Prescription on choosing regularization hyperparameter : $\xi$ (Although past works have avoided this question). 3. Go into little bit more details about choosing optimal number of clusters (call it "one of the ways", else reviewers might come up with other schemes and question our selection method).

% \begin{figure}[t!]
%     \centering
%     \includegraphics[scale=0.5]{}
%     \caption{\textbf{MC-FLP : }Client choice probability \textit{percentages} of availing this facility computed. Raw count of clients in each percentage quantile (20 in all) plotted. The left tail shows significantly larger number of clients with very little probability to avail the facility in the ERM solution.}
%     \label{fig:FLP_quantile}
% \end{figure}
% \begin{figure}[t!]
%     \centering
%     \includegraphics[scale=0.5]{}
%     \caption{\textbf{MC-FCP : }Client choice probability \textit{percentages} of availing this facility computed. Raw count of clients in each percentage quantile (20 in all) plotted. The left tail shows significantly larger number of clients with very little probability to avail the facility in the ERM solution.}
%     \label{fig:FCP_quantile}
% \end{figure}

\begin{table}[t]
    \caption{We use clustering/stratified sampling to approximately solve a problem (\textbf{Approx-OPT}). The table shows the mean percentage: $\frac{100 \times \textbf{Approx-OPT}}{\textbf{OPT}}$ and  standard deviation over 10 different synthetic SSG datasets with underlying parameters $N=500, M=6, m=1, K=10$.}
    \label{tab:cluster_vs_sampling}
    %\vskip 0.05in
    \centering
    \begin{small}
    % \scalebox{0.95}{
    \begin{tabular}{cccc}
    % \hline
    \toprule
     \multirow{2}*{Method} & \multicolumn{3}{c}{Total no. of samples (S)} \\
    % \hline
    % \midrule
    \cline{2-4}
    & 8 & 16 & 24\\
    % \hline
    \midrule
        Clustering & 80.17 $\pm$ 2.14 & 90.74 $\pm$ 1.20 & 94.03 $\pm$ 0.57\\
         1 per strata & 85.49 $\pm$ 1.94 & 94.59 $\pm$ 0.76 & 99.25 $\pm$ 0.39 \\
         2 per strata & 91.78 $\pm$ 1.33  & 94.55 $\pm$ 0.64 & 99.54 $\pm$ 0.25\\
         4 per strata & 78.89 $\pm$ 1.73 & 94.86 $\pm$ 0.94 & 98.77 $\pm$ 0.39\\
         8 per strata & 92.19 $\pm$ 0.67 & 95.85 $\pm$ 0.80 & 99.37 $\pm$ 0.25\\
        %  \hline
        \bottomrule
    \end{tabular}
    % }
    \end{small}
    %\vskip -0.1in
\end{table}

We evaluate our methods on (a) Stackleberg Security Games (\textbf{SSG}) with Quantal Response (synthetic data), (b) Maximum capture Facility Location Planning (\textbf{MC-FLP}) and (c) Maximum capture Facility Cost Planning (\textbf{MC-FCP}). Empirically we demonstrate (i) better solution quality of our method compared to baselines, (ii) practical scalability of our method, and (iii) improvement over non-robust optimization on those data points that contribute least to the objective (akin to rare classes in classification) while not sacrificing average performance. We fix $K=10$ in PWLA as we find that objective increase saturates for this $K$ (see Appendix~\ref{sec:K}). We use a 2.1 GHz CPU with 128GB RAM.

% \begin{table}[t!]
%     \caption{Objective values as a \% of \textbf{OPT} across various methods repeated over 5 different synthetic SSG datasets with underlying parameters $N=500, M=10, m=1$ upon varying the magnitude of regularization ($\xi$). The number of clusters/strata was chosen as 50 for all experiments.}
%     \label{tab:baseline_xi}
%     %\vskip 0.05in
%     \centering
%     \begin{small}
%     % \scalebox{1.0}{
%     \begin{tabular}{ccccc}
%     \toprule
%     \multirow{2}*{Method} & \multicolumn{4}{c}{Regularization ($\xi$)}\\
%     \cmidrule{2-5}
%     & 1E3 & 1E4 & 1E5 & 1E6\\
%     \midrule
%          TT-GAD& 99.8$\pm$0.1 & 99.4$\pm$0.1 & 92.7$\pm$0.3 & 82.6$\pm$0.4\\
%          PGA& 98.9$\pm$0.1 & 98.1$\pm$0.2 & 87.7$\pm$0.5 & 49.2$\pm$0.9\\
%          Clustering\!& 99.9$\pm$0.1 & 99.9$\pm$ 0.1 & 99.8$\pm$0.1 & 99.6$\pm$0.1\\
%          Sampling\!& 100.0$\pm$0.0 & 99.9$\pm$ 0.1 & 99.9$\pm$0.1 & 99.8$\pm$0.1\\
%     \bottomrule
%     \end{tabular}
%     % }
%     \end{small}
%     %\vskip -0.1in
% \end{table}

\begin{table}[t!]
    \caption{Objective values as a \% of \textbf{OPT} across various methods repeated over 5 synthetic SSG datasets with parameters $N=500, M=10, m=1$ for varying regularization ($\xi$, on left) and $N=500, m=1, \xi =$ 1E6 for varying no. of targets (M, on right). The no. of clusters/strata is 50.}
    \label{tab:baseline_xi}
    %\vskip 0.05in
    \centering
    \begin{small}
    % \scalebox{1.0}{
    \begin{tabular}{ccccc|ccc}
    \toprule
    \multirow{2}*{Method} & \multicolumn{4}{c}{Regularization ($\xi$)} & \multicolumn{3}{c}{No. of Alternatives (M)}\\
    \cmidrule{2-8}
    & 1E3 & 1E4 & 1E5 & 1E6 & 10 & 25 & 50\\
    \midrule
         TT-GAD& 99.8$\pm$0.1 & 99.4$\pm$0.1 & 92.7$\pm$0.3 & 82.6$\pm$0.4 & 82.6 $\pm$ 0.4 & 90.2 $\pm$ 0.5 & 92.2 $\pm$ 0.3\\
         PGA& 98.9$\pm$0.1 & 98.1$\pm$0.2 & 87.7$\pm$0.5 & 49.2$\pm$0.9 & 49.2 $\pm$ 0.9 & 90.9 $\pm$ 0.7 & 93.5 $\pm$ 0.4\\
         Clustering\!& 99.9$\pm$0.1 & 99.9$\pm$ 0.1 & 99.8$\pm$0.1 & 99.6$\pm$0.1& 99.6 $\pm$ 0.1 & 99.5 $\pm$ 0.2 & 99.4 $\pm$ 0.1\\
         Sampling\!& 100.0$\pm$0.0 & 99.9$\pm$ 0.1 & 99.9$\pm$0.1 & 99.8$\pm$0.1& 99.8 $\pm$ 0.1 & 99.6 $\pm$ 0.1 & 99.5 $\pm$ 0.2\\
    \bottomrule
    \end{tabular}
    % }
    \end{small}
    \vskip -0.1in
\end{table}

% \begin{table}[t]
%     \caption{Objective values as a \% of \textbf{OPT} across various methods for 3 different settings upon varying the number of alternatives ($M$). For each setting, 5 different synthetic SSG datasets with underlying parameters $N=500, m=1, \xi =$ 1E6 were used.}
%     \label{tab:baseline_M}
%     %\vskip 0.05in
%     \centering
%     \begin{small}
%     % \begin{tabular}{|c|c|c|c|}
%     \begin{tabular}{cccc}
%     % \hline
%     \toprule
%     \multirow{2}*{Method} & \multicolumn{3}{c}{No. of Alternatives (M)}\\
%     \cmidrule{2-4}
%     & 10 & 25 & 50\\
%     % \hline
%     % \toprule
%     \midrule
%          TT-GAD & 82.6 $\pm$ 0.4 & 90.2 $\pm$ 0.5 & 92.2 $\pm$ 0.3\\
%          PGA & 49.2 $\pm$ 0.9 & 90.9 $\pm$ 0.7 & 93.5 $\pm$ 0.4\\
%          Clustering & 99.6 $\pm$ 0.1 & 99.5 $\pm$ 0.2 & 99.4 $\pm$ 0.1\\
%          Sampling & 99.8 $\pm$ 0.1 & 99.6 $\pm$ 0.1 & 99.5 $\pm$ 0.2\\
%     % \hline
%     \bottomrule
%     \end{tabular}
%     \end{small}
%     %\vskip -0.1in
% \end{table}

\begin{table*}[t]
    \caption{Average client choice probabilities for availing the facility across various settings. H denotes average over those 5\% of the clients in test data with the lowest choice probabilities, A denotes average over all the samples in the test set.}
    \label{tab:scores}
    %\vskip 0.05in
    \centering
    \begin{scriptsize}
    % \scalebox{0.9}{
    % \begin{tabular}{|c|c|c|c|c|c|c|c|c|c|c|c|c|}
    \begin{tabular}{ccccccc|cccccc}
    % \hline
    \toprule
    \multirow{3}*{$\xi$}& \multicolumn{6}{c}{\textbf{MC-FCP}} & \multicolumn{6}{c}{\textbf{MC-FLP}}\\ 
    % \cmidrule{2-13}
    \cline{2-13}
    & \multicolumn{2}{c}{m=7} & \multicolumn{2}{c}{m=10} & \multicolumn{2}{c|}{m=13} & \multicolumn{2}{c}{m=10} & \multicolumn{2}{c}{m=12} & \multicolumn{2}{c}{m=14}\\
    \cline{2-13}
    & H & A & H & A & H & A & H & A & H & A & H & A\\
    % \hline
    % \toprule
    % \cmidrule
         ERM & 0.069 & 0.692 & 0.150 & 0.719 & 0.420 & \textbf{0.758} & 0.175 & 0.741 & 0.426 & \textbf{0.769} & 0.469 & 0.772\\      
  1E2 & 0.069 & 0.692 & \textbf{0.417} & \textbf{0.751} & 0.4170 & 0.751 & \textbf{0.418} & \textbf{0.763} & 0.426 & \textbf{0.769} & 0.469 & 0.772 \\                                 
  1E3 & \textbf{0.093} & \textbf{0.697} & 0.416 & 0.747 & 0.417 & 0.757 & \textbf{0.418} & \textbf{0.763} & 0.425 & 0.768 & 0.533 & \textbf{0.777} \\                                  
 1E4 & \textbf{0.093} & \textbf{0.697} & 0.416 & 0.747 & \textbf{0.446} & 0.750 & 0.417 & 0.759 & \textbf{0.531} & 0.767 & \textbf{0.539} & 0.769\\ 
%  \hline
\bottomrule
    \end{tabular}
    % }
    \end{scriptsize}
    \vskip -0.1in
\end{table*}
\textbf{Baselines:} We use the following two methods as baselines: (i) Projected Gradient Ascent (\textbf{PGA}) on the formulation \eqref{prob:VR}, (ii) Two Time Scale Gradient Ascent Descent (\textbf{TT-GAD}) \citep{lin2020gradient} on the formulation \eqref{prob:DRO} where the inner minimization is convex and the outer maximization is non-concave. The numbers reported for our baselines are the \emph{best values} over 10 random initializations.

\subsection{SSG with Quantal Response (Synthetic Data)}
We generate attacker and defender utilities following~\cite{yang2012computing}; a complete description of data generation and choice of $f^*$ is in Appendix~\ref{sec:datagen}. We generate five datasets of size $N=500$ each in order to observe the variance of every result reported in this sub-section; this is also the largest size that we could solve exactly optimally within an hour using all the data points. First, we empirically validate Theorem \ref{thm:thm1} by plotting in Figure~\ref{fig:SSG}(left) the relative gap between the true utility $E_{P^*}[F(\cdot, \bx)]$ of the decisions output by running DRO on the output $(\bx^*)_{i \in [N]}$ of the true $f^*$ (assumed fixed linear function) versus on $(\widehat{\bx})_{i \in [N]}$ from learned $f$, as the training data size $N_T$ for learning $f$ is varied.

Next, we focus on only using $(\widehat{\bx})_{i \in [N]}$ and the optimal solution for $(\widehat{\bx})_{i \in [N]}$ is named as \textbf{OPT}.
Figure~\ref{fig:SSG} (middle) demonstrates empirically that the solution of the optimization problem on cluster centers converges to \textbf{OPT} with only a few number of clusters and the time for the optimization shown in Figure~\ref{fig:SSG} (right) is reasonable.    
Next, 
the results in Table \ref{tab:cluster_vs_sampling} show a comparison of the clustering and stratified sampling approach using the metric of how close they get to \textbf{OPT}. %Consistent with our bounds, 
We find stratified sampling to be better than clustering in almost all cases.
Table~\ref{tab:baseline_xi} (left) demonstrates that the baselines struggle to reach the optimal value objective as the magnitude of regularization ($\xi$) increases. Intuitively, as $\xi$ increases the variance term (which is highly non-convex) contributes more to the objective and stationary points reached by the baselines are quite sub-optimal compared to the global optimal. In addition, with increasing $\xi$ the ambiguity set becomes larger possibly containing more local optimal solutions. We also study varying the parameter $M$ ($m$ fixed) and Table~\ref{tab:baseline_xi} (right) shows that our approaches outperform the gradient-based baselines across different values of $M$.

\begin{figure}[t]
    %\vskip 0.2in
    \centering
    \includegraphics[scale=0.47]{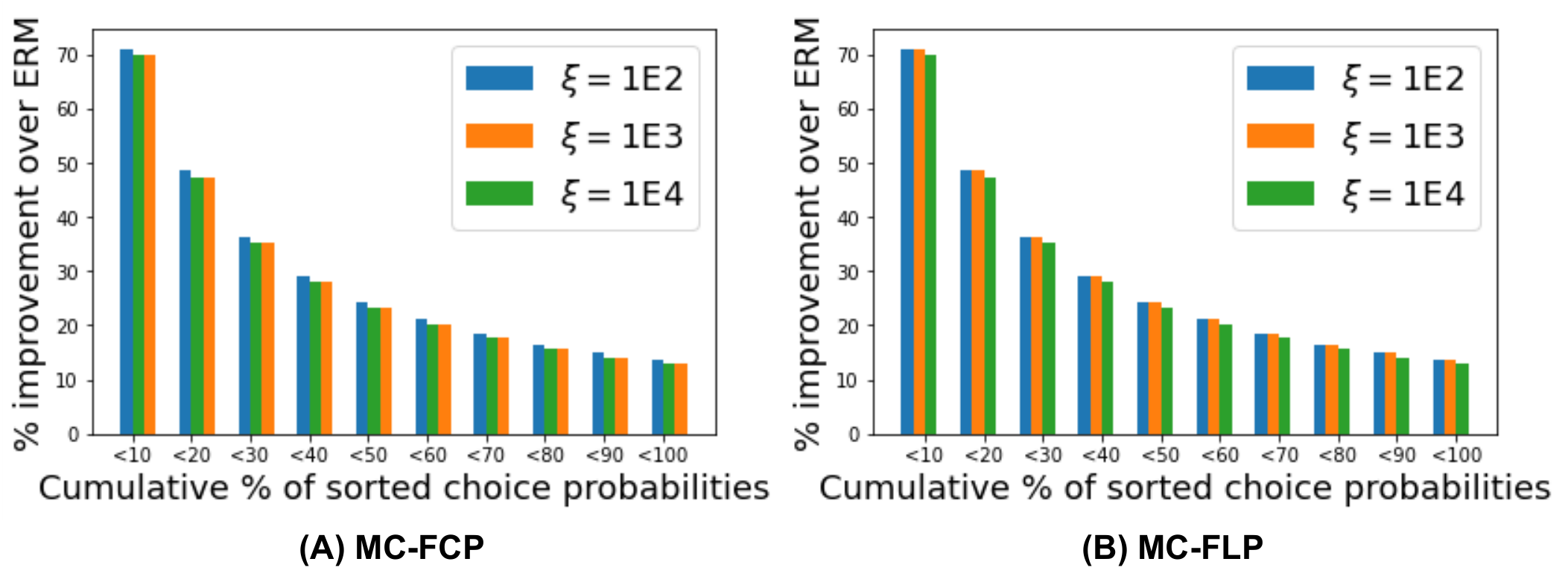}
    \caption{The bar plots show the percentage improvement of choice probabilities of our clustering approach over ERM over cumulative buckets of choice probabilities (see text) in ascending order with varying regularization 
    $\xi$ with fixed $m$=$10$.}
    \label{fig:cumulative}
    \vskip -0.05in
\end{figure}

\subsection{\textbf{MC-FLP} and \textbf{MC-FCP} (Real Data)}
\textbf{P\&R-NYC Dataset : } We use a large and challenging Park-and-ride (P\&R) dataset collected in New York City, which provides utilities for 82341 clients ($N$) for 59 park and ride locations ($M$), along with their incumbent utilities for competing facilities~\citep{holguin2012new}; this data was directly used for \textbf{MC-FLP}. For \textbf{MC-FCP} we additionally use generated costs, which are not present in the P\&R data. A complete description of data generation is in Appendix~\ref{sec:datagen}.
%While both are optimization tasks on non convex objectives, \textbf{MC-FLP} deals with a decision variable that is discrete and has an exponential domain, whereas \textbf{MC-FCP} optimizes over a constrained continuous decision variable. This further demonstrates the applicability of our model to a wide range of useful fractional problems. 
Both these problems could not be solved at all with our MISOCP alone (no clustering) as the optimization did not finish in 24 hours. Hence, we use our clustering approach with 50 clusters.

We compare to a baseline solution of the non-robust empirical risk minimization (ERM) (also called the sample average approximation or SAA). We split the data (randomly) into training and test (80:20) and then obtain the decision $\widehat{\bz}$ using the training data. Then, we obtain the choice probability (recall this as probability of a client choosing any of the firm's facility) for every client in the test data for the decision $\widehat{\bz}$. We compare the performance of ERM and our method for clients (in test set) bucketed by choice probabilities in
Figure~\ref{fig:cumulative} and Table~\ref{tab:scores}. The buckets are made by sorting all the clients in the test set by ascending choice probabilities and then considering cumulative buckets as the first 5\%, the first 10\% and so on.
In Fig.~\ref{fig:cumulative}, we show that the average  percentage improvement in choice probabilities of our robust approach over ERM is considerably higher for clients with lower choice probability (these clients contribute least to the objective) and the over all average  over all clients (rightmost on x-axis) is slightly better than ERM. In Table~\ref{tab:scores}, note the significantly increased probabilities for low choice probability clients (low choice prob. using $\widehat{\bz}$) without compromising the average performance across all clients for varying $M$. Additional results are in Appendix~\ref{sec:real_data}.

\section{Conclusion}
We presented an approach for a distributionally robust solution to a class of non-convex sum of fractional solutions, with guaranteed near global optimality. We presented application to three prominent practical problems and the connection to discrete choice models likely opens up possibilities of applying our approach to even more problems. Our scalability is limited to hundred thousand data points, further investigation on how to cluster or stratify more effectively (than k-means) to achieve even more scalability is a possible future research direction. We hope that our work inspires tackling robust formulation of more classes of non-convex problems, with guarantees for global optimality.

\bibliography{refs}

\begin{thebibliography}{8}
\providecommand{\natexlab}[1]{#1}
\providecommand{\url}[1]{\texttt{#1}}
\expandafter\ifx\csname urlstyle\endcsname\relax
  \providecommand{\doi}[1]{doi: #1}\else
  \providecommand{\doi}{doi: \begingroup \urlstyle{rm}\Url}\fi

\bibitem[Author(2021)]{anonymous}
Author, N.~N.
\newblock Suppressed for anonymity, 2021.

\bibitem[Duda et~al.(2000)Duda, Hart, and Stork]{DudaHart2nd}
Duda, R.~O., Hart, P.~E., and Stork, D.~G.
\newblock \emph{Pattern Classification}.
\newblock John Wiley and Sons, 2nd edition, 2000.

\bibitem[Kearns(1989)]{kearns89}
Kearns, M.~J.
\newblock \emph{Computational Complexity of Machine Learning}.
\newblock PhD thesis, Department of Computer Science, Harvard University, 1989.

\bibitem[Langley(2000)]{langley00}
Langley, P.
\newblock Crafting papers on machine learning.
\newblock In Langley, P. (ed.), \emph{Proceedings of the 17th International
  Conference on Machine Learning (ICML 2000)}, pp.\  1207--1216, Stanford, CA,
  2000. Morgan Kaufmann.

\bibitem[Michalski et~al.(1983)Michalski, Carbonell, and
  Mitchell]{MachineLearningI}
Michalski, R.~S., Carbonell, J.~G., and Mitchell, T.~M. (eds.).
\newblock \emph{Machine Learning: An Artificial Intelligence Approach, Vol. I}.
\newblock Tioga, Palo Alto, CA, 1983.

\bibitem[Mitchell(1980)]{mitchell80}
Mitchell, T.~M.
\newblock The need for biases in learning generalizations.
\newblock Technical report, Computer Science Department, Rutgers University,
  New Brunswick, MA, 1980.

\bibitem[Newell \& Rosenbloom(1981)Newell and Rosenbloom]{Newell81}
Newell, A. and Rosenbloom, P.~S.
\newblock Mechanisms of skill acquisition and the law of practice.
\newblock In Anderson, J.~R. (ed.), \emph{Cognitive Skills and Their
  Acquisition}, chapter~1, pp.\  1--51. Lawrence Erlbaum Associates, Inc.,
  Hillsdale, NJ, 1981.

\bibitem[Samuel(1959)]{Samuel59}
Samuel, A.~L.
\newblock Some studies in machine learning using the game of checkers.
\newblock \emph{IBM Journal of Research and Development}, 3\penalty0
  (3):\penalty0 211--229, 1959.

\end{thebibliography}


\begin{thebibliography}{32}
\providecommand{\natexlab}[1]{#1}
\providecommand{\url}[1]{\texttt{#1}}
\expandafter\ifx\csname urlstyle\endcsname\relax
  \providecommand{\doi}[1]{doi: #1}\else
  \providecommand{\doi}{doi: \begingroup \urlstyle{rm}\Url}\fi

\bibitem[Aros-Vera et~al.(2013)Aros-Vera, Marianov, and Mitchell]{aros2013p}
Felipe Aros-Vera, Vladimir Marianov, and John~E Mitchell.
\newblock p-hub approach for the optimal park-and-ride facility location
  problem.
\newblock \emph{European Journal of Operational Research}, 226\penalty0
  (2):\penalty0 277--285, 2013.

\bibitem[Ben-Tal et~al.(2013)Ben-Tal, Den~Hertog, De~Waegenaere, Melenberg, and
  Rennen]{ben2013robust}
Aharon Ben-Tal, Dick Den~Hertog, Anja De~Waegenaere, Bertrand Melenberg, and
  Gijs Rennen.
\newblock Robust solutions of optimization problems affected by uncertain
  probabilities.
\newblock \emph{Management Science}, 59\penalty0 (2):\penalty0 341--357, 2013.

\bibitem[Benati and Hansen(2002)]{Benati2002maximum}
Stefano Benati and Pierre Hansen.
\newblock The maximum capture problem with random utilities: Problem
  formulation and algorithms.
\newblock \emph{European Journal of Operational Research}, 143\penalty0 (3),
  2002.

\bibitem[Blanchet and Murthy(2019)]{blanchet2019quantifying}
Jose Blanchet and Karthyek Murthy.
\newblock Quantifying distributional model risk via optimal transport.
\newblock \emph{Mathematics of Operations Research}, 44\penalty0 (2):\penalty0
  565--600, 2019.

\bibitem[Bonami and Tramontani(2015)]{bonami2015recent}
Pierre Bonami and A~Tramontani.
\newblock Recent improvement to misocp in cplex.
\newblock \emph{INFORMS, Philadelphia, PA, USA, Tech. Rep}, 2015.

\bibitem[Dam et~al.(2021)Dam, Ta, and Mai]{DAM2021}
Tien~Thanh Dam, Thuy~Anh Ta, and Tien Mai.
\newblock Submodularity and local search approaches for maximum capture
  problems under generalized extreme value models.
\newblock \emph{European Journal of Operational Research}, 2021.
\newblock ISSN 0377-2217.

\bibitem[Duchi and Namkoong(2019)]{duchi2019variance}
John Duchi and Hongseok Namkoong.
\newblock Variance-based regularization with convex objectives.
\newblock \emph{The Journal of Machine Learning Research}, 20\penalty0
  (1):\penalty0 2450--2504, 2019.

\bibitem[Duchi et~al.(2021)Duchi, Glynn, and Namkoong]{duchi2021statistics}
John~C Duchi, Peter~W Glynn, and Hongseok Namkoong.
\newblock Statistics of robust optimization: A generalized empirical likelihood
  approach.
\newblock \emph{Mathematics of Operations Research}, 2021.

\bibitem[Esfahani and Kuhn(2018)]{esfahani2018data}
Peyman~Mohajerin Esfahani and Daniel Kuhn.
\newblock Data-driven distributionally robust optimization using the
  wasserstein metric: Performance guarantees and tractable reformulations.
\newblock \emph{Mathematical Programming}, 171\penalty0 (1):\penalty0 115--166,
  2018.

\bibitem[Fang et~al.(2017)Fang, Nguyen, Pickles, Lam, Clements, An, Singh,
  Schwedock, Tambe, and Lemieux]{fang2017paws}
Fei Fang, Thanh~H Nguyen, Rob Pickles, Wai~Y Lam, Gopalasamy~R Clements, Bo~An,
  Amandeep Singh, Brian~C Schwedock, Milin Tambe, and Andrew Lemieux.
\newblock {PAWS—A deployed game-theoretic application to combat poaching}.
\newblock \emph{AI Magazine}, 38\penalty0 (1), 2017.

\bibitem[Freire et~al.(2016)Freire, Moreno, and Yushimito]{Freire2016branch}
Alexandre~S Freire, Eduardo Moreno, and Wilfredo~F Yushimito.
\newblock A branch-and-bound algorithm for the maximum capture problem with
  random utilities.
\newblock \emph{European journal of operational research}, 252\penalty0 (1),
  2016.

\bibitem[Haghtalab et~al.(2016)Haghtalab, Fang, Nguyen, Sinha, Procaccia, and
  Tambe]{1501625}
Nika Haghtalab, Fei Fang, Thanh~H. Nguyen, Arunesh Sinha, Ariel~D. Procaccia,
  and Milind Tambe.
\newblock Three strategies to success: Learning adversary models in security
  games.
\newblock In \emph{25th International Joint Conference on Artificial
  Intelligence (IJCAI)}, 2016.

\bibitem[Holguin-Veras et~al.(2012)Holguin-Veras, Reilly, Aros-Vera,
  et~al.]{holguin2012new}
Jose Holguin-Veras, Jack Reilly, Felipe Aros-Vera, et~al.
\newblock New york city park and ride study.
\newblock Technical report, University Transportation Research Center, 2012.

\bibitem[Lam(2016)]{lam2016robust}
Henry Lam.
\newblock Robust sensitivity analysis for stochastic systems.
\newblock \emph{Mathematics of Operations Research}, 41\penalty0 (4):\penalty0
  1248--1275, 2016.

\bibitem[Li et~al.(2019)Li, Webster, Mason, and Kempf]{li2019product}
Hongmin Li, Scott Webster, Nicholas Mason, and Karl Kempf.
\newblock Product-line pricing under discrete mixed multinomial logit demand:
  winner—2017 m\&som practice-based research competition.
\newblock \emph{Manufacturing \& Service Operations Management}, 21\penalty0
  (1):\penalty0 14--28, 2019.

\bibitem[Lin et~al.(2020)Lin, Jin, and Jordan]{lin2020gradient}
Tianyi Lin, Chi Jin, and Michael Jordan.
\newblock On gradient descent ascent for nonconvex-concave minimax problems.
\newblock In \emph{International Conference on Machine Learning}, pages
  6083--6093. PMLR, 2020.

\bibitem[Mai and Lodi(2020)]{Mai2020multicut}
Tien Mai and Andrea Lodi.
\newblock A multicut outer-approximation approach for competitive facility
  location under random utilities.
\newblock \emph{European Journal of Operational Research}, 284\penalty0 (3),
  2020.

\bibitem[Maurer and Pontil(2009)]{Maurer2009EmpiricalBB}
Andreas Maurer and Massimiliano Pontil.
\newblock Empirical bernstein bounds and sample-variance penalization.
\newblock In \emph{COLT}, 2009.

\bibitem[McCormick(1976)]{mccormick1976computability}
Garth~P McCormick.
\newblock Computability of global solutions to factorable nonconvex programs:
  Part i—convex underestimating problems.
\newblock \emph{Mathematical programming}, 10\penalty0 (1):\penalty0 147--175,
  1976.

\bibitem[Pflug and Wozabal(2007)]{pflug2007ambiguity}
Georg Pflug and David Wozabal.
\newblock Ambiguity in portfolio selection.
\newblock \emph{Quantitative Finance}, 7\penalty0 (4):\penalty0 435--442, 2007.

\bibitem[Qi et~al.(2021)Qi, Guo, Xu, Jin, and Yang]{qi2021online}
Qi~Qi, Zhishuai Guo, Yi~Xu, Rong Jin, and Tianbao Yang.
\newblock An online method for a class of distributionally robust optimization
  with non-convex objectives.
\newblock \emph{Advances in Neural Information Processing Systems}, 34, 2021.

\bibitem[Rahimian and Mehrotra(2019)]{rahimian2019distributionally}
Hamed Rahimian and Sanjay Mehrotra.
\newblock Distributionally robust optimization: A review.
\newblock \emph{arXiv preprint arXiv:1908.05659}, 2019.

\bibitem[Reeve and Kaban(2020)]{reeve2020optimistic}
Henry~WJ Reeve and Ata Kaban.
\newblock Optimistic bounds for multi-output prediction.
\newblock In \emph{37th International Conference on Machine Learning (ICML
  2020)}, 2020.

\bibitem[Schaible(1995)]{schaible1995fractional}
Siegfried Schaible.
\newblock Fractional programming.
\newblock In \emph{Handbook of global optimization}, pages 495--608. Springer,
  1995.

\bibitem[Shafieezadeh-Abadeh et~al.(2015)Shafieezadeh-Abadeh, Esfahani, and
  Kuhn]{shafieezadeh2015distributionally}
Soroosh Shafieezadeh-Abadeh, Peyman~Mohajerin Esfahani, and Daniel Kuhn.
\newblock Distributionally robust logistic regression.
\newblock \emph{arXiv preprint arXiv:1509.09259}, 2015.

\bibitem[Sinha et~al.(2018)Sinha, Fang, An, Kiekintveld, and
  Tambe]{sinha2018stackelberg}
Arunesh Sinha, Fei Fang, Bo~An, Christopher Kiekintveld, and Milind Tambe.
\newblock Stackelberg security games: Looking beyond a decade of success.
\newblock In \emph{27th International Joint Conference on Artificial
  Intelligence (IJCAI)}, 2018.

\bibitem[Staib et~al.(2019)Staib, Wilder, and
  Jegelka]{staib2019distributionally}
Matthew Staib, Bryan Wilder, and Stefanie Jegelka.
\newblock Distributionally robust submodular maximization.
\newblock In \emph{The 22nd International Conference on Artificial Intelligence
  and Statistics}, pages 506--516. PMLR, 2019.

\bibitem[Tambe(2011)]{tambe2011security}
Milind Tambe.
\newblock \emph{Security and game theory: algorithms, deployed systems, lessons
  learned}.
\newblock Cambridge university press, 2011.

\bibitem[Train(2003)]{Trai03}
Kenneth Train.
\newblock \emph{Discrete Choice Methods with Simulation}.
\newblock Cambridge University Press, 2003.

\bibitem[Xu(2016)]{xu2016mysteries}
Haifeng Xu.
\newblock The mysteries of security games: Equilibrium computation becomes
  combinatorial algorithm design.
\newblock In \emph{Proceedings of the 2016 ACM Conference on Economics and
  Computation}, pages 497--514, 2016.

\bibitem[Yan et~al.(2020)Yan, Xu, Zhang, Xiaoyu, and Yang]{yan2020stochastic}
Yan Yan, Yi~Xu, Lijun Zhang, Wang Xiaoyu, and Tianbao Yang.
\newblock Stochastic optimization for non-convex inf-projection problems.
\newblock In \emph{International Conference on Machine Learning}, pages
  10660--10669. PMLR, 2020.

\bibitem[Yang et~al.(2012)Yang, Ordonez, and Tambe]{yang2012computing}
Rong Yang, Fernando Ordonez, and Milind Tambe.
\newblock Computing optimal strategy against quantal response in security
  games.
\newblock In \emph{AAMAS}, pages 847--854, 2012.

\end{thebibliography}
\bibliographystyle{plainnat}

\newpage
\appendix

\section{Piece-wise Linear Approximation (PWLA)}
\label{appd:PWLA}
Recall the general functional form $$F(\bz,\widehat{\bx}_i) = \frac{\sum_j n(z_j, \widehat{\bx}_i )}{\sum_j d(z_j, \widehat{\bx}_i)}$$ and note that both the numerator and denominator are separable in the components of the decision variables $\bz$. 
%All our functions are continuous and differentiable and thus have finite partial derivatives. 
Let assume each variable $z_j$ can vary in the interval $[L_j,U_j]$, the idea here is to divide each interval $[L_j,U_j]$ into $K$ equal sub-intervals of size $(U_j-L_j)/K$ and approximate $z_j$ by $K$ binary variables $v_{jk} \in \{0,1\}$ as
\[
z_j =L_j \frac{U_j-L_j}{K}\sum_{k\in[K]} v_{jk},
\]
where $v_{jk} \in \{0,1\}$ satisfying $
v_{ik} \geq v_{j,{k+1}}$ for $k = 1,2,\ldots,K-1$. We then approximate $n(z_j, \widehat{\bx}_i)$ and $d(z_j, \widehat{\bx}_i)$ as 
\begin{align}
   n(z_j, \widehat{\bx}_i) \approx  \widehat{n}(z_j, \widehat{\bx}_i)  = n\left(L_j+ \floor*{z_jK/(U_j-L_j)}\frac{U_j-L_j}{K}, \widehat{\bx}_i\right) =  n(L_j, \widehat{\bx}_i) + \frac{U_j-L_j}{K}\sum_{k\in [K]}  \gamma^{ni}_{jk}{v_{jk}}, \nonumber\\
d(z_j, \widehat{\bx}_i) \approx  \widehat{d}(z_j, \widehat{\bx}_i) = d\left(L_j+ \floor*{z_jK/(U_j-L_j)}\frac{U_j-L_j}{K}, \widehat{\bx}_i\right) =  d(L_j, \widehat{\bx}_i) + \frac{U_j-L_j}{K}\sum_{k\in [K]}  \gamma^{di}_{jk}{v_{jk}} \nonumber
\end{align}
where  $\gamma^{ni}_{jk}$ and $\gamma^{di}_{jk}$ are the slopes of the approximate linear functions in $[L_j + (U_j-L_j)(k-1)/K;L_j + (U_j-L_j)(k)/K]$, $\forall k=1,\ldots, K$, computed as
\begin{align}
    \gamma^{ni}_{j,k+1} = \frac{K}{U_j-L_j}\left( n\left(L_j + \frac{(U_j-L_j)(k+1)}{K},\widehat{\bx}_i \right) -n\left(L_j + \frac{(U_j-L_j)(k)}{K},\widehat{\bx}_i \right)  \right),\; k=0,\ldots,K-1 \nonumber \\
       \gamma^{di}_{j,k+1} = \frac{K}{U_j-L_j}\left( d\left(L_j + \frac{(U_j-L_j)(k+1)}{K},\widehat{\bx}_i \right) -d\left(L_j + \frac{(U_j-L_j)(k)}{K},\widehat{\bx}_i \right)  \right),\; k=0,\ldots,K-1. \nonumber
\end{align}
We  can then approximate $F(\bz, \widehat{\bx}_i)$ as 
\[
F(\bz, \widehat{\bx}_i) \approx \frac{ \sum_{j}(n(L_j, \widehat{\bx}_i) + \frac{U_j-L_j}{K}\sum_{k\in [K]}  \gamma^{ni}_{jk}{v_{jk}})}{ \sum_{j} (d(L_j, \widehat{\bx}_i) + \frac{U_j-L_j}{K}\sum_{k\in [K]}  \gamma^{di}_{jk}{v_{jk}})}.
\]
The transformed/approximated problem will have the following parameters and variables
\begin{itemize}
    \item $\ba_i = \bigg[\gamma^{ni}_{jk} \Big|\; j \in [M], k \in [K]\bigg]$
    \item $\ba'_i = \bigg[\gamma^{di}_{jk} \Big|\; j \in [M], k \in [K]\bigg]$
    \item $b_i = \sum_{j \in [M]} n(L_j, \widehat{\bx}_i)$
    \item $b'_i = \sum_{j \in [M]} d(L_j, \widehat{\bx}_i)$
    \item $\bv \in \cV \myeq{}  \bigg\{v_{jk}\Big|\; v_{jk} \in \{0,1\}, v_{jk} \geq v_{j,k+1}, j \in [M], k \in [K] \bigg\}$.
\end{itemize}

 %###################################

 \section{Proof of Theorem~\ref{thm:thm1}}

\begin{theorem*} 
If the optimal decision when solving \ref{prob:DRO} is $\bz^{**}$ using $\bx^*_i$'s and $\widehat{\bz}^{**}$ using $\widehat{\bx}_i$'s, $F$ is $\tau$-Lipschitz in $\bx$, $X$ is bounded, and a scaled $\cL$ upper bounds $||\cdot||_2$ (i.e., $||\bx-\bx'||_2 \leq \max(k\cL(\bx,\bx'), \epsilon)$ for constants $k, \epsilon$) then, 
the following holds with probability $1-2\delta - 2 \delta_1$: $\bbE_{P^*}[ F(\widehat{\bz}^{**},\bx )] \geq \bbE_{P^*}[ F(\bz^{**},\bx )] - C/\sqrt{N} - (1 + 2\sqrt{\xi})\tau \epsilon - \epsilon_N - \epsilon_{N_T}$,
where $\epsilon_{K} = C_1 \cR_{K}(\cL \circ \cH) + C_2/\sqrt{K}$ and $\cR_{K}$ is the Rademacher complexity with $K$ samples and $C, C_1, C_2$ are constants dependent on $\delta, \delta_1, \xi, k, \tau$.
\end{theorem*}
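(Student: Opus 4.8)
Write $\widehat{R}^{*}(\bz) = \min_{\bp \in \Delta_{\xi,n}} \sum_i p_i F(\bz,\bx^*_i)$ and $\widehat{R}(\bz) = \min_{\bp \in \Delta_{\xi,n}} \sum_i p_i F(\bz,\widehat{\bx}_i)$ for the two empirical robust objectives, so that $\bz^{**} \in \arg\max_\bz \widehat{R}^{*}(\bz)$ and $\widehat{\bz}^{**} \in \arg\max_\bz \widehat{R}(\bz)$. The plan is to connect the two true utilities through a chain of five inequalities,
\begin{align*}
\bbE_{P^*}[F(\widehat{\bz}^{**},\bx)]
&\ge \widehat{R}^{*}(\widehat{\bz}^{**}) - \tfrac{C}{2\sqrt N}
\ge \widehat{R}(\widehat{\bz}^{**}) - g - \tfrac{C}{2\sqrt N}
\ge \widehat{R}(\bz^{**}) - g - \tfrac{C}{2\sqrt N}\\
&\ge \widehat{R}^{*}(\bz^{**}) - 2g - \tfrac{C}{2\sqrt N}
\ge \bbE_{P^*}[F(\bz^{**},\bx)] - 2g - \tfrac{C}{\sqrt N},
\end{align*}
where $g \myeq \sup_\bz |\widehat{R}(\bz)-\widehat{R}^{*}(\bz)|$ is the sample-substitution error. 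The middle inequality is optimality of $\widehat{\bz}^{**}$ for $\widehat{R}$; the first and last are the two directions of a DRO confidence bound; the remaining two only invoke the definition of $g$. It then remains to bound $C/\sqrt N$ and to show that $2g$ collapses to $(1+2\sqrt\xi)\tau\epsilon + \epsilon_N + \epsilon_{N_T}$ after absorbing numerical constants into $C_1,C_2$.

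\textbf{The $C/\sqrt N$ term.} For both confidence inequalities I would invoke the variance-regularization guarantee of \citet{duchi2019variance}: with probability $1-\delta$, uniformly over $\bz$, the $\chi^2$-robust empirical value equals the variance-regularized empirical mean and sandwiches $\bbE_{P^*}[F(\bz,\bx)]$ to within $O(1/\sqrt N)$, the constant depending on $\delta$, $\xi$, and the (finite) range of $F$ guaranteed by boundedness of $X$ and $\tau$-Lipschitzness. The uniform-over-$\bz$ form is essential because $\widehat{\bz}^{**}$ and $\bz^{**}$ are data dependent; the two directions, applied at these two decisions, account for probability budget $2\delta$ and the constant $C$.

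\textbf{The sample-substitution error $g$.} Since the two problems share the ambiguity set, for every $\bz$,
\begin{align*}
|\widehat{R}(\bz)-\widehat{R}^{*}(\bz)|
&\le \sup_{\bp \in \Delta_{\xi,n}} \sum_i p_i\, |F(\bz,\widehat{\bx}_i)-F(\bz,\bx^*_i)|
\le \tau \sup_{\bp \in \Delta_{\xi,n}} \sum_i p_i\, \|\widehat{\bx}_i-\bx^*_i\|_2,
\end{align*}
by $\tau$-Lipschitzness of $F$ in $\bx$. Writing $\bp = \mathbf{1}/N + (\bp-\mathbf{1}/N)$ splits the right-hand side into a mean part $\frac{\tau}{N}\sum_i\|\widehat{\bx}_i-\bx^*_i\|_2$ and a deviation part; bounding the latter by Cauchy--Schwarz against $\|\bp-\mathbf{1}/N\|_2 \le \sqrt{2\xi}/N$ is exactly what produces the $\sqrt{\xi}$ weighting. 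I would then apply the loss-domination hypothesis $\|\bx-\bx'\|_2 \le \max(k\cL(\bx,\bx'),\epsilon) \le k\cL(\bx,\bx')+\epsilon$ per sample: the deterministic $\epsilon$ pieces collect into the $(1+2\sqrt{\xi})\tau\epsilon$ term, while the $k\cL$ pieces reduce to the average loss $\frac1N\sum_i \cL(f(b_i),f^*(b_i))$.

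\textbf{Generalization and the main obstacle.} The residual loss is controlled in two uniform-convergence steps, each of the form $\epsilon_K = C_1\cR_K(\cL\circ\cH)+C_2/\sqrt K$: convergence on the $N$ evaluation features bounds $\frac1N\sum_i\cL(f(b_i),f^*(b_i))$ by the population loss of $f$ plus $\epsilon_N$; convergence on the $N_T$ training samples, with ERM optimality of $f$ and the zero Bayes risk of $f^*\in\cF$, bounds the population loss of $f$ by $\epsilon_{N_T}$. These consume the budget $2\delta_1$, completing the count $1-2\delta-2\delta_1$. The main obstacle is the interaction of the two randomness sources: the DRO concentration and the prediction errors both depend on the shared features $b_i$, so the confidence step cannot treat $\widehat{\bz}^{**}$ as fixed---this is precisely why the uniform-over-$\bz$ DRO guarantee is required---and the deviation part of $g$ must be carried through the non-smooth $\min$-over-ambiguity-set objective so as to extract the sharp $\sqrt{\xi}$ factor rather than a loose $\|\cdot\|_\infty$ bound.
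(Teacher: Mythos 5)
Your proposal is correct and delivers the theorem, but the concentration step is routed differently from the paper, and the comparison is worth recording. Your bound on the substitution error $g$ is, in content, identical to the paper's: the paper proves exactly this estimate pointwise ``for any feasible $\bp,\bz$'' (its Eq.~(\ref{eq:proofimp})) via the same mean/deviation split, except that it uses H\"older with the $(\infty,1)$ pairing, $\max_i|p_i-1/N|\leq \sqrt{\xi}/N$, so the deviation term is immediately $\sqrt{\xi}\cdot\frac{1}{N}\sum_i\|\widehat{\bx}_i-\bx^*_i\|_2$, i.e., directly the average loss; your Cauchy--Schwarz variant needs one extra line, e.g.\ $\|\bd\|_2\leq\|\bd\|_1$ for the error vector $\bd$, before it ``reduces to the average loss,'' since the Rademacher machinery controls $\frac1N\sum_i\cL$ and not the root-mean-square error---as written this is a small gap, easily patched. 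The genuine divergence is downstream: the paper never forms the value functions $\widehat{R},\widehat{R}^*$ with a uniform DRO sandwich. It chains through the four saddle-point quantities using that $\bp^{**},\widehat{\bp}^{**}$ are inner minimizers and $\widehat{\bz}^{**}$ the outer maximizer, strips the weights by writing $p_i=1/N+(p_i-1/N)$, and finishes with two \emph{pointwise} Hoeffding bounds on the plain empirical means at $\bz^{**}$ and $\widehat{\bz}^{**}$ (budget $2\delta_1$), rather than invoking the uniform-over-$\bz$ guarantee of \citet{duchi2019variance}. Your insistence on uniformity is in fact a gain in rigor: $\bz^{**}$ and $\widehat{\bz}^{**}$ are functions of the very sample $\bx^*_1,\ldots,\bx^*_N$ being concentrated, so the paper's pointwise Hoeffding step is strictly valid only with uniformity or an independent evaluation sample---a subtlety the paper glosses over and you correctly flag. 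The price of your route is that $C$ inherits the complexity of the class $\{F(\bz,\cdot):\bz\}$ and the boundedness/covering conditions required by the uniform variance-regularization results, whereas the paper's $C$ comes from a single Hoeffding application with the diameter $d_X$. Finally, your chain yields $2g\approx 2(1+\sqrt{\xi})\tau(\cdot)$ where the theorem states $(1+2\sqrt{\xi})\tau(\cdot)$; this is immaterial since the paper explicitly absorbs constants at that point (and its own final bookkeeping of the $2\psi$ term is itself loose), and likewise your swap of the roles of $\delta$ and $\delta_1$ relative to the paper is purely notational.
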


 \begin{proof}
 We first list the mild assumptions: (1) $||\bx'-\bx||_2 \leq \max(k \cL(\bx',\bx), \epsilon)$ for some constant $k$ and a small constant $\epsilon$ and (2) space $X$ (that contains $\widehat{\bx}, \bx^*$) is bounded with a diameter $d_X$. The $k$ in the first assumption can be found since the space $X$ is bounded, and for close $\bx',\bx$, if needed, $\epsilon$ provides an upper bound. With this, it is easy to check that $$(1/N)\sum_i ||\bx^*_i-\widehat{\bx}_i||_2 \leq (1/N)\sum_i \max(k \cL(\bx^*_i,\widehat{\bx}_i), \epsilon) \leq \epsilon + (1/N)\sum_i k \cL(\bx^*_i,\widehat{\bx}_i) ).$$
 
We also have  $(1/N)\sum_i \cL(\widehat{\bx}_i, \bx^*_i) \leq \bbE [\cL_f] +  \epsilon_{N}$, where $\epsilon_{N}$ is of the form $C_1 \cR_{N}(\cL \circ \mathcal{F}) + \frac{C_2}{\sqrt{N}}$ for constants $C_1, C_2$ that depend on the probability term $\delta$, $\bbE[\cL_f]$ is the expected risk of function $f$, $\cR_{N}$ is the Rademacher complexity with $N$ samples, and $\cR_{N}(\cL \circ \mathcal{F})$ is well-defined for vector valued output of functions in $\mathcal{F}$ using each component of the output (see Proposition 1 in~\cite{reeve2020optimistic}). Next, the true risk of $f^*$ is assumed zero (text above the theorem in main paper): $\bbE[\cL_{f^*}] = 0$. Then, the ERM training using $N_T$ training data provides a high probability $1- \delta$ guarantee 
 which can be stated as $\bbE [\cL_f] \leq  \epsilon_{N_T}$, where $\epsilon_{N_T}$ is defined is same way as $\epsilon_N$ with $N_T$ replacing $N$.
 
 With this, we further have with probability $1 - 2 \delta$
 $$(1/N)\sum_i ||\bx^*_i-\widehat{\bx}_i||_2 \leq \epsilon + k(\epsilon_N + \epsilon_{N_T}).$$
 
 Let $\bz^{**}$ and $p^{**}_1, \ldots, p^{**}_N$ be the optimal solution found for $\bx^*_1, \ldots, \bx^*_N$. Note that $\bz^{**}$ and $p^{**}_1, \ldots, p^{**}_N$ is also a feasible point for the optimization with $\widehat{\bx}_1, \ldots, \widehat{\bx}_N$. We know that 
 \begin{align*}
|\sum_i  p^{**}_i F(\bz^{**}, \widehat{\bx}_i) - \sum_i  p^{**}_i F(\bz^{**}, \bx^*_i)| = |\sum_i  p^{**}_i (F(\bz^{**}, \widehat{\bx}_i) -  F(\bz^{**}, \bx^*_i))| \\
= |\sum_i  (p^{**}_i - 1/N)(F(\bz^{**}, \widehat{\bx}_i) -  F(\bz^{**}, \bx^*_i)) + \sum_i  (1/N)(F(\bz^{**}, \widehat{\bx}_i) -  F(\bz^{**}, \bx^*_i))| \end{align*}

%##########################

We know that for any feasible $\bp, \bz$ 
 \begin{align*}
|\sum_i  p_i F(\bz, \widehat{\bx}_i) - \sum_i  p_i F(\bz, \bx^*_i)| = |\sum_i  p_i (F(\bz, \widehat{\bx}_i) -  F(\bz, \bx^*_i))| \\
= |\sum_i  (p_i - 1/N)(F(\bz, \widehat{\bx}_i) -  F(\bz, \bx^*_i)) + \sum_i  (1/N)(F(\bz, \widehat{\bx}_i) -  F(\bz, \bx^*_i))| \end{align*}
 
 Note that by Lispschitzness, 
 \begin{equation}
|\sum_i  (1/N)(F(\bz, \widehat{\bx}_i) -  F(\bz, \bx^*_i))| \leq \sum_i  (1/N)\tau|| \widehat{\bx}_i -   \bx^*_i||_2 \leq \tau (\epsilon +  k (\epsilon_N + \epsilon_{N_T})).     \label{eq:empricial}
 \end{equation}

 Also, $|\sum_i  (p_i - 1/N)(F(\bz, \widehat{\bx}_i) -  F(\bz, \bx^*_i))| \leq \sum_i |  (p_i - 1/N)(F(\bz, \widehat{\bx}_i) -  F(\bz, \bx^*_i))|$ and by Holder's inequality with $\infty, 1$ norm we get
 $$
  \sum_i |  (p_i - 1/N)(F(\bz, \widehat{\bx}_i) -  F(\bz, \bx^*_i))| \leq \Big( \max_i | (p_i - 1/N)| \Big ) \sum_i |F(\bz, \widehat{\bx}_i) -  F(\bz, \bx^*_i)|
 $$
 Since, $||\bp - \mathbf{1}/N||^2_2 \leq \xi/N^2$, thus, $\max_i | (p_i - 1/N)| \leq \sqrt{\xi}/N$. Hence, we get
 \begin{equation}
  \sum_i |  (p_i - 1/N)(F(\bz, \widehat{\bx}_i) -  F(\bz, \bx^*_i))| \leq \sqrt{\xi} (1/N)\sum_i |F(\bz, \widehat{\bx}_i) -  F(\bz, \bx^*_i)| \leq \sqrt{\xi}\tau (\epsilon + k (\epsilon_N + \epsilon_{N_T})) \label{eq:proofinter}
 \end{equation}
 With this, overall we get for any feasible $\bp, \bz$ 
 \begin{equation}
 |\sum_i  p_i F(\bz, \widehat{\bx}_i) - \sum_i  p_i F(\bz, \bx^*_i)| \leq (1 + \sqrt{\xi})\tau (\epsilon + k (\epsilon_N + \epsilon_{N_T})) = \psi     \label{eq:proofimp}
 \end{equation}
Note the following inequalities
\begin{align}
    \sum_i  p^{**}_i F(\bz^{**}, \bx^*_i) \leq & \sum_i \widehat{p}^{**}_i F(\bz^{**}, \bx^*_i)\\
    = & \Big(\sum_i \widehat{p}^{**}_i F(\bz^{**}, \bx^*_i) - \sum_i \widehat{p}^{**}_i F(\bz^{**}, \widehat{\bx}^*_i)\Big) + \sum_i \widehat{p}^{**}_i F(\bz^{**}, \widehat{\bx}^*_i)\\
    \leq & \psi + \sum_i \widehat{p}^{**}_i F(\bz^{**}, \widehat{\bx}^*_i) \label{eq:33}\\
    \leq & \psi + \sum_i \widehat{p}^{**}_i F(\widehat{\bz}^{**}, \widehat{\bx}^*_i) \label{eq:34}\\
    \leq & \psi + \sum_i p^{**}_i F(\widehat{\bz}^{**}, \widehat{\bx}^*_i) \label{eq:35}\\
    = & \psi + \Big(\sum_i p^{**}_i F(\widehat{\bz}^{**}, \widehat{\bx}^*_i) - \sum_i p^{**}_i F(\widehat{\bz}^{**}, \bx^*_i) \Big) + \sum_i p^{**}_i F(\widehat{\bz}^{**}, \bx^*_i) \\
    \leq & 2\psi + \sum_i p^{**}_i F(\widehat{\bz}^{**}, \bx^*_i)
\end{align}
where the first inequality is since $p^{**}_i$ is minimizer, Eq.~\ref{eq:33} is from Eq.~\ref{eq:proofimp},  Eq.~\ref{eq:34} is since $\widehat{\bz}^{**}$ is maximizer, Eq.~\ref{eq:35} is since $\widehat{p}^{**}_i$ is minimizer, and the last inequality is from Eq.~\ref{eq:proofimp}.

Next, by writing $p^{**}_i$ as $(p^{**}_i - 1/N) + 1/N$, we get from the above that
$$
 1/N \sum_i (F(\bz^{**}, \bx^*_i) \leq 2\psi + \sum_i (p^{**}_i - 1/N) (  F(\widehat{\bz}^{**}, \bx^*_i) - F(\bz^{**}, \bx^*_i)) +  1/N \sum_i F(\widehat{\bz}^{**}, \bx^*_i)
$$
By, Eq.~\ref{eq:proofinter} and that $\psi = (1 + \sqrt{\xi})\tau (\epsilon + k (\epsilon_N + \epsilon_{N_T}))$, we get
$$
 1/N \sum_i (F(\bz^{**}, \bx^*_i) \leq (1 + 2\sqrt{\xi})\tau (\epsilon + k (\epsilon_N + \epsilon_{N_T})) +  1/N \sum_i F(\widehat{\bz}^{**}, \bx^*_i)
$$
Also we absorb all constants in $(1 + 2\sqrt{\xi})\tau \epsilon$ to call it just $\epsilon$ and likewise, $(1 + 2\sqrt{\xi})\tau  k (\epsilon_N + \epsilon_{N_T})$ is just $\epsilon_N + \epsilon_{N_T}$.
Further, a standard concentration inequality for $\tau$-Lipschitz $F(\bz, \cdot )$ and bounded diameter $d_X$ of space $X$
can be invoked with the two decisions to get
\begin{align*}
    P\Bigg(\frac{1}{N} \sum_{i \in [N]} F(\bz^{**},\bx^*_i)   \geq \bbE_{\bx \sim P^*}[F(\bz^{**},\bx)] - t \Bigg) \geq  1 - \exp^{\frac{-2 N t^2}{\tau^2 d_X^2}} \\
        P\Bigg( \frac{1}{N} \sum_{i \in [N]} F(\widehat{\bz}^{**},\bx^*_i)  \leq t + \bbE_{\bx \sim P^*}[F(\widehat{\bz}^{**},\bx)] \Bigg) \geq  1 - \exp^{\frac{-2 N t^2}{\tau^2 d_X^2}} 
%    \quad \forall \bz.
\end{align*}
Putting $\exp^{\frac{-2 N t^2}{\tau^2 d_X^2}}$ as $\delta_1$, we get $t$  of the form $C/\sqrt{N}$. Put all these together with a union bound yields, with probability $1 - 2 \delta - 2\delta_1$:
$$
\bbE_{\bx \sim P^*}[F(\bz^{**},\bx)] - C/\sqrt{N} - (1 + 2\sqrt{\xi})\tau \epsilon - \epsilon_N - \epsilon_{N_T} \leq \bbE_{\bx \sim P^*}[F(\widehat{\bz}^{**},\bx)]
$$
 \end{proof}

\section{Proof of Theorem~\ref{piecewiselinearerror}}\label{piecewiseproof}
\begin{theorem*}
For $F(\bz, \widehat{\bx}_i) = \frac{\sum_j n(z_j, \widehat{\bx}_i)}{\sum_j d(z_j, \widehat{\bx}_i)}$ as stated above and approximated as $\frac{\ba_i ^T \bv  + b_i}{\ba'^T_i \bv + b'_i}$, a PWLA approximation with $K$ pieces yields 
$\vert \cG(\bz^*) - \cG(\widehat{\bz}^{**}) \vert \leq O(1/K)$, where $\cG(\bz^*)$ and $\cG(\widehat{\bz}^{**})$ are the optimal objective values with approximation (MISOCP) and without the approximation respectively.
\end{theorem*}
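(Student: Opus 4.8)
The plan is to reduce the claim to two uniform (pointwise‑in‑$\bz$) approximation estimates and then a short argument comparing optimal values. Write $\widehat{\cG}$ for the objective built from the approximated fractions $\widehat{F}_i(\bz)=\frac{\ba_i^T\bv+b_i}{\ba_i'^{T}\bv+b_i'}$ and $\cG$ for the objective built from the exact $F_i(\bz)=\frac{\sum_j n(z_j,\widehat{\bx}_i)}{\sum_j d(z_j,\widehat{\bx}_i)}$. The goal is to show $\sup_{\bz}|\cG(\bz)-\widehat{\cG}(\bz)|=O(1/K)$ and then pass to the optima.

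First I would bound the per-sample error $\sup_{\bz}|F_i(\bz)-\widehat{F}_i(\bz)|$. Since $n(\cdot,\widehat{\bx}_i),d(\cdot,\widehat{\bx}_i)$ are Lipschitz and the PWLA of Appendix~\ref{appd:PWLA} interpolates them on a grid of width $(U_j-L_j)/K$, the interpolation error in each coordinate is $O(1/K)$; summing over the fixed number of coordinates $j$ gives $|\sum_j n-\sum_j\widehat{n}|=O(1/K)$ and likewise for the denominator. Using that $n,d$ are non-negative and bounded on the compact box $\prod_j[L_j,U_j]$ and that the denominators are bounded below by a positive constant (the same hypothesis that makes $t_i^U,t_i^L$ exist), the elementary quotient inequality $\bigl|\tfrac{n}{d}-\tfrac{\widehat n}{\widehat d}\bigr|\le\tfrac{n|\widehat d-d|+d|n-\widehat n|}{d\,\widehat d}$ yields $\sup_{\bz}|F_i(\bz)-\widehat{F}_i(\bz)|=O(1/K)$.

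Next I would lift this to the objective. The mean term is linear in the vector $(F_i)_i$, so its error is at most $\max_i|F_i-\widehat F_i|=O(1/K)$. The delicate part is the standard-deviation term $\sqrt{\rho\sum_i(\bar F-F_i)^2}$: the scalar map $x\mapsto\sqrt{x}$ is not Lipschitz at $0$, so a naive bound fails. Instead I would read this term as the Euclidean norm $\sqrt{\rho}\,\|\Pi(F)\|_2$ of the centered vector $\Pi(F)=(F_i-\bar F)_i$, where centering $\Pi$ is an orthogonal projection onto the mean-zero subspace (hence $1$-Lipschitz and norm-non-increasing). The reverse triangle inequality then gives $\bigl|\sqrt{\rho}\|\Pi(F)\|_2-\sqrt{\rho}\|\Pi(\widehat F)\|_2\bigr|\le\sqrt{\rho}\,\|\Pi(F-\widehat F)\|_2\le\sqrt{\rho}\,\|F-\widehat F\|_2\le\sqrt{\rho}\sqrt{N}\max_i|F_i-\widehat F_i|$, and with $\rho=2\xi/N^2$ this equals $\frac{\sqrt{2\xi}}{\sqrt N}\,O(1/K)=O(1/K)$. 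Combining the two terms gives $\sup_{\bz}|\cG(\bz)-\widehat{\cG}(\bz)|\le\delta_K=O(1/K)$.

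Finally, the $O(1/K)$ bound on the optimal values is then immediate on a common domain, since $|\max_{\bz}\widehat{\cG}(\bz)-\max_{\bz}\cG(\bz)|\le\sup_{\bz}|\widehat{\cG}(\bz)-\cG(\bz)|\le\delta_K$. The only subtlety is that the MISOCP may additionally restrict $\bz$ to the staircase grid, so the approximate and exact problems have different feasible sets; there I would round a continuous optimizer $\widehat{\bz}^{**}$ to its nearest grid point $\bz'$ (at distance $O(1/K)$) and invoke Lipschitzness of $\cG$ in $\bz$—which follows from the quotient being Lipschitz in $\bz$ (bounded, bounded-below denominators) together with the same norm argument for the standard-deviation term—so that $\max_{\text{grid}}\cG\ge\cG(\bz')\ge\cG(\widehat{\bz}^{**})-O(1/K)$, the reverse inequality being trivial. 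I expect the main obstacle to be precisely the square-root/standard-deviation term (and keeping the denominators uniformly bounded away from zero): a scalar Lipschitz estimate on $\sqrt{\cdot}$ breaks down near zero variance, and it is the reformulation as a Euclidean norm of the centered vector that makes every estimate clean and independent of how degenerate the variance becomes.
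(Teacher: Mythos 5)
Your proposal is correct, and on its second half it takes a genuinely different route from the paper. The first half coincides: the paper's second lemma in Appendix~\ref{piecewiseproof} proves exactly your per-sample estimate $\sup_{\bz}|F_i(\bz)-\widehat F_i(\cT(\bz))|\le C/K$ via Lipschitzness of $n,d$ and the same quotient inequality (with the denominator bounded below, as you note). Where you diverge is in passing from function error to optimal-value error. The paper does \emph{not} prove a uniform bound $\sup_{\bz}|\cG(\bz)-\widehat\cG(\bz)|=O(1/K)$; instead it works in the lifted $(q,\bl)$ formulation and argues that if each $F_i$ is perturbed by at most $\epsilon$, one can shift $q^*$ by a single scalar $\delta\in[-\epsilon,\epsilon]$, keep $\bl^*$ fixed, and obtain a feasible point of the approximated problem, yielding $|L^*-\widehat L^*|\le\epsilon$ with a constant that carries no dependence on $\xi$ or $N$ (the variance term is untouched because $\bl^*$ is unchanged). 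You instead bound the objective gap directly, treating the standard-deviation term as $\sqrt{\rho}\,\lVert \Pi(F)\rVert_2$ with $\Pi$ the centering projection and invoking the reverse triangle inequality, which costs an extra factor $\sqrt{\rho N}=\sqrt{2\xi}/\sqrt N$ but is still $O(1/K)$. What each buys: the paper's shift argument gives a tighter constant, but it is delicate --- the equality constraints $q-l_i-\widehat F_i=0$ each demand their own shift $\delta_i=\widehat F_i-F_i$, so a single $\delta$ does not restore all $N$ constraints with $\bl^*$ held fixed; repairing this requires perturbing each $l_i$ by $\delta_i-\bar\delta$ (which preserves $\sum_i l_i=0$) and then controlling the resulting change in $\sqrt{\rho\sum_i l_i^2}$, and your projection/norm estimate is precisely the bound that does this. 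In that sense your route is the more robust one, and it degrades gracefully near zero variance, where a scalar Lipschitz bound on $\sqrt{\cdot}$ fails. One small refinement: when handling the domain mismatch you should round a continuous optimizer \emph{down} to the grid (the floor map $\cT$ the paper uses) rather than to the nearest grid point, since nearest rounding can violate budget-type constraints such as $\sum_j z_j\le m$; with floor rounding the distance is still $O(1/K)$ and your Lipschitz argument for $\cG$ in $\bz$ goes through unchanged.
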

\begin{proof}
The proof essentially follows by combining the results of the two lemmas below. We first prove the following two lemmas.

\begin{lemma*}
If 
$$
\Big\vert F(\bz, \widehat{\bx}_i) - \frac{\ba_i ^T \bv  + b_i}{\ba'^T_i \bv + b'_i} \Big\vert \leq {\epsilon_i}
$$
for some $\epsilon_i$ that is independent of $\widehat{\bz}$, then $\vert L^* - \widehat{L}^* \vert \leq {\max_i \{\epsilon_i\}}$, where $L^*$ and $\widehat{L}^*$ are the optimal objective values with and without the approximation.
\end{lemma*}
\begin{proof}
After the transformation, the decision variable $\bz$ changes from a continuous domain to $\bv$ in a discrete domain. Thus the original function $F_i(\bz) =  F(\bz, \widehat{\bx}_i) : \bz \longrightarrow \bbR$ and the approximate function $\widehat{F}_i(\bv) = \frac{\ba_i ^T \bv  + b_i}{\ba'^T_i \bv + b'_i} : \bv \longrightarrow \bbR$.
For ease of notation, given any $\bz$, let  $\bv = \cT(\bz)$ be the binary transformation of the continuous variables $\bz$,  and $\bz = \widetilde{\cT}(\bv)$ be the backward transformation from the binary variables $\bv$ to $\bz$.
%Here, we slightly abuse notation and write $\widehat{F}_i(\bz)$ which actually is the value of the function $F_i$ at the closest approximation $\bv$ for the continuous variable $\bz$.
From our assumption, we have $|F_i(\bz) - \widehat{F}_i(\cT(\bz))| \leq \epsilon_i$ for any $\bz$

Let us define (\textbf{OPT}) as the original optimization problem with continuous decision variable $\bz$ and (\textbf{Approx-OPT}) as the approximated problem with binary variable $\bv$. 
Let $q^*,\bl^*,\bz^*$ be an optimal solution to (\textbf{OPT}) and $q^{**},\bl^{**},\bv^{**}$ be an optimal solution to (\textbf{Approx-OPT}). 
Denote $\epsilon = \max_i \{\epsilon_i \}$, then we have  $|F_i(\bz) - \widehat{F}_i(\cT(\bz))| \leq \epsilon,\;  \forall i \in [N]$, for any $\bz$, which leads to 
(i) $F_i(\bz) \leq \widehat{F}_i(\cT(\bz)) + \epsilon$  and (ii) $\widehat{F}_i(\cT(\bz)) \leq F_i(\bz) + \epsilon,\; \forall i \in [N]$.
We also have (iii) $F_i(\widetilde{\cT}(\bv)) \leq \widehat{F}_i(\bv) + \epsilon$  and (iv) $\widehat{F}_i(\bv) \leq F_i(\widetilde{\cT}(\bv)) + \epsilon,\; \forall i \in [N]$.
We consider the following two cases:  $L^*\geq \widehat{L}^*$ or $L^*\le \widehat{L}^*$ as follows
\begin{itemize}
    \item If $L^*\geq \widehat{L}^*$, we first see that 
     \[q^* - l_i^* - F_i(\bz^*) = 0; \quad \forall i \in [N]\]
     From  Inequalities (i) and (ii) above, we will have 
     \[
     (q^*-\epsilon) - l_i^* - \widehat{F}_i(\cT(\bz^*)) \leq 0 \leq (q^*+\epsilon) - l_i^* - \widehat{F}_i(\cT(\bz^*)).
     \]
     Thus, there exists $\delta  \in [-\epsilon,\epsilon]$ such that $(q^*+\delta) - l_i^* - \widehat{F}_i(\cT(\bz^*))  = 0$, implying that $q^*+\delta,\bl^*,\cT(\bz^*)$ is feasible to (\textbf{Approx-OPT}), leading to $\widehat{L}^* \geq q^*+\delta - \sqrt{\rho\sum_i {(l^*)}_i^2}$. Thus, 
     \begin{align}
        |L^* - \widehat{L}^*| 
        &\leq \left| L^* - \left(q^*+\delta - \sqrt{\rho\sum_i {(l^*)}_i^2}\right)\right|\nonumber \\
        &= |\delta| \leq \epsilon.
     \end{align}
    \item If $L^*< \widehat{L}^*$, in analogy to the first case,  we also see that 
     \[q^{**} - l_i^{**} - F_i(\bv^{**}) = 0; \quad \forall i \in [N].\]
     From the above inequalities (iii) and (iv), it can also  be seen that there is $\delta \in [-\epsilon,\epsilon]$ such that $ (q^{**}+\delta) - l_i^{**} - \widehat{F}_i(\widehat{\cT}(\bv^{**}))  = 0$, implying that $q^{**}+\delta,\bl^{**},\widehat{\cT}(\bv^{**})$ is feasible to (\textbf{OPT}), leading to ${L}^* \geq q^{**}+\delta - \sqrt{\rho\sum_i {(l^{**})}_i^2}$. We then have the following inequalities 
     \begin{align}
                 |\widehat{L}^*- L^*| 
        &\leq \left| \widehat{L}^* - \left(q^{**}+\delta - \sqrt{\rho\sum_i {(l^{**})}_i^2}\right)\right|\nonumber \\
        &= |\delta| \leq \epsilon.
     \end{align}
\end{itemize}
Putting the two cases together, we have $|L^*-\widehat{L}^*|\leq \epsilon$, as desired.

%, assume $F_i(\bz) \leq \widehat{F}_i(\bz) + \frac{C^*}{K} \forall i \in [N]$. We can substitute this in the first set of inequalities to get
%\begin{align*}
%    (q^* - \frac{C^*}{K}) - l_i^* - \widehat{F}_i(\bz^*) \leq 0; \quad \forall i \in [N]\\
%\end{align*}
%This implies that there exists $\delta$ such that $0 < \delta < \frac{C^*}{K}$ and $(q^* - \delta), \textbf{l}^*, \bz^*$ is a feasible point of \textbf{Approx-OPT}, since : $(q^* - \delta) - l_i^* - \widehat{F}_i(\bz^*) = 0; \quad \forall i \in [N]$. The other extreme can be similarly shown to have an optimal objective $q^* + \delta$. This gives us the bound $|U - \widehat{U}| \leq \delta < \frac{C^*}{K}$. 
\end{proof}
\begin{lemma*}
For $F(\bz, \widehat{\bx}_i) = \frac{\sum_j n(z_j, \widehat{\bx}_i)}{\sum_j d(z_j, \widehat{\bx}_i)}$, a PWLA approximation with $K$ pieces yields 
$$
\Big\vert F(\bz, \widehat{\bx}_i) - \frac{\ba_i ^T \bv  + b_i}{\ba'^T_i \bv + b'_i} \Big\vert \leq \frac{C}{K}
$$
with constant $C$ independent of ${\bz}$.
\end{lemma*} 

\begin{proof}
Let $C^n$, $C^d$ be the 
Lipschitz constant of $n(z_j, \widehat{\bx}_i)$ and    $d(z_j, \widehat{\bx}_i)$, respectively.
We use the  Lipschitz continuity of these functions  to get the following
\begin{align}
    |n(z_j, \widehat{\bx}_i) -  \widehat{n}(z_j, \widehat{\bx}_i)| &\leq \frac{U_j-L_j}{K} C^n \nonumber \\
    |d(z_j, \widehat{\bx}_i) -  \widehat{d}(z_j, \widehat{\bx}_i)| &\leq \frac{U_j-L_j}{K} C^d \nonumber
\end{align}
Then, by the above we  have 
\begin{align}
        \left|\sum_j n(z_j, \widehat{\bx}_i) - \sum_j \widehat{n}(z_j, \widehat{\bx}_i)\right| &\leq \frac{\sum_j U_j- \sum_jL_j}{K} C^n \myeq{} \epsilon^n \nonumber \\
                \left|\sum_j d(z_j, \widehat{\bx}_i) - \sum_j \widehat{d}(z_j, \widehat{\bx}_i)\right| &\leq \frac{\sum_j U_j- \sum_jL_j}{K} C^d \myeq{} \epsilon^d. \nonumber
\end{align}
Now, we write
\begin{align}
    |\widehat{F}_i(\bv) - F_i(\bz)| &=\left|\frac{\sum_j n(z_j, \widehat{\bx}_i)}{\sum_j d(z_j, \widehat{\bx}_i)} -  \frac{\sum_j \widehat{n}(z_j, \widehat{\bx}_i)}{\sum_j \widehat{d}(z_j, \widehat{\bx}_i)} \right| \nonumber\\
    &= \left|\frac{\sum_j n(z_j, \widehat{\bx}_i)\sum_j \widehat{d}(z_j, \widehat{\bx}_i) - \sum_j \widehat{n}(z_j, \widehat{\bx}_i)\sum_j {d}(z_j, \widehat{\bx}_i) }{\sum_j d(z_j, \widehat{\bx}_i)}  \right|,\nonumber
\end{align}
We handle the absolute value by considering the following two cases 
\begin{itemize}
    \item If $\sum_j n(z_j, \widehat{\bx}_i)\sum_j \widehat{d}(z_j, \widehat{\bx}_i) \geq \sum_j \widehat{n}(z_j, \widehat{\bx}_i)\sum_j {d}(z_j, \widehat{\bx}_i)$, then
    \begin{align}
    |\widehat{F}_i(\bv) - F_i(\bz)| &\leq  \left|\frac{\sum_j n(z_j, \widehat{\bx}_i)(\sum_j {d}(z_j, \widehat{\bx}_i) +\epsilon^d) - (\sum_j {n}(z_j, \widehat{\bx}_i)-\epsilon^n)\sum_j {d}(z_j, \widehat{\bx}_i) }{\sum_j d(z_j, \widehat{\bx}_i)}  \right|\nonumber \\
    &=  \left|\frac{\epsilon^d \sum_j n(z_j, \widehat{\bx}_i) +\epsilon^n\sum_j {d}(z_j, \widehat{\bx}_i) }{\sum_j d(z_j, \widehat{\bx}_i)}  \right|\nonumber\\
    &\le \max\{\epsilon^n,\epsilon^d\} \max_{\bz} \left\{\left|\frac{\sum_j n(z_j, \widehat{\bx}_i) +\sum_j {d}(z_j, \widehat{\bx}_i) }{\sum_j d(z_j, \widehat{\bx}_i)}  \right|\right\}\nonumber\\
    &=\frac{\sum_j U_j-\sum_j L_j}{K}\max\{C^n,C^d\} \max_{\bz} \left\{\left|\frac{\sum_j n(z_j, \widehat{\bx}_i) +\sum_j {d}(z_j, \widehat{\bx}_i) }{\sum_j d(z_j, \widehat{\bx}_i)}  \right|\right\}.\nonumber
\end{align}
\item   If $\sum_j n(z_j, \widehat{\bx}_i)\sum_j \widehat{d}(z_j, \widehat{\bx}_i) \leq \sum_j \widehat{n}(z_j, \widehat{\bx}_i)\sum_j {d}(z_j, \widehat{\bx}_i)$, similarly we have
  \begin{align}
    |\widehat{F}_i(\bv) - F_i(\bz)| &\leq  \left|\frac{ (\sum_j {n}(z_j, \widehat{\bx}_i)+\epsilon^n)\sum_j {d}(z_j, \widehat{\bx}_i) - \sum_j n(z_j, \widehat{\bx}_i)(\sum_j {d}(z_j, \widehat{\bx}_i) -\epsilon^d)  }{\sum_j d(z_j, \widehat{\bx}_i)}  \right|\nonumber \\
    &=  \left|\frac{\epsilon^d \sum_j n(z_j, \widehat{\bx}_i) +\epsilon^n\sum_j {d}(z_j, \widehat{\bx}_i) }{\sum_j d(z_j, \widehat{\bx}_i)}  \right|\nonumber\\
    &\le\frac{\sum_j U_j-\sum_j L_j}{K}\max\{C^n,C^d\} \max_{\bz} \left\{\left|\frac{\sum_j n(z_j, \widehat{\bx}_i) +\sum_j {d}(z_j, \widehat{\bx}_i) }{\sum_j d(z_j, \widehat{\bx}_i)}  \right|\right\}.\nonumber
    \end{align}
\end{itemize}
Therefore, if we let 
\[
C = {\left(\sum_j U_j-\sum_j L_j\right)}\max\{C^n,C^d\} \max_{\bz} \left\{\left|\frac{\sum_j n(z_j, \widehat{\bx}_i) +\sum_j {d}(z_j, \widehat{\bx}_i) }{\sum_j d(z_j, \widehat{\bx}_i)}  \right|\right\},
\]
which is independent of $\bz$, then we obtain the desired inequality $ |\widehat{F}_i(\bv) - F_i(\bz)|  \leq C/K$.
\end{proof}

 and 

Taking $\cG(\bz^*)$ as $L^*$, $\cG(\widehat{\bz}^{**})$ as $\widehat{L}^*$, and $\epsilon_i$ as $C/K$ we get the desired result for the theorem.
\end{proof}

\section{Proof of Lemma~\ref{lem:mean}}
\begin{lemma*}
We have
%\begin{align}
    $\left| \widehat{\text{Mean}}(F(\bz,\bx))  -  \widehat{\text{Mean}}^S(F(\bz,\bx)) \right| \leq \tau \epsilon$ and \nonumber 
    % \left| \sqrt{\rho\sum_i \left(\frac{1}{N} \sum_i  F(\bz,\theta_i) - F(\bz,\theta_i)\right)^2} - \sqrt{\rho\sum_m C_m  \left(\frac{1}{N} \sum_m C_m F(\bz,\theta^m) - F(\bz,\theta^m)\right)^2}\right| &\leq  \sqrt{\rho} \psi  \epsilon N \nonumber \\
%\end{align}
%\end{lemma}
%\begin{proof} By Lipschitzness,
%\[
%|F(\bz,\widehat{\bx}_i) - F(\bz,\bx^s)| \leq \tau \epsilon,\;\forall \bz ,   \forall \widehat{\bx}_i \mbox{ in cluster }s
%{\small\begin{align*}
    $ \left \vert \sqrt{\rho \widehat{\text{Var}}(F(\bz,\bx))} - \sqrt{\rho \widehat{\text{Var}}^S(F(\bz, \bx)) } \right \vert
    \leq 
    (\psi + \sqrt{2\tau\epsilon} ) \sqrt{\frac{2\tau \epsilon \xi}{N}}$.
%\end{align*}}
\end{lemma*}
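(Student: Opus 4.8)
The plan is to prove the two bounds separately: the mean bound is a direct Lipschitz/telescoping estimate, while the variance bound needs a norm-based perturbation analysis. For the mean, I would first regroup the full empirical mean by clusters and compare it termwise with the weighted cluster-center mean. Writing $\widehat{\text{Mean}}(F(\bz,\bx)) - \widehat{\text{Mean}}^S(F(\bz,\bx)) = \frac{1}{N}\sum_s \sum_{i \in s}\big(F(\bz,\widehat{\bx}_i) - F(\bz,\bx^s)\big)$, each inner difference is at most $\tau\lVert \widehat{\bx}_i - \bx^s\rVert_2 \le \tau\epsilon$ by $\tau$-Lipschitzness and the clustering radius; since there are $N$ terms in total, the average is bounded by $\tau\epsilon$. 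This settles the first inequality.

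For the variance bound, the key device is to view both unnormalized weighted variances as squared Euclidean norms. I would define $u \in \bbR^N$ by $u_i = F(\bz,\widehat{\bx}_i) - \mu$ with $\mu = \widehat{\text{Mean}}$, and $v \in \bbR^N$ by $v_i = F(\bz,\bx^{s(i)}) - \mu^S$ with $\mu^S = \widehat{\text{Mean}}^S$, where $s(i)$ denotes the cluster of point $i$; then $\lVert u\rVert_2^2 = \widehat{\text{Var}}$ and $\lVert v\rVert_2^2 = \widehat{\text{Var}}^S$ (the weights $C_s$ appear because each cluster center is repeated $C_s$ times across the $N$ coordinates). The error vector $w = u - v$ equals $\mathbf{e} - \bar e\,\mathbf{1}$, where $e_i = F(\bz,\widehat{\bx}_i) - F(\bz,\bx^{s(i)})$ satisfies $|e_i| \le \tau\epsilon$ and $\bar e$ is its average. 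Since centering only decreases the norm, $\lVert w\rVert_2^2 = \sum_i e_i^2 - N\bar e^2 \le N\tau^2\epsilon^2$, i.e. $\lVert w\rVert_2 \le \sqrt{N}\,\tau\epsilon$.

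Next I would bound the difference of the square roots by $\big|\sqrt{\rho\,\widehat{\text{Var}}} - \sqrt{\rho\,\widehat{\text{Var}}^S}\big| \le \sqrt{\rho\,|\widehat{\text{Var}} - \widehat{\text{Var}}^S|}$, using the elementary fact $|\sqrt{a}-\sqrt{b}| \le \sqrt{|a-b|}$. For the inner difference, the identity $\lVert u\rVert^2 - \lVert v\rVert^2 = 2\langle u,w\rangle - \lVert w\rVert^2$ together with Cauchy--Schwarz gives $|\widehat{\text{Var}} - \widehat{\text{Var}}^S| \le 2\lVert u\rVert\,\lVert w\rVert + \lVert w\rVert^2$. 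The remaining ingredient is a bound on $\lVert u\rVert = \sqrt{\widehat{\text{Var}}}$: since for fixed $\bz$ the values $F(\bz,\widehat{\bx}_i)$ lie in an interval of length at most $\psi^2$ (the boundedness assumption), Popoviciu's inequality gives $\widehat{\text{Var}} \le N\psi^4/4$, hence $\lVert u\rVert \le \sqrt{N}\,\psi^2/2$. Substituting yields $|\widehat{\text{Var}} - \widehat{\text{Var}}^S| \le N\tau\epsilon(\psi^2 + \tau\epsilon)$, and plugging in $\rho = 2\xi/N^2$ gives $\sqrt{\rho\,|\widehat{\text{Var}} - \widehat{\text{Var}}^S|} \le \sqrt{\tfrac{2\tau\epsilon\xi}{N}}\,\sqrt{\psi^2 + \tau\epsilon}$. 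Finally subadditivity of the square root, $\sqrt{\psi^2 + \tau\epsilon} \le \psi + \sqrt{\tau\epsilon} \le \psi + \sqrt{2\tau\epsilon}$, delivers the claimed bound.

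The main obstacle, I expect, is the step bounding $\lVert u\rVert = \sqrt{\widehat{\text{Var}}}$: one must extract a uniform range bound on $\{F(\bz,\widehat{\bx}_i)\}_i$ for each fixed $\bz$ from the stated boundedness assumption, and then apply Popoviciu's inequality with the correct constant $1/2$ — it is precisely this factor that lets $\sqrt{\psi^2 + \tau\epsilon}$ be absorbed into $\psi + \sqrt{2\tau\epsilon}$. A secondary care point is keeping the centering estimate for $\lVert w\rVert$ consistent with the choice of identity for $\lVert u\rVert^2 - \lVert v\rVert^2$, so that the cross term produces exactly the $\psi\sqrt{2\tau\epsilon\xi/N}$ contribution rather than a weaker constant. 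I note that the plain reverse-triangle inequality applied to $\sqrt{\rho}\,(\lVert u\rVert - \lVert v\rVert)$ actually yields the tighter $\tau\epsilon\sqrt{2\xi/N}$, so the stated bound is a deliberately looser but cleanly structured estimate.
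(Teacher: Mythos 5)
Your proof is correct and reaches the stated bound, but by a genuinely different route from the paper's. The mean bound is identical (cluster-wise Lipschitz estimate). For the variance term, the paper proceeds by a case split on which of $\widehat{\text{Var}}$, $\widehat{\text{Var}}^S$ is larger, writes $F(\bz,\bx^{s}) = \mu + \alpha_i + \beta_i$ with $\sum_i \alpha_i = 0$ and $|\beta_i| \leq \tau\epsilon$, expands $\sum_i(\alpha_i + \beta_i + \mu - \widehat{\mu})^2$, kills one cross term via $\sum_i \alpha_i = 0$, bounds the other by $2\sum_i \alpha_i\beta_i \leq 2N\psi^2\tau\epsilon$ using $|\alpha_i| \leq F_U - F_L = \psi^2$, and finishes with subadditivity $\sqrt{a+b} \leq \sqrt{a}+\sqrt{b}$. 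You instead recognize $\sqrt{\widehat{\text{Var}}} = \lVert u\rVert_2$ and $\sqrt{\widehat{\text{Var}}^S} = \lVert v\rVert_2$ for centered vectors in $\bbR^N$, bound the perturbation by $\lVert w\rVert \leq \sqrt{N}\,\tau\epsilon$, and combine the identity $\lVert u\rVert^2 - \lVert v\rVert^2 = 2\langle u,w\rangle - \lVert w\rVert^2$ with Cauchy--Schwarz, Popoviciu's inequality ($\lVert u\rVert \leq \sqrt{N}\,\psi^2/2$), and $|\sqrt{a}-\sqrt{b}| \leq \sqrt{|a-b|}$. This buys three things: (i) the argument is symmetric in $u,v$, so the paper's case analysis disappears; (ii) the Popoviciu factor $1/2$ makes your chain land inside the stated constant even with the main text's $\rho = 2\xi/N^2$, whereas the paper's appendix chain silently uses $\rho = \xi/N^2$ (an internal inconsistency your version sidesteps); (iii) your closing observation is correct and noteworthy: since both square-rooted variances are exactly norms, the reverse triangle inequality gives $\sqrt{\rho}\,\big|\lVert u\rVert - \lVert v\rVert\big| \leq \sqrt{\rho}\,\lVert w\rVert \leq \tau\epsilon\sqrt{2\xi/N} = \sqrt{\tau\epsilon}\cdot\sqrt{2\tau\epsilon\xi/N}$, which is uniformly tighter than the lemma's stated bound (and than the paper's proof), and would correspondingly sharpen the constants in Theorem~\ref{the:cluster}; the paper's expansion-based argument obscures this one-line improvement. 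One small care point you already flag: like the paper (which invokes global limits $F_U, F_L$), you read the boundedness assumption as a range bound on $\{F(\bz,\widehat{\bx}_i)\}_i$ for fixed $\bz$, which is the intended reading even though the displayed assumption quantifies $\max$ and $\min$ over $\bz$.
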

\begin{proof}

For the first result, By Lipschitzness,
\[
|F(\bz,\widehat{\bx}_i) - F(\bz,\bx^s)| \leq \tau \epsilon,\;\forall \bz ,   \forall \widehat{\bx}_i \mbox{ in cluster }s 
\]
The result follows by summing over $\widehat{\bx}^i$ and averaging.

{To get error bound for variance term},
let $I_s$ be the set of indices that belong to cluster $s$, thus, $\{I_s\}_{s \in [S]}$ is a partition of $[N]$ and $C_s = |I_s|$.
%Note that a cluster center is the average of all points in a cluster, i.e., $F(\bz,\widehat{\bx}^s) = (1/C_s) \sum_{i \in I_s} F(\bz,\widehat{\bx}_i)$.
Let use define
\begin{align}
  \mu &= \frac{1}{N}\sum_{i \in [N]}  F(\bz,\widehat{\bx}_i) \nonumber\\
  \widehat{\mu} &= \frac{1}{N}\sum_s  C_s F(\bz,\widehat{\bx}^s) 
\end{align}
Let $\alpha_i = \mu - F(\bz, \widehat{\bx}_i)$ (or $F(\bz, \widehat{\bx}_i)= \mu + \alpha_i$), thus, $\sum_i \alpha_i = 0$.  As we know from Lipschitzness assumption that 
\begin{align}
|F(\bz,\widehat{\bx}_i) - F(\bz,\widehat{\bx}^s)| \leq \tau \epsilon,\;\forall \bz, i \in I_s, \label{eq:varproof1}
\end{align}
we always can write 
$F(\bz, \widehat{\bx}^s)$  as $F(\bz, \widehat{\bx}_i) + \beta_i = \mu + \alpha_i + \beta_i$ for any $\widehat{\bx}_i $ in cluster $s$, where $\beta_i$ are constants chosen such that 
\begin{align}
    -\tau\epsilon \leq \beta_i \leq \tau\epsilon,\\
%    \sum_{i \in I_s} \beta_i = 0 \mbox{ from definition of cluster center},\\
    \frac{1}{N}\sum_{i \in [N]} \beta_i = \widehat{\mu} - \mu.
\end{align}
Then, we note that 
$$\sqrt{\sum_{i \in [N]} \left(\frac{1}{N} \sum_{i \in [N]}  F(\bz,\widehat{\bx}_i) - F(\bz,\widehat{\bx}_i)\right)^2} = \sqrt{\sum_{i \in [N]} \alpha_i^2}.$$
Also, we have
\begin{align*}
&\sqrt{\sum_s C_s  \left(\frac{1}{N} \sum_s C_s F(\bz,\widehat{\bx}^s) - F(\bz,\widehat{\bx}^s)\right)^2}= \sqrt{\sum_{i \in [N]} (\widehat{\mu} - \mu - \alpha_i - \beta_i )^2}, 
%(For convenience we are using $C_j$ identical copies of cluster center $\theta^j$, thus we will have total $N$ points denoted by $\theta^m_i$, where $i \in [N]$. This allows us to compare term by term in both the expressions for $\mu, \widehat{\mu}$.) 
% This representation already takes care of $|F(x,\theta_i) - F(\bz,\theta^m)| \leq \tau \epsilon$.
\end{align*}
as  $C_s$ is the number of points in cluster $s$ and  $\widehat{\mu} = \frac{1}{N}\sum_s  C_s F(\bz,\widehat{\bx}^s)$ and $F(\bz,\widehat{\bx}^s) = \mu + \alpha_i + \beta_i$ for all $i \in I_S$. Now, let us assume that
\begin{align*}
\sqrt{\rho\sum_s C_s  \left(\frac{1}{N} \sum_s C_s F(\bz,\widehat{\bx}^s) - F(\bz,\widehat{\bx}^s)\right)^2} \ge
\sqrt{\rho\sum_i \left(\frac{1}{N} \sum_i  F(\bz,\widehat{\bx}_i) - F(\bz,\widehat{\bx}_i)\right)^2},
\end{align*}
noting that the other case
\begin{align*}
\sqrt{\rho\sum_s C_s  \left(\frac{1}{N} \sum_s C_s F(\bz,\widehat{\bx}^s) - F(\bz,\widehat{\bx}^s)\right)^2} < \sqrt{\rho\sum_i \left(\frac{1}{N} \sum_i  F(\bz,\widehat{\bx}_i) - F(\bz,\widehat{\bx}_i)\right)^2}
\end{align*} 
can be handled similarly by  rewriting $F(\bz, \widehat{\bx}_i)$ as $\mu + \alpha_i + \beta_i$ and $F(\bz, \widehat{\bx}^s)$ as  $\mu + \alpha_i$.
We write
\begin{align*}
\sqrt{\sum_s C_s  \left(\frac{1}{N} \sum_s C_s F(\bz,\widehat{\bx}^s) - F(\bz,\widehat{\bx}^s)\right)^2} &= \sqrt{\sum_{i \in [N]} (\widehat{\mu} - \mu - \alpha_i - \beta_i )^2} \\
&= \sqrt{\sum_{i \in [N]} (\alpha_i + (\beta_i + \mu - \widehat{\mu}))^2} \\
&= \sqrt{\sum_{i \in [N]} \alpha_i^2 + 2\sum_{i \in [N]} \alpha_i (\beta_i + \mu - \widehat{\mu}) + \sum_{i \in [N]} (\beta_i + \mu - \widehat{\mu})^2}
\end{align*}
Using $\sum_i \alpha_i = 0$ and $|\mu- \widehat{\mu}| \leq \tau\epsilon, |\beta_i|\leq \tau\epsilon$ we get
\begin{align*}
\sqrt{\sum_s C_s  \left(\frac{1}{N} \sum_s C_s F(\bz,\widehat{\bx}^s) - F(\bz,\widehat{\bx}^s)\right)^2} &= \sqrt{\sum_{i \in [N]} \alpha_i^2 + 2\sum_{i \in [N]} \alpha_i \beta_i + \sum_{i \in [N]} (\beta_i + \mu - \widehat{\mu})^2} \\
 &\leq \sqrt{\sum_{i \in [N]} \alpha_i^2 + 2\sum_{i \in [N]} \alpha_i \beta_i + N(2\tau\epsilon)^2}. 
\end{align*}
%We analyze the term $\sum_{i \in [N]} \alpha_i \beta_i$ separately. We first split it into two terms $\sum_s \sum_{i \in I_s} (\alpha_i - F(\bz, \widehat{\bx}^s) \beta_i + \sum_s \sum_{i \in I_s} F(\bz, \widehat{\bx}^s) \beta_i$. Then note that the second terrm $\sum_s \sum_{i \in I_s} F(\bz, \widehat{\bx}^s) \beta_i = \sum_s F(\bz, \widehat{\bx}^s) \sum_{i \in I_s} \beta_i$. Given $\beta_i \leq \tau \epsilon$, this term becomes $\leq \tau \epsilon  \sum_s C_s F(\bz, \widehat{\bx}^s) = \tau \epsilon N \widehat{\mu}$. Also, $\alpha_i - F(\bz, \widehat{\bx}^s) = - \mu - \beta_i$. Thus, the first term becomes $- \mu \sum_{i \in [N]} \beta_i - \sum_{i \in [N]} \beta_i^2$

Using $F_U, F_L$ are upper and lower limits of $F$ and let  $\psi = \sqrt(F_U - F_L)$, we further expand the inequalities  
\begin{align*}
%&= \sqrt{\sum_{i \in [N]} \alpha_i^2 + 2\sum_{i \in [N]} \alpha_i \beta_i + \sum_{i \in [N]} (\beta_i + \mu - \widehat{\mu})^2} \\
\sqrt{\sum_s C_s  \left(\frac{1}{N} \sum_s C_s F(\bz,\widehat{\bx}^s) - F(\bz,\widehat{\bx}^s)\right)^2}
&\leq \sqrt{\sum_i \alpha_i^2 + 2N (F_U - F_L) \tau\epsilon +   N(2\tau\epsilon)^2} \\
&\stackrel{(a)}{\le} \sqrt{\sum_i \alpha_i^2} + \sqrt{2N (F_U - F_L) \tau\epsilon +   N(2\tau\epsilon)^2} \\
&\leq \sqrt{\sum_i \alpha_i^2} + (\psi + \sqrt{2\tau\epsilon} ) \sqrt{2N\tau\epsilon} \\
\end{align*}
where (a) is due to the fact that $\sqrt{a + b} \leq \sqrt{a} + \sqrt{b}$ for any non-negative numbers $a,b$.
Thus,  after plugging in $\rho = \frac{\xi}{N^2}$ we get
\begin{align*}
    & \Bigg \vert \sqrt{\rho\sum_{i \in [N]} \Big(\frac{1}{N} \sum_{i \in [N]}  F(\bz,\widehat{\bx}_i) - F(\bz,\widehat{\bx}_i)\Big)^2} - \sqrt{\rho\sum_s C_s  \left(\frac{1}{N} \sum_s C_s F(\bz,\widehat{\bx}^s) - F(\bz,\widehat{\bx}^s)\right)^2} \Bigg \vert \\
    &\leq  (\psi + \sqrt{2\tau\epsilon} ) \sqrt{\rho} \sqrt{2 N \tau \epsilon} \\
    &= (\psi + \sqrt{2\tau\epsilon} ) \sqrt{\frac{2\tau \epsilon \xi}{N}},
\end{align*}
as desired. 
\end{proof}

\section{Proof of Theorem~\ref{the:cluster}}
\begin{theorem*} 
Given the assumptions stated above,
and $\widehat{\bz}$ an optimal solution for  $\max_{\bz}\widehat{\cG}(\bz)$ and $\bz^*$ optimal for $\max_{\bz} \cG(\bz)$, the following holds:
\begin{align*}
 |\cG(\widehat{\bz}) - {\cG}(\bz^*)|\leq 2 (\tau \epsilon + \psi \sqrt{\frac{2\tau \epsilon \xi}{N}} + \frac{2\tau \epsilon \xi}{\sqrt{N}}).
\end{align*}
\end{theorem*}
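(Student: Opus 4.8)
The plan is to convert the two per-term estimates of Lemma~\ref{lem:mean} into a single \emph{uniform} bound on $|\cG(\bz)-\widehat{\cG}(\bz)|$ and then apply a standard optimizer-swap argument. First I would record that, by construction, $\cG(\bz)=\widehat{\text{Mean}}(F(\bz,\bx))-\sqrt{\rho\widehat{\text{Var}}(F(\bz,\bx))}$ and $\widehat{\cG}(\bz)=\widehat{\text{Mean}}^S(F(\bz,\bx))-\sqrt{\rho\widehat{\text{Var}}^S(F(\bz,\bx))}$, so the triangle inequality together with the two estimates of Lemma~\ref{lem:mean} gives, uniformly over every feasible $\bz$,
\[
|\cG(\bz)-\widehat{\cG}(\bz)| \le \tau\epsilon + (\psi+\sqrt{2\tau\epsilon})\sqrt{\tfrac{2\tau\epsilon\xi}{N}} =: \Delta .
\]
The crucial feature is that $\Delta$ is the \emph{same} constant for all $\bz$; it does not depend on the decision.

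Next I would exploit the optimality of the two solutions on their respective objectives. Since $\bz^*$ maximizes $\cG$ we immediately have $\cG(\bz^*)\ge\cG(\widehat{\bz})$, which settles one direction. For the reverse direction I would chain the uniform bound through $\widehat{\cG}$: namely $\cG(\widehat{\bz})\ge\widehat{\cG}(\widehat{\bz})-\Delta\ge\widehat{\cG}(\bz^*)-\Delta\ge\cG(\bz^*)-2\Delta$, where the middle inequality uses that $\widehat{\bz}$ maximizes $\widehat{\cG}$ and the two outer ones apply the uniform bound at $\widehat{\bz}$ and at $\bz^*$ respectively. Combining the two directions yields $0\le\cG(\bz^*)-\cG(\widehat{\bz})\le 2\Delta$, and hence $|\cG(\widehat{\bz})-\cG(\bz^*)|\le 2\Delta$.

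Finally I would unpack $2\Delta$ to match the stated form. Expanding gives $2\Delta = 2\tau\epsilon + 2\psi\sqrt{2\tau\epsilon\xi/N} + 2\sqrt{2\tau\epsilon}\sqrt{2\tau\epsilon\xi/N}$, and the last cross term simplifies to $4\tau\epsilon\sqrt{\xi}/\sqrt{N}$; using $\sqrt{\xi}\le\xi$ (valid in the regime of interest, where $\xi\ge 1$) this is absorbed into $2\cdot(2\tau\epsilon\xi/\sqrt{N})$, producing exactly $2(\tau\epsilon+\psi\sqrt{2\tau\epsilon\xi/N}+2\tau\epsilon\xi/\sqrt{N})$. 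I do not expect a genuine obstacle at this level: all the analytic difficulty—controlling how re-weighting by cluster sizes perturbs the unnormalized variance—has already been discharged in Lemma~\ref{lem:mean}. The only points requiring care are the purely algebraic matching of the cross term to the claimed expression and the harmless $\sqrt{\xi}\le\xi$ relaxation; everything else is the textbook two-sided argument that a uniform objective-approximation error of $\Delta$ costs at most $2\Delta$ in the optimality gap.
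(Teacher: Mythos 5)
Your proof is correct and takes essentially the same route as the paper: you combine the two estimates of Lemma~\ref{lem:mean} into a uniform bound $|\cG(\bz)-\widehat{\cG}(\bz)|\le\Delta$ and run the standard two-sided optimizer-swap chain to get a $2\Delta$ gap, while the paper phrases the identical argument as a triangle inequality $|\cG(\widehat{\bz})-\cG(\bz^*)|\le|\cG(\widehat{\bz})-\widehat{\cG}(\widehat{\bz})|+|\widehat{\cG}(\widehat{\bz})-\cG(\bz^*)|$ followed by a two-case analysis using the optimality of $\bz^*$ and $\widehat{\bz}$. Your explicit $\sqrt{\xi}\le\xi$ step in matching the cross term $2\tau\epsilon\sqrt{\xi}/\sqrt{N}$ to the stated $2\tau\epsilon\xi/\sqrt{N}$ is a point the paper glosses over silently, so flagging it is a small improvement rather than a deviation.
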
 
\begin{proof}
Let $\widehat{\bz}$ be an optimal solution to  $\max_{\bz}\widehat{\cG}(\bz)$ and $\bz^*$ be optimal to $\max_{\bz} \cG(\bz)$, we have 
\begin{align}
    |\cG(\widehat{\bz}) - {\cG}(\bz^*)|&\leq |\cG(\widehat{\bz}) - \widehat{\cG}(\widehat{\bz})| + |\widehat{\cG}(\widehat{\bz}) - {\cG}(\bz^*)| \nonumber 
\end{align}
The later can be further evaluated by considering two cases, $\widehat{\cG}(\widehat{\bz}) \geq {\cG}(\bz^*)$ and $\widehat{\cG}(\widehat{\bz}) < {\cG}(\bz^*)$. If $\widehat{\cG}(\widehat{\bz}) \geq {\cG}(\bz^*)$, then $|\widehat{\cG}(\widehat{\bz}) - {\cG}(\bz^*)| = \widehat{\cG}(\widehat{\bz}) - {\cG}(\bz^*) \leq \widehat{\cG}(\widehat{\bz}) - {\cG}(\widehat{\bz})$. The other case can be done similarly to have
\begin{align}
    |\cG(\widehat{\bz}) - {\cG}(\bz^*)|\leq 2 |\cG(\widehat{\bz}) - \widehat{\cG}(\widehat{\bz})\Big| &\leq 2|\widehat{\text{Mean}}^S(F(\widehat{\bz},\bx))  -  \widehat{\text{Mean}}(F(\widehat{\bz},\bx)) \Big |\nonumber \\
    &+ 2 \Big | \sqrt{\rho \widehat{\text{Var}}(F(\widehat{\bz},\bx))} - \sqrt{\rho \widehat{\text{Var}}^S(F(\widehat{\bz}, \bx)) } \Big |
    \nonumber 
\end{align}
Then using the two results in Lemma~\ref{lem:mean}, we get the required result.
\end{proof}

%%%%%%%%%%%%%%%%%%%%%%%%%%%%%%%%%%%%%%%%%%%
%%%%%%%%%%%%%%%%%%%%%%%%%%%%%%%%%%%%%%%%%%%
\section{Proof of Lemma \ref{lem:stratamean}}

\begin{lemma} \label{lem:stratamean} $\forall \bz$ with probability $\geq 1 - 2 \sum_t \exp^{\frac{-2 N_t\epsilon^2}{\tau^2 d_t^2}} $,
$\big \vert \widehat{\text{Mean}}(F(\bz,\bx))  -  \widehat{\text{Mean}}^T(F(\bz,\bx)) \big \vert \leq \epsilon   
$.
In other words,
\begin{align*}
&P\left( \left| \frac{1}{N}\sum_{j \in [M]}lF(\bz,\widehat{\bx}^j) - \frac{1}{N}\sum_{j \in [N]}[F(\bz,\bx^j)] \right| \leq \epsilon \right)  \geq \prod_t \big (1 - 2\exp^{\frac{-2 N_t\epsilon^2}{\tau^2 d_t^2}} \big)
\end{align*}
\end{lemma}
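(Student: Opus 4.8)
The plan is to fix the decision $\bz$ throughout (the claim is pointwise in $\bz$, not uniform) and to exploit the fact that stratified sampling produces, within each stratum, an unbiased sample-mean estimate of that stratum's population mean. First I would group the full empirical mean by strata: letting $\{I_t\}_{t\in[T]}$ partition $[N]$ with $|I_t| = C_t$, and writing $\bar F_t = \frac{1}{C_t}\sum_{i \in I_t} F(\bz,\widehat\bx_i)$ for the true mean of $F$ over stratum $t$, we have $\widehat{\text{Mean}}(F(\bz,\bx)) = \frac{1}{N}\sum_t C_t \bar F_t$. Likewise, setting $\widehat F_t = \frac{1}{N_t}\sum_{j \in [N_t]} F(\bz,\widehat\bx^j)$ for the sample mean over the $N_t$ points drawn from stratum $t$ and recalling $l_t = C_t/N_t$, the stratified estimate is $\widehat{\text{Mean}}^T(F(\bz,\bx)) = \frac{1}{N}\sum_t C_t \widehat F_t$. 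Subtracting,
\begin{align*}
\left| \widehat{\text{Mean}}^T(F(\bz,\bx)) - \widehat{\text{Mean}}(F(\bz,\bx)) \right| \leq \frac{1}{N}\sum_t C_t \left| \widehat F_t - \bar F_t \right|,
\end{align*}
so the problem reduces to controlling each per-stratum deviation $|\widehat F_t - \bar F_t|$.

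The key per-stratum estimate comes from Lipschitzness together with Hoeffding's inequality. Since every pair of points in stratum $t$ satisfies $\|\bx_i - \bx_j\|_2 \leq d_t$ and $F$ is $\tau$-Lipschitz in $\bx$, all values $F(\bz,\widehat\bx^j)$, $j \in [N_t]$, lie in an interval of width at most $\tau d_t$. Because each $\widehat\bx^j$ is drawn uniformly at random from stratum $t$, we have $\bbE[F(\bz,\widehat\bx^j)] = \bar F_t$, and hence $\bbE[\widehat F_t] = \bar F_t$. Applying Hoeffding to these $N_t$ bounded, independent draws yields
\begin{align*}
P\left( \left| \widehat F_t - \bar F_t \right| \geq \epsilon \right) \leq 2 \exp^{\frac{-2 N_t \epsilon^2}{\tau^2 d_t^2}}.
\end{align*}

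Finally I would combine the strata. On the event $\bigcap_t \{ |\widehat F_t - \bar F_t| \leq \epsilon \}$, the weighted sum is bounded by $\frac{1}{N}\sum_t C_t |\widehat F_t - \bar F_t| \leq \frac{\epsilon}{N}\sum_t C_t = \epsilon$, using $\sum_t C_t = N$; this gives the desired $\epsilon$ bound. Since the sampling in distinct strata is independent, the probability of this intersection factorizes, producing the product bound $\prod_t \big(1 - 2\exp^{\frac{-2 N_t \epsilon^2}{\tau^2 d_t^2}}\big)$ stated in the ``in other words'' form; applying $\prod_t(1-a_t) \geq 1 - \sum_t a_t$ then recovers the simpler $1 - 2\sum_t \exp^{\frac{-2 N_t \epsilon^2}{\tau^2 d_t^2}}$ bound. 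I do not anticipate a serious obstacle: the essential point to get right is the reduction to \emph{stratum-wise} sample means, so that the relevant Hoeffding range is the small diameter $\tau d_t$ rather than the global range of $F$, and the observation that the weights $C_t$ sum exactly to $N$ so that no spurious factor appears in the final bound. The only mild care needed is in invoking Hoeffding under the actual sampling scheme (if draws are without replacement the standard bound still applies, as sampling without replacement only concentrates faster) and in keeping $\bz$ fixed, since the statement is pointwise rather than uniform over $\bz$.
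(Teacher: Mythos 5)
Your proof is correct and follows essentially the same route as the paper's: a reduction to stratum-wise sample means versus stratum population means (using that uniform draws make $\widehat F_t$ unbiased for $\bar F_t$), Hoeffding per stratum with the Lipschitz-shrunk range $\tau d_t$, independence across strata to get the product bound, and $\prod_t(1-a_t) \geq 1 - \sum_t a_t$ for the additive form. Your added observation that sampling without replacement only improves concentration is a correct refinement the paper does not spell out, but it does not change the argument.
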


\begin{proof}
% \textbf{Proof 2 : }
We utilize the concentration of Lipchitz functions. In particular, we have $\widehat{\bx}^1,\ldots,\widehat{\bx}^{N_t}$ which are sampled uniformly and independently from strata $t$ and bounded (has diameter $d_t$). Let $U_t$ denote the uniform probability distribution over the $C_t$ points in strata $t$. Let $I_t$ denote a set of indexes that lie in the strata $t$. Then, for our function $F(\bz,\bx)$ with Lipchitz constant $\tau$ we have : 
\begin{align*}
    P\Bigg(\Big \vert \frac{1}{N_t} \sum_{j \in [N_t]} F(\bz,\widehat{\bx}^j) - \bbE_{\bx \sim U_t}[F(\bz,\bx)] \Big \vert \leq \epsilon \Bigg) \geq  1 - 2\exp^{\frac{-2 N_t \epsilon^2}{\tau^2 d_t^2}} 
    \quad \forall t,\bz.
\end{align*}
Observe that by definition
$$
\bbE_{\bx \sim U_t}[F(\bz,\bx)] = \frac{1}{C_t}\sum_{j \in I_t}[F(\bz,\widehat{\bx}_j)].
$$
Hence,
\begin{align*}
 &P\Bigg(\Big \vert \frac{1}{N_t} \sum_{j \in [N_t]}F(\bz,\widehat{\bx}^j) - \frac{1}{C_t}\sum_{j \in I_t}[F(\bz,\widehat{\bx}_j)] \Big \vert \geq \epsilon \Bigg) \leq 2\exp^{\frac{-2 N_t \epsilon^2}{\tau^2 d_t^2}} \\
    &\Rightarrow P\Bigg(\Big \vert \sum_{j \in [N_t]}l_t F(\bz,\widehat{\bx}^j) - \sum_{j \in I_t}[F(\bz,\bx_j)] \Big \vert \geq C_t \epsilon \Bigg) \leq  2\exp^{\frac{-2 N_t \epsilon^2}{\tau^2 d_t^2}}.
\end{align*}

Call the event in the probability above as $E_t$. It is obvious that $E_t$ is independent over all different strata $t$'s due to the independent sampling of points across strata. Hence $\lnot E_t$ are also independent. Next, using product of independent events over all strata we get 
\begin{align*}
   &P\Big( \cap_t \lnot E_t \Big) \geq \prod_t \Big (1 -  2\exp^{\frac{-2 N_t \epsilon^2}{\tau^2 d_t^2}} \Big).
\end{align*}
Note that $\cap_t \lnot E_t$ implies $$\sum_t \Big \vert \sum_{j \in [N_t]}l_t F(\bz,\widehat{\bx}^j) - \sum_{j \in I_t}[F(\bz,\bx_j)] \Big \vert \leq \sum_t C_t \epsilon.$$
Noting that $|a + b |\leq |a| + |b|$ and the fact that $\{I_t\}_{t \in [T]}$ is a partition of $[N]$, the above implies that
$$\Big \vert \sum_t l_t \sum_{j \in [N_t]} F(\bz,\widehat{\bx}^j) - \sum_{j \in [N]}[F(\bz,\bx^j)] \Big \vert \leq \sum_t C_t \epsilon$$
This gives
\begin{align*}
    &P\Bigg(\Big \vert \sum_t l_t \sum_{j \in [N_t]} F(\bz,\widehat{\bx}^j) - \sum_{j \in [N]}[F(\bz,\bx^j)] \Big \vert \leq \sum_t C_t \epsilon \Bigg)  \geq \prod_t \Big (1 - 2\exp^{\frac{-2 N_t \epsilon^2}{\tau^2 d_t^2}} \Big ) \\ 
    & (\text{Then, since $N = \sum_t C_t $})\\
    &\Rightarrow P\Bigg(\Big \vert\frac{1}{N}\sum_t l_t \sum_{j \in [N_t]}F(\bz,\widehat{\bx}^j) - \frac{1}{N}\sum_{j \in [N]}[F(\bz,\bx_j)] \Big \vert \leq \epsilon \Bigg)  \geq \prod_t \Big ( 1 - 2\exp^{\frac{-2N_t \epsilon^2}{\tau^2 d_t^2}} \Big ) \\
    & (\text{Then, since $(1 -a)(1-b) \geq 1 - a - b)$})\\
    & \Rightarrow P\Bigg(\Big \vert\frac{1}{N}\sum_t l_t \sum_{j \in [N_t]}F(\bz,\widehat{\bx}^j) - \frac{1}{N}\sum_{j \in [N]}[F(\bz,\bx_j)] \Big \vert \leq \epsilon \Bigg)  \geq 1 - 2 \sum_t \exp^{\frac{-2N_t \epsilon^2}{\tau^2 d_t^2}},\\
\end{align*}
which is the desired inequality. 
\end{proof}

\section{Proof of Lemma~\ref{lem:stratavar}}
% \textbf{To get error bound for variance term}

\begin{lemma} \label{lem:stratavar}
Define $D = \max_{\bz, \bx } |F(\bz, \bx)|$  for bounded function $F$. Then, $\forall \bz$ with probability  $\geq 1 - 4 \sum_t \exp^{\frac{-2 N_t \epsilon^2}{4 \tau^2 d_t^2 D^2 }}$, 
$
\left \vert \sqrt{\rho \widehat{\text{Var}}(F(\bz,\bx))} - \sqrt{\rho \widehat{\text{Var}}^T(F(\bz, \bx)) } \right \vert
    \leq \frac{2 \sqrt{\xi} \epsilon}{ \sqrt{\widehat{Var}(F(\bz,\bx))}}
$.
\end{lemma}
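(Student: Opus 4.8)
The plan is to reduce the square-root difference to a difference of the (unnormalized) variances themselves, and then control that difference stratum by stratum with Hoeffding's inequality. Writing $A = \rho\,\widehat{\text{Var}}(F(\bz,\bx))$ and $B = \rho\,\widehat{\text{Var}}^T(F(\bz,\bx))$ with $\rho = \xi/N^2$ (as used in the proof of Lemma~\ref{lem:mean}), I would first use the elementary identity $|\sqrt{A}-\sqrt{B}| = |A-B|/(\sqrt{A}+\sqrt{B}) \le |A-B|/\sqrt{A}$, valid since both quantities are nonnegative; this is the step that needs $\widehat{\text{Var}}(F(\bz,\bx))>0$ and explains the $1/\sqrt{\widehat{\text{Var}}}$ factor on the right of the claim. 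Substituting $\sqrt{A} = (\sqrt{\xi}/N)\sqrt{\widehat{\text{Var}}(F(\bz,\bx))}$ turns the target into the purely deterministic-looking requirement $|\widehat{\text{Var}}(F(\bz,\bx)) - \widehat{\text{Var}}^T(F(\bz,\bx))| \le 2N\epsilon$, which is what the concentration step must deliver.

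For the variance difference I would expand $\widehat{\text{Var}}^T$ around the full empirical mean $\mu = \widehat{\text{Mean}}(F(\bz,\bx))$. Writing $\widehat{\mu}^T = \widehat{\text{Mean}}^T(F(\bz,\bx))$ and using $\sum_t l_t N_t = N$ together with $\sum_t l_t\sum_{j}(\mu - F(\bz,\widehat{\bx}^j)) = N(\mu - \widehat{\mu}^T)$, the cross term collapses and yields the clean identity
\begin{align*}
\widehat{\text{Var}}^T(F(\bz,\bx)) = \sum_{t}l_t\sum_{j\in[N_t]}\big(\mu - F(\bz,\widehat{\bx}^j)\big)^2 - N(\widehat{\mu}^T-\mu)^2 .
\end{align*}
Consequently $\widehat{\text{Var}} - \widehat{\text{Var}}^T$ splits into (i) a weighted second-moment deviation $\sum_t C_t\big(\bbE_{U_t}[\phi] - \frac{1}{N_t}\sum_{j}\phi(\widehat{\bx}^j)\big)$ with $\phi(\bx) = (\mu - F(\bz,\bx))^2$ and $U_t$ the uniform law on stratum $t$, plus (ii) the term $N(\widehat{\mu}^T-\mu)^2$.

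The concentration step is then two applications of Hoeffding per stratum. For (i), since $F$ is $\tau$-Lipschitz and $0 \le F \le D$, the map $\phi$ obeys $|\phi(\bx)-\phi(\bx')| = |F(\bx')-F(\bx)|\,|2\mu - F(\bx) - F(\bx')| \le 2D\tau\|\bx-\bx'\|_2$, so $\phi$ ranges over an interval of length at most $2D\tau d_t$ inside stratum $t$; Hoeffding bounds the per-stratum deviation by $\epsilon$ with failure probability $2\exp(-2N_t\epsilon^2/(4\tau^2 d_t^2 D^2))$, and summing the weighted deviations gives $|\text{(i)}| \le \sum_t C_t\epsilon = N\epsilon$. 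For (ii), the mean deviation $|\widehat{\mu}^T - \mu| \le \epsilon$ holds by exactly the argument of Lemma~\ref{lem:stratamean}, whose failure exponent I would dominate by the common exponent $-2N_t\epsilon^2/(4\tau^2 d_t^2 D^2)$ (legitimate once $D$ exceeds a constant), giving $N(\widehat{\mu}^T-\mu)^2 \le N\epsilon^2$. Union-bounding the two two-sided Hoeffding tails per stratum over all $T$ strata — two tails each, hence the factor $4\sum_t$ in the stated probability — yields $|\widehat{\text{Var}} - \widehat{\text{Var}}^T| \le N\epsilon + N\epsilon^2 \le 2N\epsilon$ whenever $\epsilon \le 1$, which matches the deterministic requirement isolated above.

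I expect the main obstacle to be constant bookkeeping in the concentration exponent rather than any conceptual difficulty: one must verify that squaring $(\mu - F)$ only inflates the effective per-stratum range from $\tau d_t$ to $2D\tau d_t$ (this uses nonnegativity and boundedness of $F$ so that $|\mu - F| \le D$), and one must reconcile the mean-deviation exponent of Lemma~\ref{lem:stratamean} with the second-moment exponent so that both collapse into the single exponent quoted in the lemma. The factor $2$ in the final bound is not slack but exactly accounts for the two separate contributions $N\epsilon$ (second moment) and $N\epsilon^2$ (squared mean error).
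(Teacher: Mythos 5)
Your proof is correct and shares the paper's probabilistic skeleton---reduce $\vert\sqrt{A}-\sqrt{B}\vert$ to $\vert A-B\vert/\sqrt{A}$, target $\vert\widehat{Var}-\widehat{Var}^T\vert\leq 2N\epsilon$, apply per-stratum Hoeffding with the range inflated by a $2D$ factor, and union-bound to get the $4\sum_t$ failure term---but your algebraic decomposition genuinely differs from the paper's. The paper expands both variances through the second-moment identity $\widehat{Var} = \sum_j F(\bz,\widehat{\bx}_j)^2 - N\mu^2$ and $\widehat{Var}^T = \sum_t l_t\sum_{j\in[N_t]} (Y^j)^2 - N\widehat{\mu}^2$, then controls (a) the raw second moments via $\vert F_i^2-F_j^2\vert \leq \vert F_i-F_j\vert\,\vert F_i+F_j\vert \leq 2\tau d_t D$ and (b) the squared means via $\vert\widehat{\mu}^2-\mu^2\vert \leq 2D\vert\widehat{\mu}-\mu\vert$, absorbing the $2D$ by rescaling $\epsilon$ so that \emph{both} events carry exactly the common exponent $-2N_t\epsilon^2/(4\tau^2 d_t^2 D^2)$. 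You instead expand $\widehat{Var}^T$ around the full-sample mean $\mu$ (your identity $\widehat{Var}^T = \sum_t l_t\sum_j(\mu-Y^j)^2 - N(\widehat{\mu}^T-\mu)^2$ is correct) and concentrate $\phi=(\mu-F)^2$ directly. Two bookkeeping consequences follow. First, your range bound $\vert\phi(\bx)-\phi(\bx')\vert\leq 2D\tau\Vert\bx-\bx'\Vert_2$ needs $\vert\mu-F\vert\leq D$, which requires $F\geq 0$; the paper's $\vert F_i+F_j\vert\leq 2D$ holds for $D=\max\vert F\vert$ with signed $F$, and the lemma is stated for bounded---not necessarily nonnegative---$F$ (the applications do enforce nonnegativity, so this is benign, and you flag it, but with signed $F$ your exponent denominator would degrade to $16\tau^2 d_t^2 D^2$). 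Second, the paper's rescaling makes the mean-term contribution exactly $N\epsilon$ under exactly the common exponent, avoiding your two side conditions $D\geq 1/2$ (for exponent domination of the Lemma~\ref{lem:stratamean} tail) and $\epsilon\leq 1$ (for $N\epsilon^2\leq N\epsilon$); in exchange, your $N\epsilon^2$ contribution is slightly tighter for small $\epsilon$. Net: both routes are valid and yield the stated bound and probability; the paper's is marginally more self-contained on signs and constants, yours isolates the deterministic identity more cleanly.
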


\begin{proof}
Fix $\bz$. Recall $I_t$ be the set of index that belong to strata $t$, thus, $\{I_t\}_{t \in [T]}$ is a partition of $[N]$ and $C_t = |I_t|$. For sake of simplicity, we use the shorthand for the sample/random variable $Y^j = F(\bz,\widehat{\bx}^j)$. Note that the samples are independent.
%Note that a cluster center is the average of all points in a cluster, i.e., $F(\bz,\widehat{\bx}^s) = (1/C_s) \sum_{i \in I_s} F(\bz,\widehat{\bx}_i)$.
We use the following notations : 
$$\mu = \frac{1}{N}\sum_{i \in [N]}  F(\bz,\widehat{\bx}_i)$$
$$\widehat{\mu} = \frac{1}{N}\sum_t \sum_{j \in [N_t]} l_t Y^j $$
Note that $\sum_t \sum_{j \in [N_t]} l_t = N$.

The unnormalized weighted variance is
\begin{align*}
\widehat{Var}^T & = \sum_t   \sum_{j \in [N_t]} l_t \left(\widehat{\mu} - Y^j\right)^2 \\
& = \sum_t \sum_{j \in [N_t]} l_t \left( \widehat{\mu}^2 - 2 \widehat{\mu} Y^j + (Y^j)^2 \right) \\
& = N  \widehat{\mu}^2 - 2 \widehat{\mu} \sum_t \sum_{j \in [N_t]} l_t Y^j  + \sum_t \sum_{j \in [N_t]} l_t (Y^j)^2 \\
& = N  \widehat{\mu}^2 - 2 N  \widehat{\mu}^2  + \sum_t \sum_{j \in [N_t]} l_t (Y^j)^2 \\
& = \sum_t \sum_{j \in [N_t]} l_t (Y^j)^2 -  N  \widehat{\mu}^2
\end{align*}
We wish to compare this to 
$$
\widehat{Var} = \sum_{j \in [N]} F(\bz,\widehat{\bx}_j)^2 -  N  \mu^2
$$
Towards this end, we have
\begin{align}
|\widehat{Var}^T - \widehat{Var}| \leq | \sum_t \sum_{j \in [N_t]} l_t (Y^j)^2 - \sum_{j \in [N]} F(\bz,\widehat{\bx}_j)^2| + N | \widehat{\mu}^2 - \mu^2| \label{eq:varbound}
\end{align}

We know from Lipschitzness assumption that 
\begin{align}
|F(\bz,\widehat{\bx}_i) - F(\bz,\widehat{\bx}_j)| \leq \tau d_t,\;\forall \bz, i,j \in I_s 
\end{align}
Multiplying both sides by $|F(\bz,\widehat{\bx}_i) + F(\bz,\widehat{\bx}_j)|$ (which is $\leq 2D$), we get
\begin{align}
|F(\bz,\widehat{\bx}_i)^2 - F(\bz,\widehat{\bx}_j)^2| \leq 2 \tau d_t D,\;\forall \bz, i,j \in I_s \label{eq:squaredbound}
\end{align}

Let $U_t$ denote the uniform probability distribution over the $C_t$ points in strata $t$. Observe that by definition
$$
\bbE_{\bx \sim U_t}[(Y^j)^2] = \frac{1}{C_t}\sum_{j \in I_t}[F(\bz,\widehat{\bx}_j)^2]
$$
Then, by Hoeffding inequality and Equation~\ref{eq:squaredbound}
\begin{align*}
    P\Bigg(\Big \vert \frac{1}{N_t} \sum_{j \in [N_t]} (Y^j)^2 - \bbE_{\bx \sim U_t}[F(\bz,\bx)^2] \Big \vert \leq \epsilon \Bigg) \geq 1 - 2\exp^{\frac{-2 N_t \epsilon^2}{4 \tau^2 d_t^2 D^2}} 
    \quad \forall t,\bz
\end{align*}

Then, using the same sequence of steps as for Lemma~\ref{lem:stratamean}, we get 
\begin{align}
    P\Bigg(\Big \vert \frac{1}{N} \sum_t l_t \sum_{j \in [N_t]} (Y^j)^2 - \frac{1}{N} \sum_{j \in [N]} F(\bz,\bx_j)^2 \Big \vert \leq \epsilon \Bigg) \geq 1 - 2 \sum_t \exp^{\frac{-2 N_t \epsilon^2}{4 \tau^2 d_t^2 D^2}} 
    \quad \forall \bz \label{eq:squaredprob}
\end{align}

Also, we know from Lemma~\ref{lem:stratamean} that
\begin{align*}
    P\Bigg(\Big \vert \widehat{\mu} - \mu \Big \vert \leq \epsilon \Bigg) \geq 1 - 2 \sum_t \exp^{\frac{-2 N_t \epsilon^2}{\tau^2 d_t^2 }} 
    \quad \forall \bz
\end{align*}
Multiplying both sides of the term inside the probability by $|\widehat{\mu} + \mu|$ (which is $\leq 2 D$), we get
\begin{align*}
    P\Bigg(\Big \vert \widehat{\mu}^2 - \mu^2 \Big \vert \leq 2 \epsilon D \Bigg) \geq 1 - 2 \sum_t \exp^{\frac{-2 N_t \epsilon^2}{\tau^2 d_t^2 }} 
    \quad \forall \bz
\end{align*}

Replacing $2\epsilon D$ by $\epsilon$ (slight abuse of notation)
\begin{align}
    P\Bigg(\Big \vert \widehat{\mu}^2 - \mu^2 \Big \vert \leq  \epsilon \Bigg) \geq  1 - 2 \sum_t \exp^{\frac{-2 N_t \epsilon^2}{4 \tau^2 d_t^2 D^2 }} 
    \quad \forall \bz \label{eq:averageprob}
\end{align}
Denote the event in Equation~\ref{eq:squaredprob} as $A$ and Equation~\ref{eq:averageprob} as $B$, using union bound we get $P(\lnot A \lor \lnot B) \leq 4 \sum_t \exp^{\frac{-2 N_t \epsilon^2}{4 \tau^2 d_t^2 D^2 }}$, or by taking negation $P(A \land B) \geq 1 - 4 \sum_t \exp^{\frac{-2 N_t \epsilon^2}{4 \tau^2 d_t^2 D^2 }}$. $A \land B$ together with Equation~\ref{eq:varbound} implies that with probability  $1 - 4 \sum_t \exp^{\frac{-2 N_t \epsilon^2}{4 \tau^2 d_t^2 D^2 }}$
\begin{align}
|\widehat{Var}^T - \widehat{Var}| \leq  2N \epsilon \label{eq:varprobbound}
\end{align}
Then, note that
$$
\Big |\sqrt{\rho \widehat{Var}^T} - \sqrt{\rho \widehat{Var}} \Big | = \sqrt{\rho}\frac{|\widehat{Var}^T - \widehat{Var}|}{\sqrt{ \widehat{Var}^T} + \sqrt{\widehat{Var}}} \leq \frac{\sqrt{\xi}}{N} \frac{|\widehat{Var}^T - \widehat{Var}|}{ \sqrt{\widehat{Var}}}
$$
Then, using Equation~\ref{eq:varprobbound}, we get with probability  $1 - 4 \sum_t \exp^{\frac{-2 N_t \epsilon^2}{4 \tau^2 d_t^2 D^2 }}$
$$
\Big |\sqrt{\rho \widehat{Var}^T} - \sqrt{\rho \widehat{Var}} \Big | \leq  \frac{2 \sqrt{\xi} \epsilon}{ \sqrt{\widehat{Var}}}
$$
\end{proof}

%#############################################
\section{Proof of Theorem~\ref{the:strata}}
We prove a more general result stated below
\begin{theorem*}
Given the assumptions stated above,
and $\widehat{\bz}$ an optimal solution for  $\max_{\bz}\widehat{\cG}(\bz)$ and $\bz^*$ optimal for $\max_{\bz} \cG(\bz)$, the following statement holds with probability $\geq 1 - 2 \sum_t \exp^{\frac{-2 N_t\epsilon^2}{\tau^2 d_t^2}} - 4 \sum_t \exp^{\frac{-2 N_t \epsilon^2}{4 \tau^2 d_t^2 D^2 }}$:
\begin{align*}
|\cG(\widehat{\bz}) - {\cG}(\bz^*)|\leq 2  \epsilon \Bigg (1 + 2\sqrt{\frac{ \xi}{\widehat{Var}} } \Bigg ).
\end{align*}
For $N_* = \min_t {N_t}$, then the above can be written as
with probability $\geq 1 - 2 \sum_t \exp^{\frac{-2 \sqrt{N_*}\epsilon^2}{\tau^2 d_t^2}} - 4 \sum_t \exp^{\frac{-2 \sqrt{N_*} \epsilon^2}{4 \tau^2 d_t^2 D^2 }}$:
\begin{align*}
|\cG(\widehat{\bz}) - {\cG}(\bz^*)|\leq \frac{2  \epsilon}{(N_*)^{1/4}} \Bigg (1 + 2\sqrt{\frac{ \xi}{\widehat{Var}(F(\bz,\bx))} } \Bigg ).
\end{align*}
\end{theorem*}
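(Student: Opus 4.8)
The plan is to mirror the proof of Theorem~\ref{the:cluster} almost verbatim, replacing the deterministic clustering bounds of Lemma~\ref{lem:mean} with the high-probability stratified bounds of Lemmas~\ref{lem:stratamean} and~\ref{lem:stratavar}, and then to obtain the $N_*$ form by a simple rescaling of $\epsilon$. First I would write $|\cG(\widehat{\bz}) - \cG(\bz^*)| \leq |\cG(\widehat{\bz}) - \widehat{\cG}(\widehat{\bz})| + |\widehat{\cG}(\widehat{\bz}) - \cG(\bz^*)|$ and handle the second summand by the same two-case optimality argument used for clustering: since $\widehat{\bz}$ maximizes $\widehat{\cG}$ and $\bz^*$ maximizes $\cG$, in either case ($\widehat{\cG}(\widehat{\bz}) \geq \cG(\bz^*)$ or $\widehat{\cG}(\widehat{\bz}) < \cG(\bz^*)$) one gets $|\widehat{\cG}(\widehat{\bz}) - \cG(\bz^*)| \leq |\cG(\widehat{\bz}) - \widehat{\cG}(\widehat{\bz})|$, so that $|\cG(\widehat{\bz}) - \cG(\bz^*)| \leq 2|\cG(\widehat{\bz}) - \widehat{\cG}(\widehat{\bz})|$.

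Next I would expand $\cG$ and $\widehat{\cG}$ into their mean and rooted-variance parts and use the triangle inequality to split $|\cG(\widehat{\bz}) - \widehat{\cG}(\widehat{\bz})|$ into a mean discrepancy $|\widehat{\text{Mean}}(F(\widehat{\bz},\bx)) - \widehat{\text{Mean}}^T(F(\widehat{\bz},\bx))|$ and a standard-deviation discrepancy $|\sqrt{\rho\widehat{\text{Var}}(F(\widehat{\bz},\bx))} - \sqrt{\rho\widehat{\text{Var}}^T(F(\widehat{\bz},\bx))}|$. Lemma~\ref{lem:stratamean} bounds the first by $\epsilon$ and Lemma~\ref{lem:stratavar} bounds the second by $2\sqrt{\xi}\,\epsilon/\sqrt{\widehat{\text{Var}}}$; a union bound over the two failure events gives total failure probability $2\sum_t \exp^{-2N_t\epsilon^2/(\tau^2 d_t^2)} + 4\sum_t \exp^{-2N_t\epsilon^2/(4\tau^2 d_t^2 D^2)}$. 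Substituting into $2|\cG(\widehat{\bz}) - \widehat{\cG}(\widehat{\bz})|$ yields the first (general) bound $2\epsilon(1 + 2\sqrt{\xi/\widehat{\text{Var}}})$.

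To pass to the $N_*$ statement I would reparametrize, applying the general result with $\epsilon$ replaced by $\epsilon/(N_*)^{1/4}$. The objective bound scales directly to $\frac{2\epsilon}{(N_*)^{1/4}}(1 + 2\sqrt{\xi/\widehat{\text{Var}}})$. In each exponent the substitution turns $N_t\epsilon^2$ into $N_t\epsilon^2/\sqrt{N_*}$, and since $N_t \geq N_* = \min_t N_t$ we have $N_t/\sqrt{N_*} \geq \sqrt{N_*}$, so $\exp^{-2N_t\epsilon^2/(\sqrt{N_*}\tau^2 d_t^2)} \leq \exp^{-2\sqrt{N_*}\epsilon^2/(\tau^2 d_t^2)}$ and likewise for the variance term. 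This only increases the stated success probability, giving the claimed $1 - 2\sum_t \exp^{-2\sqrt{N_*}\epsilon^2/(\tau^2 d_t^2)} - 4\sum_t \exp^{-2\sqrt{N_*}\epsilon^2/(4\tau^2 d_t^2 D^2)}$.

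I expect the main obstacle to be the probabilistic bookkeeping rather than any single inequality: the two stratified lemmas are concentration statements for a fixed $\bz$, yet they are invoked at the data-dependent optimizer $\widehat{\bz}$, so I must either read the lemmas as holding uniformly over $\bz$ (as their ``$\forall\bz$'' phrasing suggests) or otherwise justify the uniform-in-$\bz$ step before applying the union bound. Everything after that — the two-case reduction, the triangle-inequality split, and the monotone rescaling of the exponents using $N_t \geq N_*$ — is routine.
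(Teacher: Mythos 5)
Your proposal is correct and follows essentially the same route as the paper's proof: the two-case optimality argument reducing to $2\,|\cG(\widehat{\bz}) - \widehat{\cG}(\widehat{\bz})|$ as in Theorem~\ref{the:cluster}, a union bound over the events of Lemmas~\ref{lem:stratamean} and~\ref{lem:stratavar}, and then the substitution $\epsilon \mapsto \epsilon/(N_*)^{1/4}$ combined with $N_t \geq N_*$ (so $N_t/\sqrt{N_*} \geq \sqrt{N_*}$) to weaken the exponents to the stated $\sqrt{N_*}$ form. The uniformity-in-$\bz$ point you flag is resolved exactly as you suggest: the paper reads the lemmas' ``$\forall \bz$'' phrasing as a uniform guarantee and applies them at the data-dependent $\widehat{\bz}$ without further argument.
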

\begin{proof}
Following style of proof of Theorem~\ref{the:cluster} using union bound with lemmas~\ref{lem:stratamean}
and~\ref{lem:stratavar} we get the first claim above. For the second claim set $\epsilon' = \sqrt{N_*}^{1/4} \epsilon$ and replace $\epsilon$ by $\epsilon'$ (note that $\sqrt{N_t} \geq \sqrt{N_*}$).
 \end{proof}

%#####################################

\section{Data Generation Details} \label{sec:datagen}

\textbf{(Synthetic) SSG: } Following standard terminology and set-up in SSG, for every target $j$, under a type specified by parameters $\bx$,
if the adversary attacks $j$ and the target is protected then the defender obtains reward $r_{\bx,j}^d$ and the adversary obtains $l_{\bx,j}^a$. Conversely, if the
defender is not protecting target $j$, then the defender obtains $l_{\bx,j}^d$ ($r_{\bx,j}^d > l_{\bx,j}^d$) and the adversary gets $r_{\bx,j}^a$ ($r_{\bx,j}^a > l_{\bx,j}^a$). Given
$z_j$ as  the marginal probability of defending target $j$, the expected utility of the defender and attacker of type $\bx$  for an attack on target $j$ is formulated as follows: $
u(z_j, \theta^d_{\bx}) = z_j r_{\bx,j}^d + (1-z_j) l_{\bx,j}^d$ and 
$h(z_j,\theta^a_{\bx}) = \lambda_{\bx}(z_j l_{\bx,j}^a + (1-z_j) r_{\bx,j}^a)$, where parameter
$\lambda_{\bx}\geq  0$ governs rationality. 
$\lambda_{\bx} \rightarrow 0$ means least rational, as the adversary chooses its attack uniformly at
random and $\lambda_{\bx}\rightarrow \infty$ means fully rational (i.e., attacks a target with highest utility). We compactly rewrite $u(z_j, \theta^d_{\bx}) = z_ja^d_{\bx,j} + l^d_{\bx,j}$ and $h(z_j, \theta^d_{\bx}) = -z_jc^a_{\bx,j} + l^a_{\bx,j}$. We add two layers of randomness to our \textit{parameters} $\{a^d_{\bx,j}, l^d_{\bx,j}, c^a_{\bx,j}, l^a_{\bx,j}|\forall j \in [M], \forall \bx\}$ by (1) generating i.i.d. samples from a mean shifted beta-distribution : $\text{low} + (\text{high} - \text{low})\textbf{Beta}(\alpha, \beta)$, and (2) then using these samples as means for the Gaussian distribution : $\cN(.,\sigma^2)$ to i.i.d. generate the final \textit{parameters}. In our experiments we chose : low = 5, high=8, $\alpha=3$, $\beta=3$, $\sigma^2 = 3$. 

\textbf{(Synthetic) Regressor for SSG utilities:} To validate Theorem \ref{thm:thm1}, we first fix a linear regressor $f^* = \langle s^*_{a^d_j}, b^*_{a^d_j}, s^*_{l^d_j}, b^*_{l^d_j}, s^*_{c^a_j}, b^*_{c^a_j}, s^*_{l^a_j}, b^*_{l^a_j} | \forall j \in [M] \rangle$ and sample $\{V^{*,a^d_{\bx}}, V^{*,l^d_{\bx}}, V^{*,c^a_{\bx}}, V^{*,l^a_{\bx}}|\forall \bx \in [N_T]\}$ to generate $\{a^{*,d_{\bx,j}}, l^{*,d_{\bx,j}}, c^{*,a_{\bx,j}}, l^{*,a_{\bx,j}}|\forall j \in [M], \forall \bx \in [N_T]\}$ such that $a^{*,d_{\bx,j}} = s^*_{a^d_j} *  V^{*,a^d_{\bx}} + b^*_{a^d_j}$, $l^{*,d_{\bx,j}} = s^*_{l^d_j} *  V^{*,l^d_{\bx}} + b^*_{l^d_j}$, $c^{*,a_{\bx,j}} = s^*_{c^a_j} *  V^{*,c^a_{\bx}} + b^*_{c^a_j}$, $l^{*,a_{\bx,j}} = s^*_{l^a_j} *  V^{*,l^a_{\bx}} + b^*_{l^a_j}$. Now a linear regressor $\widehat{f}$ is learnt on the given dataset of $N_T$ samples by minimizing the L-2 loss between outputs of $\widehat{f}$ :   $\{\widehat{a}^{d_{\bx,j}}, \widehat{l}^{d_{\bx,j}}, \widehat{c}^{a_{\bx,j}}, \widehat{l}^{a_{\bx,j}}|\forall j \in [M], \forall \bx \in [N_T]\}$ and actual utilities : $\{a^{*,d_{\bx,j}}, l^{*,d_{\bx,j}}, c^{*,a_{\bx,j}}, l^{*,a_{\bx,j}}|\forall j \in [M], \forall \bx \in [N_T]\}$. DRO is performed on both true and learnt utilities to get decisions and then evaluated on held out test set of true utilities. 

%\bose{Not explicitly having a single $\lambda_{\bx}$ across all targets allows more flexibility in the types of attackers, who may have different levels of confidence about different targets and thus may act with varying levels on rationality on its estimated utility.}
%\mtien{We should say something about how to generate these $r_{\bx,j}^d, ...$}

\textbf{(Semi-Synthetic) Maximum Capture Facility Cost Planning Problem (MC-FCP)}: The P\&R \cite{aros2013p} dataset provides fixed utilities for different facility locations which is useful when considering \textbf{MC-FCP}, where the utilities of each facility is a function of the budget allocated to it and our goal is to optimally distribute a limited budget across these facilities. Given the utilities of client $\bx$ : $V_{\bx, j} \forall j \in [M]$, we solve for parameters $\{a_{\bx,j}|j \in [M]\}$ governed by $V_{\bx, j} = a_{\bx, j} + b_{\bx}$, where $b_{\bx}$ is chosen as $\min_j V_{\bx,j}$, so that all $a_{\bx,j}$ are non negative, and utilities increase on allocating more budget. Once we have the parameters, we can write the utility function : $h(z_j, \theta_{\bx,j}) = a_{\bx,j} z_j + b_{\bx}$. Intuitively $b_{\bx}$ is the bias of the client $\bx$ and $a_{\bx,j} \geq 0$ is the rate at which the client's utility can be raised by allocating more budget to the $j^{th}$ facility.

The \textbf{MC-FLP} problem directly uses the utilities of client $\bx$ : $V_{\bx, j} \forall j \in [M]$ from the P\&R \citep{aros2013p} dataset, so \textbf{MC-FLP} is based completely on real data.

\section{Additional Results for Real Data } 
\label{sec:real_data}
\begin{table*}[t]
    \caption{Objective values of the baselines as a \% of the objective obtained by our approach across on \textbf{MC-FCP} across various settings.}
    \label{tab:FCP_baselines}
    %\vskip 0.05in
    \centering
\begin{tabular}{ccccccc}   
\toprule
    \multirow{3}*{$\xi$}& \multicolumn{3}{c}{\textbf{TTGA}} & \multicolumn{3}{c}{\textbf{PGA}}\\
    \cline{2-7}    
    & m=7 & m=10 & m=13 & m=7 & m=10 & m=13\\
    \cline{2-7}
% ERM & 0.1050 & 0.0516 & 0.1050 & 0.0516 & 0.0516 & 0.1131 \\
1E2 & 51.6 & 50.3 & 61.2 & 38.7 & 45.7 & 55.3 \\ 
1E3 & 49.2 & 46.2 & 60.0 & 18.3 & 26.2 & 27.8 \\
1E4 & 48.2 & 45.0 & 30.4 & 15.0 & 18.2 & 19.1 \\  
\bottomrule
\end{tabular}
\end{table*}

\textbf{Baseline Performance on Real Data:}
Gradient based approaches failed to attain decent performance on this dataset on \textbf{MC-FCP} as the choice probabilities $F_i$ are near zero 
% (ie. $\frac{|\{z : F_i(z) \longrightarrow 0, z \in C\}|}{|z \in C|} \longrightarrow 1$) 
almost everywhere in the space of decisions $C$, and since the derivative of the objective w.r.t. the decision, ie. $\frac{\partial F_i}{\partial z} = F_i \times g_i(z)$, the baselines run into a vanishing gradient problem and fail to move from the initial point. This also demonstrates the advantage of an MISOCP solver which can locate good solutions despite the above issue. Nonetheless we use gradient clipping (clipped away from zero) to train our baselines on the dataset and the results are reported in Table \ref{tab:FCP_baselines}. 

\textbf{Training time (in secs) for our approach:} As demonstrated in Table \ref{tab:times}, even in the worst case our algorithm takes only about 15 minutes thus reflecting its scalability. 
 \begin{table*}[t]
    \caption{Training time (seconds) using our MISOCP formulation across various settings. }
    \label{tab:times}
    %\vskip 0.05in
    \centering
    % \begin{scriptsize}
    % \scalebox{0.9}{
    % \begin{tabular}{|c|c|c|c|c|c|c|c|c|c|c|c|c|}
    \begin{tabular}{cccc|ccc}
    % \hline
    \toprule
    \multirow{3}*{$\xi$}& \multicolumn{3}{c}{\textbf{MC-FCP}} & \multicolumn{3}{c}{\textbf{MC-FLP}}\\ 
    % \cmidrule{2-13}
    \cline{2-7}
    & m=7 & m=10 & m=13 & m=10 & m=12 & m=14\\
    \cline{2-7}
    % & H & A & H & A & H & A & H & A & H & A & H & A\\
    % \hline
    % \toprule
    % \cmidrule
         ERM & 62.16 & 182.47 & 128.54 & 11.62 & 28.20 & 11.98\\      
  1E2 & 271.51 & 267.50 & 80.24 & 11.57 & 33.42 & 30.92\\                                 
  1E3 & 80.94 & 297.64 & 900.84 & 11.66 & 33.07  & 33.56\\                                  
 1E4 & 263.53 & 558.82 & 820.54 & 32.84  & 33.76 & 42.28\\ 
%  \hline
\bottomrule
    \end{tabular}
    % }
    % \end{scriptsize}
    % \vskip -0.1in
\end{table*}
\section{On Choosing Optimal Number of Pieces} \label{sec:K}
We proved in Appendix~\ref{piecewiseproof} that piecewise linear approximation guarantees improve with increasing $K$. To choose a suitable K for our experiments, we varied the number of pieces ($K$) from 2 to 20 in steps of 2, and report the relevant statisitcs in Figure \ref{fig:choice_of_K}. We note that across various settings, the results have saturated by $K = 10$, and thus use $K = 10$ for all our experiments. 
\begin{figure}[htb]
    \centering
    \includegraphics[scale=0.5]{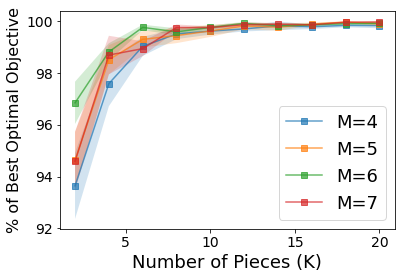}
    \caption{Optimal objective value achieved by varying number of pieces as a \% of the \textbf{Best OPT} - achieved at K=20. Results are shown for varying no. of alternatives $M$ and averaged over 10 generated \textbf{SSG} datasets with underlying parameters are $N=500, m=1, \xi=$1E6.}
    \label{fig:choice_of_K}
\end{figure}

\section{Converting Weighted Objective to MISOCP}
Let $$\widehat{\mu} = \frac{1}{N}\sum_t \sum_{j \in [N_t]} l_t F(\bz,\bx^j). $$ Further, let $Y^j = F(\bz,\widehat{\bx}^j)$.
Consider the stratified sampling objective
\begin{align}
\widehat{\mu} - \sqrt{\rho\sum_t l_t \sum_{j \in [N_t]}\left( \widehat{\mu} - Y^j\right)^2 }  \label{obj:clustering}
\end{align}
It is enough to show the conversion for the above as the clustering is a special case with $N_t = 1$ for all $t$.
As before we substitute $l_{t,j} = \frac{1}{N}\sum_t \sum_{j \in [N_t]} l_t Y^j - Y^j$ (notation $l$ is abused, but the constant $l_t$ subscript is $t$ and the variable subscript is $t,j$) for all $i  \in [N]$ and $q = \frac{1}{N}\sum_t \sum_{j \in [N_t]} l_t Y^j$.
% \bose{Instead of such that, we can reword it as new free variables $l_i$ and $q$ with constraints : } 
Note that $\sum_{j \in N_t} l_{t,j} = \frac{N_t}{N}\sum_t l_t \sum_{j \in [N_t]}  Y^j - \sum_{j \in [N_t]}Y^j$, and since $l_t = \frac{C_t}{N_t}$, we have
$\sum_{j \in N_t} l_{t,j} = \frac{1}{N}\sum_t C_t \sum_{j \in [N_t]}  Y^j - \sum_{j \in [N_t]}Y^j$. Also, since $\sum_t C_t = N$ then 
$$
\sum_t C_t \sum_{j \in [N_t]} l_{t,j} =  \frac{\sum_t C_t}{N}\sum_t C_t \sum_{j \in [N_t]}  Y^j - \sum_t C_t \sum_{j \in [N_t]}Y^j = 0
$$
Also, $Y^j = F(\bz, \widehat{\bx}^j) = q - l_{t,j}$. The objective becomes $q - \sqrt{\rho \sum_t l_t \sum_{j \in [N_t]} l_{t,j}^2}$. Thus, like the original (non-clustered) problem the objective is concave, and the only non-convexity is in the constraint $F(\bz, \widehat{\bx}^j) = q - l_{t,j}$, which can be approximated as earlier.

For MISOCP, we move the part of the objective becomes the linear function $q - s$ with an additional constraint that \begin{align}
    \sqrt{\rho \sum_t l_t \sum_{j \in [N_t]} l_{t,j}^2} \leq s \label{eq:SOCP}
\end{align}
(Recall $\br$ is the vector of all variables).
The above is same as $\vert \vert  A \br \vert \vert_2 \leq \bc^T \br  $ for the constant matrix $A$ (with entries $0$ or $\sqrt{\rho l_t}$ at appropriate entries) and constant vector $\bc$ (with 1 in the $s$ component, rest 0's) that picks the $l_i$'s and $s$ respectively.

\end{document}